\documentclass[lettersize,journal]{IEEEtran}
\usepackage{amsmath,amsfonts}
\usepackage{algorithmic}
\usepackage{algorithm}
\usepackage{array}
\usepackage[caption=false,font=normalsize,labelfont=sf,textfont=sf]{subfig}
\usepackage{textcomp}
\usepackage{stfloats}
\usepackage{url}
\usepackage{verbatim}
\usepackage{graphicx}
\usepackage{cite}

\usepackage{amssymb}
\usepackage{amsthm}
\usepackage[mathscr]{eucal}
\usepackage{mathtools}
\usepackage{dsfont}
\usepackage{makecell}

\theoremstyle{plain}
\newtheorem{theorem}{Theorem}[section]
\newtheorem{proposition}[theorem]{Proposition}
\newtheorem{lemma}[theorem]{Lemma}
\newtheorem{corollary}[theorem]{Corollary}
\theoremstyle{definition}

\newtheorem{assumption}[theorem]{Assumption}
\theoremstyle{remark}
\newtheorem{remark}[theorem]{Remark}

\begin{document}

\title{Graph Structured Combinatorial Semi-Bandit with Nonlinear Reward Associations through Separable Signals}

\author{%
  \IEEEauthorblockN{Christoph Bauschmann and Setareh Maghsudi} \\
 \IEEEauthorblockA{Department of Electrical Engineering and Information Technology \\
                    Ruhr University Bochum\\
                    Bochum, Germany}
                    \thanks{This work has been submitted to the IEEE for possible publication. Copyright may be transferred without notice, after which this version may no longer be accessible.}}

\maketitle

\begin{abstract}
	The identification of optimal structures within vast arrays of interconnected data necessitates significant sampling- and computational effort. Learning and leveraging underlying signal dependencies can improve efficiency and predictive capabilities considerably, but the ubiquity of nonlinear statistical relations amplifies the complexity of such undertakings. In this paper, we develop novel generic and adaptive strategies equipped with routines for graph-based causal reward modeling, analytic reproducing kernel methods, and Taylor approximation of functional processes. We establish theoretical performance guarantees sublinear in time and linear in data volume over time. Our analyses cover robustness to a multitude of uncertainties arising from noise interference, gradual model convergence, and solution space mismatch. The framework's general appeal is substantiated by a minimalistic set of conditions or reliance on prior estimates, while various outlined modifications address specific or extended settings. To demonstrate practical effectiveness, we conduct numerical experiments using both benchmarked synthetic and real-world transportation datasets.
\end{abstract}

\section{Introduction}
\IEEEPARstart{S}{equential} decision-making processes without influence on the state transition constitute the class of \textit{Multi-Armed Bandit} (MAB) problems. Upon selection, each arm provides a reward, which may follow a \textit{stochastic} distribution. The agent's goal is to maximize the reward accumulated over the game horizon by adapting her actions based on the interaction history. The performance of the controlling policy hinges on its ability to balance queries to underexplored distributions in hopes of future payoff against exploitation of sampled answers for immediate gain. Formulated as a minimization objective, the cumulative difference in scoring to an optimal policy is understood to be the agent's regret.

When the agent is allowed to pull multiple (base) arms simultaneously, the viable subsets become the super arms of a \textit{combinatorial} MAB. The posterior reward distributions of intersecting super arms are associated through their joint dependence on the underlying base arm distributions. If the correlation is linear, the reward is simply a weighted sum of the base signals. In general, causal relationships can take any nonlinear form without ad hoc decomposition. However, an achieving agent will endeavor to make predictive links between similar sets of arms. The potential advantages over conventional MAB techniques that keep individual records are especially pronounced for large numbers of base arms as their power set grows exponentially.
 
In a \textit{semi-bandit} setting, the agent's feedback can entail auxiliary data besides the actual reward. In the case of a combinatorial MAB, natural candidates are the samples from the chosen base arms' distributions that determine the ultimate reward. If the reward calculation has a certain \textit{structure}, intermediate values of that computation might become available. In fact, without restrictions to the set of possible functions, a frequentist sub-binomial regret expectation cannot be guaranteed, as each valid base arm combination brings unique coefficients into effect. In a sense, off-policy evaluation shall be supported in the solution space.

One such framework to formalize causal interdependencies are \textit{Structural Equation Models} (SEMs). They work based on a graph representation that aggregates initial node values over weighted edges. In the bandit context, the agent filters the base signal values as initial impulses to the SEM, and the resulting output is accumulated in the overall reward. First introduced in \textup{\cite{ijcai2022p676}} exclusively for linear edge weights, we expand this notion to general activation functions between vertices and by a \textit{kernelized} optimization that does not rely on perfect feedback for exact weight identification.

\textbf{Literature Overview.} Algorithms in the relevant literature need to be distinguished by the assumptions they operate under. There are many ways in which the \textit{uncertainty} of the reward generation process may be limited. Often, they take the form of prior expertise or access to an oracle to approximate the results, as done for the general analysis carried out in \textup{\cite{NIPS2016_aa169b49}}. Moreover, the randomness may be abstracted by concentrating on special signal distributions or feedback that deviates little from its expected value. For instance, \textup{\cite{pmlr-v70-chowdhury17a}} have the differences between mean and observation follow a subgaussian distribution homogeneous across super arms and deploy Gaussian processes and likelihood models. The neural-network-based bandits in \textup{\cite{pmlr-v202-hwang23a}} rely on contextual information that captures all correlations between arms. Additionally, many works do not explicitly account for noise interference. While unstructured approaches may subsume noise independent between arms in the stochastic reward function, structure identification could exploit unrealistic perfect observations.

The conditions set for the \textit{causality} of the process also warrant attention. The control the agent can exert on the system can range from masking signals to soft/hard intervention on their values \textup{\cite{pmlr-v238-yan24a}}, \textup{\cite{Feng_Chen_2023}}. Causal graphs as in \textup{\cite{NEURIPS2021_d010396c}}, \textup{\cite{pmlr-v275-konobeev25a}} additionally introduce independence prerequisites such as faithfulness or effect identifiability. Their underlying logic is commonly framed with linear SEMs \textup{\cite{JMLR:v24:22-0969, NEURIPS2024_2aba6ec2}}. Besides linearity, special properties like monotonicity \textup{\cite{NIPS2016_aa169b49}} or submodularity \textup{\cite{osti_10579433}} may be imposed upon the functions describing the reward associations. In \textup{\cite{pmlr-v206-aouali23a}}, intermediate rewards are sampled along a hierarchy of parametrized distributions rooted in latent priors. The authors develop a Thompson Sampling (TS) approach and investigate specific examples of nonlinearities only for the final accumulating step. The covariance estimations in \textup{\cite{NEURIPS2024_3640a199}} capture arbitrary associations that precede the selection, but they do not encompass nonlinearities after the action filter. A different graphical framework is that of \textit{probabilistically triggered neighboring arms} \textup{\cite{pmlr-v202-kocak23a}}, where high connectivity is beneficial rather than a hindrance to the complexity of the learning process. Extensive foundations and an overview of problem classes are supplied by reference books such as \textup{\cite{LaSze19:book}}.

\textbf{Contribution.} For many of these individual scopes, regret bounds that are sublinear in time steps and at most linear in the number of available base arms (for time-independent terms) are established in the respective references (for a tabular comparison, see Appendix \ref{sec:bac}). However, we contribute the first policies for the stochastic structured combinatorial semi-bandit setting to achieve such bounds when the pairwise associations between signals follow arbitrary analytic functions. To simultaneously discover and infer from this flexible description, the routines we develop negotiate the duplicate exploration of inherent distributions and their causal dynamics with the objective. We refrain from prohibitive restrictions to uncertainty and investigate the effect noise, structural misalignments and functional complexity have on convergence. For greater applicability, we delineate alterations that address related or special scenarios. Our theoretical findings and algorithmic analysis are corroborated by numerical tests on benchmarked synthetic and real-world data.

Combinatorial MABs accommodate a wide variety of tasks, such as drug discovery \textup{\cite{pmlr-v85-durand18a}}, online recommender systems \textup{\cite{pmlr-v206-aouali23a}}, neural network design \textup{\cite{NNsearch}} and analysis (e.g., identifying dominant subnetworks or sets of biases), social influence maximization \textup{\cite{osti_10579433}} and live routing (e.g., preserving resources for vehicles or wireless communication). The SEM can be most readily applied when inert and aggregated forms of data are apparent or extractable (e.g., from phases of separation or abnormal behavior within the network). It's capable of describing spatial graphs and signal processing thereon (e.g., of logistical/biological nature like transportation delays respective infection spreads). A motivating example lies in the distribution of variable grid load. Depending on the current local consumption and their proximity, producers can complement or interfere with each other's efficiency in intricate ways. An agent would have to decide which contractors to enroll so that a sustainable coverage is ensured at the lowest possible cost.

\textbf{Outline.} Over the course of this paper, we first provide the formal specifications in Section \ref{sec:profor}, before developing the suggested decision-making policies in Section \ref{sec:decstr}. In Section \ref{sec:theana}, we then prove regret bounds and account for sources of inaccuracy. Subsequently, we analyze the experimental performance in Section \ref{sec:expana}. Finally, we draw conclusions and discuss future research directions in Section \ref{sec:con}.

\textbf{Notation.} We use bold lower/upper case letters for vectors and matrices, respectively. We apply the common multi-index notation $\mathbf{a}^{\mathbf{b}}=\prod_{i\in[N]} \mathbf{a}[i]^{\mathbf{b}[i]}$, $\mathbf{a}!=\prod_{i\in[N]} \mathbf{a}[i]!$, $\vert\mathbf{a}\rvert=\sum_{i\in[N]}\lvert\mathbf{a}[i]\rvert$, and $\mathbf{a} \text{ mod } k = (\mathbf{a}[i] \text{ mod } k)_{i\in[N]}^{\top}$ for $\mathbf{a},\mathbf{b}\in\mathbb{N}^N$, $k,N \in\mathbb{N}$. We write $\mathbf{a}\odot\mathbf{b} = \text{diag}(\mathbf{a})\mathbf{b}$ for the Hadamard product of vectors.
%
\section{Problem Statement}
\label{sec:profor}
\noindent We represent the set of base arms with $[N]=\{1,..,N\}$. Let $\mathcal{B}_1, \dots, \mathcal{B}_N$ be (a priori unknown) probability distributions on closed intervals of $\mathbb{R}$. At each time step $t$, the vector of independent \textit{instantaneous rewards} $\mathbf{b}_t$ is drawn from the composite distribution $\mathcal{B}$ on $[0,1]^N$. To stay concise, we restrict our description to absolutely continuous distributions on $[0,1]$ that have nonzero lower bounds to their probability density function (p.d.f.). Shifts to other bounded ranges can w.l.o.g. be incorporated into the objective function. Unrestricted distributions and p.d.f.s are discussed at the end of Section \ref{sec:theana} respective in Appendix \ref{sec:ext} (Remark \ref{rem:disbou}).

For the combinatorial bandit problem, an agent is tasked with sequentially selecting subsets of base arms over time. Each such \textit{super arm} is characterized by the agent's \textit{decision vector} $\mathbf{x}_t\in\{0,1\}^N$ over the inclusion of each base arm, with the base arm $i$ being chosen iff $\mathbf{x}_t[i]=1$. The total cardinality of valid super arms is limited by $s\in\mathbb{N}$, thus restricting the set of allowed super arms to \[\mathcal{X}=\{\mathbf{x}\in\{0,1\}^N~|~\|\mathbf{x}\|_0 \leq s\},\] with $\|\cdot\|_0$ counting the nonzero elements of a vector.

We use a (sparse) directed non-self-cyclic graph $\mathcal{G}=([N],\mathcal{E},\mathbf{F})$ on the base arms with edge set $\mathcal{E}$ to model the process by which the reward associated with each super arm is calculated. We collectively represent the functional weights for each edge in a matrix-styled operator $\mathbf{F}$ of dimension $(N,N)$ that maps $\mathbf{b}\in\mathbb{R}^N$ to $\mathbf{F}(\mathbf{b}) = (\sum_{j=1}^{N}\mathbf{F}[i,j](\mathbf{b}[j]))_{i\in[N]}^{\top}$. All component functions $\mathbf{F}[i,j]: \mathbb{R}\rightarrow\mathbb{R}$ are analytic. In this sense, we also understand $\mathbf{I}=\text{diag}(\text{id})$ as the identity mapping, and conventionalize concatenation as the default connection between operators.

More specifically, we assume the causal relationships follow an additively separable SEM \textup{\cite{topology}}. The \textit{exogenous input vector} to this model is
\begin{align}
\label{eq:exo}
	\mathbf{z}_t=\text{diag}(\mathbf{b}_t)\mathbf{x}_t = \mathbf{x}_t\odot\mathbf{b}_t.
\end{align}
Its \textit{endogenous output vector} is subsequently acquired as 
\begin{align}
\label{eq:endo}
	\mathbf{y}_t[i] = \sum\limits_{j\neq i}\mathbf{F}[i,j](\mathbf{y}_t[j])+\mathbf{G}[i,i](\mathbf{z}_t[i]) ,\forall i\in[N].
\end{align}
Here, $\mathbf{G}$ is a diagonal matrix-styled operator of exogenous activation functions, while $\mathbf{F}$ acts endogenously. The solution $\mathbf{y}_t$ to the above set of equations constitutes the vector of \textit{overall rewards} in our bandit setup. Hence, each of its entries is conditioned on the associated instantaneous rewards and the overall reward of its immediate neighbors in $\mathcal{G}$. Since $\text{diag}(\mathbf{F})={\bf 0}$, the vectorized form of (\ref{eq:endo}) reads as 
\[\mathbf{y}_t=\mathbf{F}(\mathbf{y}_t)+\mathbf{G}(\mathbf{z}_t)=\mathbf{F}(\mathbf{y}_t)+\mathbf{G}(\mathbf{x}_t\odot\mathbf{b}_t).\] 
When the operator $\mathbf{I}-\mathbf{F}$ is invertible, this equation can be formulated explicitly as 
\[\mathbf{y}_t = (\mathbf{I}-\mathbf{F})^{-1}(\mathbf{G}(\mathbf{x}_t\odot\mathbf{b}_t)).\] 
Unlike the instantaneous rewards, the overall rewards may take on values in vastly different ranges per dimension.
\begin{figure}[!t]
    \centering
    \includegraphics*[width=0.95\columnwidth]{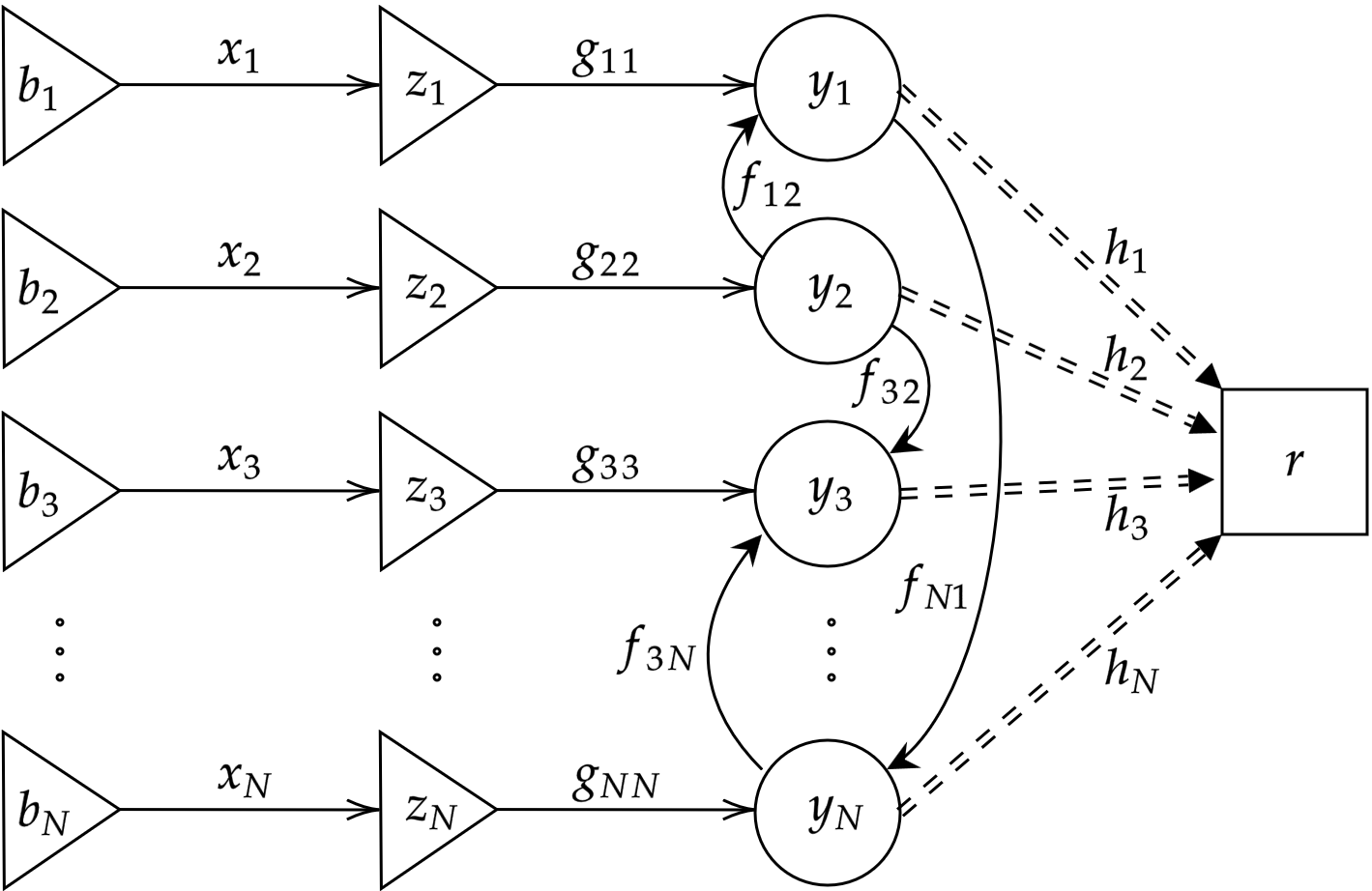}
    \caption{Causal Reward Relations Following a Separable SEM.}
    \label{fig:SEM_relations}
\end{figure}

After selecting a super arm, the agent receives the strong semi-bandit feedback of the SEM's exogenous and endogenous vectors for that time step. Following the bandit framework, the extracted \textit{payoff}
\[\mathbf{r}(\mathbf{x}_t,\mathbf{b}_t) = \mathbf{r}_{\mathbf{x}_t}(\mathbf{b}_t) = \mathbf{h}^{\top}\mathbf{y}_t = \mathbf{h}^{\top}(\mathbf{I}-\mathbf{F})^{-1}(\mathbf{G}(\mathbf{z}_t)),\] 
for an operator $\mathbf{h}$ of $N$ functions, serves as the agent's objective function to be maximized. Equivalently, the agent aims to minimize the \textit{expected regret} \[\mathcal{R}(T) = T\mu(\mathbf{x}^{\ast})-\sum\limits_{t=1}^T\mu(\mathbf{x}_t)\] when comparing the \textit{expected payoff} \[\mu(\mathbf{x})=\mathbb{E}_{\mathbf{b}\sim\mathcal{B}}[\mathbf{r}_{\mathbf{x}}(\mathbf{b})]\] at each time step up to unknown horizon $T\in\mathbb{N}$ with that of a sought after \textit{optimal decision vector} $\mathbf{x}^{\ast}\in\underset{\mathbf{x}\in\mathcal{X}}{\arg\max}~\mu(\mathbf{x})$.

The reward structure depicted in Figure \ref{fig:SEM_relations} displays all potential functional components. For ease of exposition, the main paper's focus lies on $\mathbf{G}=\mathbf{I}$ for the preliminary on-node activations, a linear final accumulation $\mathbf{h}^{\top}=(\text{id},\dots,\text{id})$, which we shall denote simply by $\mathbf{r}_{\mathbf{x}_t}(\mathbf{b}_t)={\bf 1}^{\top}\mathbf{y}_t$, and zero-true $\mathbf{F}$, i.e. $\mathbf{F}({\bf 0})={\bf 0}$. We generalize these settings and describe further extensions to the framework, including partial endogenous feedback, in Appendix \ref{sec:ext}.
%
\section{Decision-Making Strategy}
\label{sec:decstr}
\noindent In this section, we introduce our policies for the problem described before. Since we do not assume knowledge of the causal dynamics, the agent has to trade off two sources of uncertainty against exploiting its insights. The first relates to the graph functions in $\mathbf{F}$, while the second regards the signal distribution $\mathcal{B}$. Our proposed strategy consists of three distinct parts, which we describe below.
\subsection{Online Graph Learning}
\noindent Our algorithms maintain an explicit world model through the approximation of $\mathbf{F}$ in a pre-selected space of solutions. For $i\in[N]$, let $\mathcal{H}^{(i)}$ be a reproducing kernel Hilbert space (RKHS) with continuous positive-definite kernel, consisting of analytic, zero-true functions from a closed interval in $\mathbb{R}^N$ to $\mathbb{R}$ that are additively separable over their $N$ arguments and zero-imaged on $\mathbb{R}\mathbf{I}[i]^{\top}$. We use $\mathcal{H}$ to denote their composition to a space of operators on $\mathbb{R}^N$. For $\theta>0$, we set $\mathcal{H}^{(i)}_{\theta} = \{\mathbf{u}\in\mathcal{H}^{(i)}~|~\|\mathbf{u}\|_{\mathcal{H}^{(i)}}^2\leq\theta\}$. By $\mathcal{H}_{\theta}$, we summarize the space of operators restricted in each component of $\mathcal{H}$.

We collect the feedback up to time $t$ using the matrices $\mathbf{Z}_t=[\mathbf{z}_1,\dots,\mathbf{z}_t]$ and $\mathbf{Y}_t=[\mathbf{y}_1,\dots,\mathbf{y}_t]=\mathbf{F}(\mathbf{Y}_t)+\mathbf{Z}_t$. 
Through kernel optimization, we obtain
\begin{align}\label{eq:joint}
    \hat{\mathbf{F}}_{t} &=\underset{\mathbf{H}[i]\in\mathcal{H}^{(i)}}{\arg\min}~\frac{1}{t}\|\mathbf{H}(\mathbf{Y}_{t})-\mathbf{Y}_{t}+\mathbf{Z}_{t}\|_F^2 \\\nonumber
    &\hspace{5mm}+\lambda\sum\limits_{i\in[N]}\|\mathbf{H}[i]\|_{\mathcal{H}^{(i)}}^2,
\end{align}
where $\|\cdot\|_F$ and $\|\cdot\|_{\mathcal{H}^{(i)}}$, $i\in[N]$, are the Frobenius and Hilbert space norms, respectively. The factor $\lambda>0$ regularizes the sparsity of the resulting solution. By Lagrangian duality, it can be represented through a convex restriction
\begin{align}\label{eq:conv_rest}
    \|\mathbf{H}[i]\|_{\mathcal{H}^{(i)}}^2 \leq \theta~\forall i\in[N].
\end{align}
The optimization is independent between dimensions $i\in[N]$ and can also be performed in the respective subspaces. In practice, sparsity or bijectivity of $\mathbf{I}-\hat{\mathbf{F}}_t$ can be maintained through edge recognition thresholds.

Rather than performing a joint optimization over the entire sample set, we also consider computing a separate $\mathbf{F}_t^{(o)}$ for each $o\in[N]$ based solely on the time steps $\tau$ in which $\mathbf{x}_{\tau}[o]=1$. Algorithms with this variant then assemble the entries of their overall model recursively for $k\in[o-1]$ as
\begin{align}\nonumber
	&\hat{\mathbf{F}}_t[o-k,o]: \mathbb{R}\rightarrow\mathbb{R}, b \mapsto (\mathbf{I}-\hat{\mathbf{F}}^{(o)}_t)^{-1}(b\mathbf{I}[o]^{\top})[o-k]\\\label{eq:disjoint}
	&\hspace{3mm}-\sum\limits_{j=o-k+1}^{o-1}\hat{\mathbf{F}}_t[o-k,j]((\mathbf{I}-\hat{\mathbf{F}}^{(o)}_t)^{-1}(b\mathbf{I}[o]^{\top})[j]).
   \end{align}
Should the indexing in (\ref{eq:disjoint}) not be inherent to the application, edge recognition thresholds should be increased until a DAG is formed. We will elaborate on the chosen method and rationale behind it when presenting the individual algorithms in \ref{sec:Algo}.
\subsection{Internal UCB}
\label{subsec:int}
\noindent To handle the exploration-exploitation trade-off for signal passes through the learned model, the algorithms keep track of empirical moments
\begin{align}\label{eq:moments}
    \hat{\boldsymbol{\phi}}^{(p)}_{t}[i] = \sum_{\tau = 1}^{t} (\mathbf{z}_{\tau}[i]-\mathbf{a}_{t+1}[i])^p \mathds{1}\left\{ \mathbf{x}_{\tau}[i] = 1\right\} / \mathbf{m}_{t}[i]
\end{align}
about some $\mathbf{a}_{t+1}\in[0,1]^N$ up to a preselected degree $[q]\ni p$, of the observed instantaneous rewards.

They keep counters $\mathbf{m}_t=\sum\limits_{\tau=1}^t \mathbf{x}_{\tau}$ for the number of rounds each base arm was part of the selected super arms up to time $t$. After including each base arm at least once, the internal confidence interval has a radius of
\begin{align}
\label{eq:confidence}
    \mathbf{C}_t[i]=\sqrt{(s+1)\ln(t) / \mathbf{m}_t[i]} ,\forall i\in[N].
\end{align}
To leverage this confidence, the expected payoff needs to be expressed in terms of the distribution invariants (\ref{eq:moments}). Our approach is based on a representation of the reward function with its Taylor series expansion around any center point $\mathbf{a}\in[0,1]^N$ for $\mathbf{b}\in[0,1]^N$, $\mathbf{x}\in\mathcal{X}$:
\begin{align}\nonumber
	\mathbf{r}_{\mathbf{x}}(\mathbf{b}) &= \sum\limits_{\substack{\lvert{\boldsymbol{\alpha}}\rvert = 0\\\mathbf{x}\odot{\boldsymbol{\alpha}}={\boldsymbol{\alpha}}}}^{\infty}\frac{D^{{\boldsymbol{\alpha}}}\mathbf{r}_{\bf 1}(\mathbf{a}\odot\mathbf{x})}{{\boldsymbol{\alpha}}!}(\mathbf{b}-\mathbf{a})^{{\boldsymbol{\alpha}}}.
\end{align}
Several of the initial derivatives $D^{{\boldsymbol{\alpha}}}\mathbf{r}_{\mathbf{x}} = \frac{\partial^{\lvert{\boldsymbol{\alpha}}\rvert}}{\partial\mathbf{b}[1]^{{\boldsymbol{\alpha}}[1]}\dots\partial\mathbf{b}[N]^{{\boldsymbol{\alpha}}[N]}}\mathbf{r}_{\mathbf{x}}$ vanish whenever $\mathbf{x}\odot{\boldsymbol{\alpha}}\neq{\boldsymbol{\alpha}}$ for ${\boldsymbol{\alpha}}\in\mathbb{N}^N$. The independence of the underlying distributions also allows us to establish
\begin{align}\nonumber
	\mu(\mathbf{x}) &= \sum\limits_{\substack{\lvert{\boldsymbol{\alpha}}\rvert = 0\\\mathbf{x}\odot{\boldsymbol{\alpha}}={\boldsymbol{\alpha}}}}^{q}\frac{D^{{\boldsymbol{\alpha}}}\mathbf{r}_{\mathbf{x}}(\mathbf{a})}{{\boldsymbol{\alpha}}!}\prod\limits_{i\in[N]}\boldsymbol{\phi}^{({\boldsymbol{\alpha}}[i])}[i] + \mathbb{E}_{\mathcal{B}}[R_{q+1}(\mathbf{b})],
\end{align}
for the series approximation of degree $q$ and moments $\boldsymbol{\phi}^{(p)}[i]=\mathbb{E}_{\mathcal{B}_i}[(b-\mathbf{a}[i])^p]$, $i\in[N]$, $p\in[q]$. The remainder $R_{q+1}$ is bounded in absolute value by $\sum\limits_{\lvert{\boldsymbol{\alpha}}\rvert=q+1}\frac{1}{{\boldsymbol{\alpha}}!}\underset{\lvert{\boldsymbol{\gamma}}\rvert=\lvert{\boldsymbol{\alpha}}\rvert}{\max}\underset{\mathbf{c}\in[0,1]^N}{\max}\lvert D^{{\boldsymbol{\gamma}}}\mathbf{r}_{\bf 1}(\mathbf{c})\rvert$ (\textup{\cite{analysis}}).

To boost internal exploration of the current reward model $\hat{\mathbf{r}}^{(t-1)}=(\mathbf{I}-\hat{\mathbf{F}}_{t-1})^{-1}$, the algorithms optimistically project values within the moments' confidence interval that maximize the estimated payoff for each arm $\mathbf{x}\in\mathcal{X}$:
\begin{align}\nonumber
	&I_{t}^{\max}(\mathbf{x})=\underset{\mathcal{W}^{(p)}_{t-1}\in\mathbb{R}^N: ~\left\lvert{\mathcal{W}}^{(p)}_{t-1}[i]\right\rvert=2^{p\text{ mod }2}\mathbf{C}_{t-1}[i]~\forall p\in[q],i\in[N]}{\max} \\ \nonumber
        &\hspace{0mm}\sum_{\substack{\lvert{\boldsymbol{\alpha}}\rvert\leq q\\\mathbf{x}\odot\boldsymbol{\alpha}=\boldsymbol{\alpha}}} \frac{D^{{\boldsymbol{\alpha}}}\hat{\mathbf{r}}^{(t-1)}_{\mathbf{x}}(\mathbf{a}_t)}{{\boldsymbol{\alpha}}!} \prod_{j:\mathbf{x}[j]=1}\left(\hat{\boldsymbol{\phi}}^{({\boldsymbol{\alpha}}[j])}_{t-1}[j] + \mathcal{W}^{({\boldsymbol{\alpha}}[j])}_{t-1}[j]\right).
\end{align}
The specific growth behavior of the functions from the RKHS can be used to narrow the range of potentially maximizing values within the confidence interval. For example, if the functions have only non-negative derivatives and all moments are non-negative, one can substitute $\mathcal{W}_t^{(p)}=2^{p\text{ mod }2}\mathbf{C}_t ,\forall p\in[q]$. Under these circumstances, only super arms of cardinality $s$ would be candidates for maximizing the expected payoff.

To center the Taylor approximation around relevant values, $\mathbf{a}_{t+1}$ can be chosen as the empirical means
\begin{align}
\label{eq:means}
    \hat{\boldsymbol{\beta}}_{t}[i] = \sum_{\tau = 1}^{t} \mathbf{z}_{\tau}[i] / \mathbf{m}_{t}[i], ~i\in[N].
\end{align}
In combination, one can also track unbiased estimates of the central moments (as is the case in Algorithm \ref{Alg:SSEM-UCB-Norm} in Appendix \ref{sec:spa}). A fixed $\mathbf{a}_{t}$ on the other hand eases the empirical moment update and potentially stabilizes the learning progress. Especially $\mathbf{a}_t={\bf 0}$ simplifies derivative evaluation to a single point across all super arms.

Many of our results would also hold for other multivariate polynomial approximation techniques with bounded coefficients. A practical advantage of working with the Taylor series lies in extrapolating the moments for replacement with their approximations, while the partial derivatives allow for pointwise evaluation. Otherwise, the entire function chain would have to be resolved symbolically to identify the occurrences of each power in the payoff, or be subject to another kernel estimation.
\subsection{External UCB}
\label{subsec:ext}
\noindent The graph learning process must be supplied with sufficient explorative data to gain certainty over the structural dependencies. Outside the model's reward simulation, our policies' selection criteria have a confidence addend of
\begin{align}\nonumber
    E_{t}(\mathbf{x}) &= \underset{i\in[N]}{\max}~\mathbf{x}[i]4\mathbf{C}_{t-1}[i]\sqrt{\ln(t-1)}
\end{align}
for $\mathbf{x}\in\mathcal{X}$, which favors base arms with little observation data. Functional uncertainty is thus captured by a single confidence interval around $\mathbf{r}_{\mathbf{x}}(\mathbf{a})$, and required, as derivatives can be arbitrarily close to $0$. Introducing intervals for every derivative value $D^{\boldsymbol{\alpha}}\mathbf{r}_{\mathbf{x}}(\mathbf{a})$ would risk instability or high computation load for tight estimates. Under the assumption of an underlying SEM, the presence of a base arm in the played super arm will already suffice to build up estimation certainty over the associated outward edge weights. The impact of this reward-independent bound can be fine-tuned by multiplication with a hyperparameter. The logarithmic growth ensures the influence of the external exploration on the decision making process over time, independent of scaling factors that would otherwise be critical to surpass the sample order dictated by kernel optimization accuracy guarantees. For short time horizons, an alternative external UCB of $\sum\limits_{i\in[N]}\mathbf{x}[i]4\mathbf{C}_{t-1}[i]\sqrt{\ln(t-1)}/s$ facilitates broader overall information gain, without displacing any $T$-dependent terms in the theoretical bounds of Section \ref{sec:theana}.
\subsection{SSEM Algorithms}\label{sec:Algo}
\noindent All variants of our algorithm are initialized with a series of $\lceil N/s\rceil$ super arms covering the entire base arm set to obtain first estimates of the moments of $\mathcal{B}[i]$, $\forall i\in[N]$. For remaining time steps $t$ up to hidden time horizon $T\in\mathbb{N}$, they then iterate over the mechanics described in the previous subsections. First, $\hat{\mathbf{F}}_{t-1}$ is learned and used to conjecture a reward function approximation $\hat{\mathbf{r}}^{(t-1)}$. Subsequently, the unbiased moment estimates and confidence bounds are updated with the most recent observations. A super arm $\mathbf{x}\in{\mathcal{X}}$ maximizing a weighted sum of the internal and external UCBs,
\begin{align}
\label{eq:criterion}
I_{t}^{\max}(\mathbf{x})+\Delta_{t}^{I}E_t({\mathbf{x}}),
\end{align}
is selected for play, with ties broken at random, and new observations of resulting endo- and exogenous vectors are stored. To strike a balance between the UCBs, we choose the internal span $\Delta_t^{I}=\underset{\mathbf{x}',\mathbf{x}''\in\mathcal{X}}{\max} I_{t}^{\max}(\mathbf{x}')-I_{t}^{\max}(\mathbf{x}'')$ as the dynamic weighting factor. Alternatively, the internal UCB can be limited to an estimate of the maximal true payoff.

The variants differ as follows: SSEM-UCB (Alg. \ref{Alg:SSEM-UCB}) learns graphs from the samples associated with each base arm node individually, and combines them suitable for a Directed Acyclic Graph (DAG).
%
\begin{algorithm}[tb]
\caption{SSEM-UCB: Separable Structural Equation Model - Upper Confidence Bound}
\label{Alg:SSEM-UCB}
\textbf{Input}: Taylor degree $q$, centers $(\mathbf{a}_{t})_{t\in[T]}$; separable RKHSs $(\mathcal{H}^{(i)})_{i\in[N]}$, norm bound $\theta$.
\begin{algorithmic}[1]
\FOR{$t = 1, \dots, \lceil N/s \rceil$}

    \STATE Select $\mathbf{x}_{t}$ with $\mathbf{x}_{t}[i] = 1 \iff (t-1) s < i \leq t s$.

    \STATE Observe $\mathbf{z}_{t}$ and $\mathbf{y}_{t}$.
    
\ENDFOR
\FOR{$t = \lceil N/s \rceil + 1, \dots, T$}
	\FOR{$o = 1, \dots, N$}

        \STATE Obtain $\hat{\mathbf{F}}^{(o)}_{t-1}$ as stated in (\ref{eq:joint}) and (\ref{eq:conv_rest}) but exclusively from feedback during $\{\tau\in[t-1]~|~\mathbf{x}_{\tau}[o]=1\}$.
        
        \STATE Define $\hat{\mathbf{F}}_{t-1}[o-k,o]$ for $k\in[o-1]$ following (\ref{eq:disjoint}) as entries of upper triangular operator $\hat{\mathbf{F}}_{t-1}$.
	\ENDFOR

    \STATE Set $\hat{\mathbf{r}}^{(t-1)} = \mathbf{1}^{\top}(\mathbf{I} - \hat{\mathbf{F}}_{t-1})^{-1}$.

    \STATE Calculate $\mathbf{C}_{t-1}$ and $(\hat{\boldsymbol{\phi}}^{(p)}_{t-1})_{p \in [q]}$ according to (\ref{eq:confidence}), (\ref{eq:moments}).
    
    \STATE Select decision vector $\mathbf{x}_{t}$ that maximizes (\ref{eq:criterion}).
    
    \STATE Observe $\mathbf{z}_{t}$ and $\mathbf{y}_{t}$.
    
\ENDFOR
\end{algorithmic}
\end{algorithm}
%
%
\begin{algorithm}[tb]
\caption{SSEM-UCB-JO: Separable Structural Equation Model - Upper Confidence Bound - Joint Optimization}
\label{Alg:SSEM-UCB-JO}
Exactly as SSEM-UCB, only collapsing the loop over $o\in[N]$ to a single optimization, and exchanging a timed factor in the external UCB.
\begin{algorithmic}[1]
\setcounter{ALC@line}{5}
\STATE Obtain $\hat{\mathbf{F}}_{t-1}$ as stated in (\ref{eq:joint}) and (\ref{eq:conv_rest}).

\setcounter{ALC@line}{11}
\STATE Select decision vector $\mathbf{x}_t$ that instead of (\ref{eq:criterion}) solves
\begin{align}\nonumber
    \underset{\mathbf{x}\in\mathcal{X}}{\arg \max} ~I_{t}^{\max}(\mathbf{x}) + \Delta_{t}^{I}E_{t}(\mathbf{x})\sqrt{\sqrt{t-1}/\ln(t-1)}.
\end{align}
\end{algorithmic}
\end{algorithm}
%
This construction will support our theoretical regret analysis for DAGs, which will argue over this very prediction quality. Less computation intensive and unpresumptuous in structure, SSEM-UCB-JO (Alg. \ref{Alg:SSEM-UCB-JO}) optimizes for a single joint graph per time step, at the cost of necessitating an ordinal increase in $T$ to its external UCB. Due to its greater generality and computational efficiency, we will employ this version for empirical deployment. In Appendix \ref{sec:spa}, we outline an alteration tailored to normal instantaneous reward distributions in particular, alongside a heuristic for choosing the cut-off degree $q$. Further, we analyze algorithmic complexity and potential failure modes in Appendix \ref{sec:per}.
%
%
\section{Theoretical Analysis}
\label{sec:theana}
In this section, we prove upper bounds on the expected regret of the SSEM algorithms. Each theorem closes by listing the supplementary material section of its proof.
\begin{assumption}
\label{asp:rau}
The true operator is realizible, i.e., $\mathbf{F}\in\mathcal{H}_\theta$, and unique, i.e., $(\mathbf{I}-\mathbf{F})^{-1}$ exists.
\end{assumption}
Realizibility can subsequently be foregone for certain choices of dense RKHSs, such as linearized polynomials (see Rem. \ref{rem:unr}). Uniqueness is for example guaranteed for DAGs by backward substitution, and could more generally be relaxed through pseudo-inversion.
\begin{assumption}
\label{asp:fin}
To avoid asymptotic approximations, for Alg. \ref{Alg:SSEM-UCB} we presume the RKHS of $\mathcal{H}$ to be chosen in such a way that the possible entries of $\hat{\mathbf{F}}_{t-1}$ as in (\ref{eq:disjoint}) are from a finite dimensional function space (e.g., polynomials up to a degree).
\end{assumption}

We use the following definitions in our regret analysis. For any decision vector $\mathbf{x} \in \mathcal{X}$, let $\Delta(\mathbf{x}) = \mu(\mathbf{x}^{\ast}) - \mu(\mathbf{x})$ be the \textit{suboptimality gap}. We define $\Delta_{\max} = \underset{\mathbf{x}\in\mathcal{X}}{\max}~ \Delta(\mathbf{x})$, $\Delta_{\min} = \underset{\mathbf{x}\in\mathcal{X}: \mu(\mathbf{x}) < \mu(\mathbf{x}^{\ast})}{\min}~ \Delta(\mathbf{x})$ and $\Delta^{I}=\underset{\tau\in[T]}{\max}\Delta_{\tau}^{I}$. Moreover, let 
{\small
\begin{align*}
	w_{\max} &= \underset{\tau\in[T]}{\max}~\underset{\lvert{\boldsymbol{\alpha}}\rvert\leq q+1}{\max}~\left\lvert D^{{\boldsymbol{\alpha}}}\hat{\mathbf{r}}^{(\tau-1)}_{\mathbf{x}_{\tau}}(\mathbf{a}_{\tau})\right\rvert , \\
	\Gamma^{(q)}_{\max} &= \underset{\tau\in[T]}{\max}\underset{\lvert{\boldsymbol{\gamma}}\rvert=q+1}{\max}\underset{\mathbf{x}\in\mathcal{X}}{\max}\underset{\mathbf{c}\in[0,1]^N}{\max}\sum\limits_{\lvert{\boldsymbol{\alpha}}\rvert=q+1}\left\lvert D^{{\boldsymbol{\gamma}}}\hat{\mathbf{r}}_{\mathbf{x}}^{(\tau-1)}(\mathbf{c})\right\rvert/{\boldsymbol{\alpha}}! , \\
	\psi_{\min}^{(q,s)} &= {\min~\left\{\frac{\min\{q,s\} s^{-\min\{q,s\}}}{w_{\max}(q+1) s}, \frac{1}{\Delta^{I}}\right\}}\frac{\Delta_{\min} - 4\Gamma^{(q)}_{\max}}{8} .
\end{align*}
}
\begin{theorem}
\label{thm:1}
Let $\mathcal{G}$ be a DAG and $\Gamma^{(q)}_{\max} < \Delta_{\min}/4$ for Taylor approximation degree $q$. Under Asm. \ref{asp:rau} and, for Alg. \ref{Alg:SSEM-UCB}, Asm. \ref{asp:fin} , the expected regret of the algorithms is upper bounded by
{\small
\begin{align}\nonumber
    \left[T_{\mathcal{B},\mathcal{H}_{\theta},N} +  \left[ {\frac{4 (s+1) K_T\ln{T}}{ {\psi_{\min}^{(q,s)}}^{2} }} + 1 + \frac{\pi^{2}}{3} (2 q s + 1) \right] N \right] \Delta_{\max},
\end{align}
}%
with $K_T=\ln(T)$ for Alg. \ref{Alg:SSEM-UCB} (SSEM-UCB) and $K_T=\sqrt{T}$ for Alg. \ref{Alg:SSEM-UCB-JO} (SSEM-UCB-JO). [\ref{sec:ProofThm1}]
\end{theorem}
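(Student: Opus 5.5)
We follow the standard CUCB-style counting argument (as in Chen et al.'s combinatorial UCB analysis), split into a model-learning phase and a UCB phase. First we fix a deterministic burn-in time $T_{\mathcal{B},\mathcal{H}_{\theta},N}$. From that time on, with high probability, every base arm has been sampled often enough that the kernel estimate is uniformly accurate on $[0,1]^N$.

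\textbf{Model accuracy after burn-in.} For Alg.~\ref{Alg:SSEM-UCB}, use Asm.~\ref{asp:fin} and realizability (Asm.~\ref{asp:rau}). Then the per-node regression (\ref{eq:joint}) restricted to rounds with $\mathbf{x}_\tau[o]=1$ is a finite-dimensional least-squares problem. The design is nondegenerate because the p.d.f.s have positive lower bounds. Its error therefore concentrates at rate $\sqrt{\ln t/\mathbf{m}_t[o]}$. The recursion (\ref{eq:disjoint}) for a DAG (backward substitution) propagates this to the entries of $\hat{\mathbf{F}}_{t-1}$, and hence to $\hat{\mathbf{r}}^{(t-1)}$ and its derivatives up to order $q$, with an error proportional to $\max_{i:\mathbf{x}[i]=1}\mathbf{C}_{t-1}[i]\sqrt{\ln(t-1)}$. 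The external bonus $E_t$ is calibrated to dominate exactly this quantity.

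For Alg.~\ref{Alg:SSEM-UCB-JO}, the infinite-dimensional RKHS regression only gives the slower generic rate $t^{-1/4}$ (a Rademacher bound for $\mathcal{H}_\theta$). This is why the bonus carries the extra factor $\sqrt{\sqrt{t-1}/\ln(t-1)}$. It is also the source of $K_T=\sqrt{T}$ instead of $\ln T$.

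\textbf{Moment concentration and the good event.} Apply Hoeffding to each empirical moment (\ref{eq:moments}); the variables are bounded in $[-1,1]$. Take a union bound over the $q$ moments, the $N$ arms, and the $\le t^{s}$ choices of counter values in a super arm. Define $\mathcal{N}_t$ as the event that all $|\hat{\boldsymbol{\phi}}^{(p)}_{t-1}[i]-\boldsymbol{\phi}^{(p)}[i]|\le 2^{p \bmod 2}\mathbf{C}_{t-1}[i]$. Then $\Pr(\neg\mathcal{N}_t)\le 2qsN t^{-2}$, which sums to the $\frac{\pi^2}{3}(2qs+1)N$ term; the ``$+1$'' accounts for the model-failure event, treated in the same way. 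The initial $\lceil N/s\rceil$ rounds and burn-in are charged to $T_{\mathcal{B},\mathcal{H}_\theta,N}+N$.

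\textbf{Counting suboptimal pulls.} On $\mathcal{N}_t$ after burn-in, optimism holds: $I_t^{\max}(\mathbf{x}^\ast)+\Delta_t^IE_t(\mathbf{x}^\ast)\ge\mu(\mathbf{x}^\ast)-\Gamma^{(q)}_{\max}$. This uses the Taylor representation of $\mu$ and the remainder bound, with $\Gamma^{(q)}_{\max}$ absorbing the truncation. For the chosen $\mathbf{x}_t$, the same criterion exceeds $\mu(\mathbf{x}_t)$ by at most
\[
\Gamma^{(q)}_{\max}+ w_{\max}(q+1)s\,s^{\min\{q,s\}}/\min\{q,s\}\cdot \max_{i:\mathbf{x}_t[i]=1}\mathbf{C}_{t-1}[i] + 2\Delta^I E_t(\mathbf{x}_t).
\]
Here the multilinear products of $(\hat{\boldsymbol\phi}+\mathcal{W})$ are expanded and the differences telescoped coordinatewise; the combinatorial factor counts multi-indices with $|\boldsymbol\alpha|\le q$ supported on $\le s$ arms. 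A suboptimal pick therefore forces $\Delta_{\min}-4\Gamma^{(q)}_{\max}\le 8\,\psi^{-1}\cdot(\ldots)\max_i \mathbf{C}_{t-1}[i]\sqrt{K_t}$, i.e.\ $\mathbf{m}_{t-1}[i]\le 4(s+1)K_T\ln T/{\psi^{(q,s)}_{\min}}^2$ for some played $i$. The usual counter argument then applies: charge each bad round to the minimal-count arm and increment it, so each arm absorbs at most that many bad rounds. Multiplying by $N$ and by $\Delta_{\max}$ gives the bound.

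\textbf{Main obstacle.} The hardest step is the first: turning RKHS estimation error into a uniform bound on $D^{\boldsymbol\alpha}\hat{\mathbf{r}}$ through the inverse $(\mathbf{I}-\hat{\mathbf{F}})^{-1}$, with a rate expressed through per-arm counts. We would first try to exploit the DAG structure, so that the inverse is a finite composition. We would then use analyticity together with finite dimension (Asm.~\ref{asp:fin}) to control derivatives by function values via norm equivalence.
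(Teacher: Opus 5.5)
Your skeleton matches the paper's proof: counters charged to the least-sampled played arm, Hoeffding for the moment estimates, and a width argument showing that a suboptimal pick cannot occur once all played counts exceed $l$. The model-accuracy step, however, has a genuine gap.

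You assert that after a deterministic burn-in every base arm has been sampled often enough, but nothing in your argument forces this. A base arm that appears only in poor super arms can be starved. Your fallback is that the error of $\hat{\mathbf{r}}^{(t-1)}_{\mathbf{x}}$ scales like $\max_{i\in\mathcal{I}(\mathbf{x})}\mathbf{C}_{t-1}[i]\sqrt{\ln(t-1)}$, so that $E_t(\mathbf{x})$ dominates it. That does not hold for this estimator, for two reasons:
\begin{itemize}
\item In SSEM-UCB, the entry $\hat{\mathbf{F}}_t[o-k,o]$ is assembled via (\ref{eq:disjoint}) from $\hat{\mathbf{F}}^{(o)}_t$ \emph{and} from the entries $\hat{\mathbf{F}}_t[o-k,j]$, $o-k<j<o$. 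Those entries are fitted only on rounds in which $j$ was played.
\item $\hat{\mathbf{r}}_{\mathbf{x}}$ evaluates $\hat{\mathbf{F}}_t[i,j]$ at $\hat{\mathbf{y}}_t[j]$ for unselected nodes $j$ that are reachable from selected ones.
\end{itemize}
So the error for $\mathbf{x}^\ast$ can be driven by rarely played arms outside $\mathcal{I}(\mathbf{x}^\ast)$ while $E_t(\mathbf{x}^\ast)$ is tiny. Your optimism inequality $I^{\max}_t(\mathbf{x}^\ast)+\Delta^I_tE_t(\mathbf{x}^\ast)\ge\mu(\mathbf{x}^\ast)-\Gamma^{(q)}_{\max}$ is therefore unsupported.

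\textbf{The missing ingredient} is a forced-exploration inequality, which is precisely why the external index is weighted by the internal span $\Delta^I_t$. Since $\mathbf{x}_{t+1}$ maximizes $I^{\max}_{t+1}+E_{t+1}$, compare it with the singleton $\mathbf{I}[\arg\min_{i}\mathbf{m}_t[i]]^\top$. The internal indices of the two differ by at most $\Delta^I_{t+1}$, and on the counted event $\min_{i\in\mathcal{I}(\mathbf{x}_{t+1})}\mathbf{m}_t[i]\ge l$. Together these give $\min_{i\in[N]}\mathbf{m}_t[i]\ge 4(s+1)\ln^2(t)/(1/2+\psi^{(q,s)}_{\min})^2$; with the inflated bonus of SSEM-UCB-JO the bound is of order $\sqrt{t}\ln t$.

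Only this growth of \emph{all} counts lets the Rademacher bound, moved between sampling distributions by Lemma~\ref{lem:dish} and backward substitution, drive the model error to zero. Then $T_{\mathcal{B},\mathcal{H}_{\theta},N}$ can be defined by $4G_t<\Delta_{\min}$, and model failure becomes an event of probability $t^{-2(s+1)}$. That event is the ``$+1$'' in $2qs+1$; the outer ``$+1$'' is just the ceiling in $l$. The bonus is then not used for optimism at all: the model error is absorbed by the slack $\Delta_{\min}/4$.

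Two further corrections:
\begin{itemize}
\item The $\sqrt{T}$ for SSEM-UCB-JO comes from the factor $t/\mathbf{m}_t[o]$ paid when transferring the joint-sample guarantee to the rounds in which $o$ was played. It does not come from a generic $t^{-1/4}$ rate.
\item Your \emph{main obstacle} is avoidable and slightly misaimed. $\Gamma^{(q)}_{\max}$ bounds the Taylor remainder of $\hat{\mathbf{r}}$, not of $\mathbf{r}$, so you should expand $\mathbb{E}[\hat{\mathbf{r}}_{\mathbf{x}^\ast}]$ rather than $\mu$. Then you only need $\lvert\mathbb{E}[\mathbf{r}_{\mathbf{x}^\ast}-\hat{\mathbf{r}}_{\mathbf{x}^\ast}]\rvert\le\Delta_{\min}/4$. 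Jensen reduces this to an $L^2$ bound on function values, with no control of derivatives.
\end{itemize}
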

Here, $T_{\mathcal{B},\mathcal{H}_{\theta},N}\in\mathbb{N}$ corresponds to the length of respective initialization phases dictated by the deployed kernel optimization. As our method of choice involves various invariants of $\mathcal{B}$ and $\mathcal{H}$, its value is left for specification in the proof. Dependence on $N$ may also be reallocated to the maximal graph degree and path length (see Supplementary \ref{rem:dep}).  Section \ref{sec:cor} furthermore offers different intuitions through alternative expressions of involved problem invariants. A version of the results from Thm. \ref{thm:1} for a confidence interval length and regret bound in $\mathcal{O}(\ln(T)N)$ is possible, but would build on a priori unknown $\mathcal{B}$-dependent factors for the selection criterion, which could in principle be approximated over time. That regret bounds of this ordinality are close to optimal is apparent from the problem-dependent lower bounds $\Omega(\ln(T)N/\Delta_{\min})$ in \textup{\cite{pmlr-v125-merlis20a}}, which apply even to a priori known reward functions and thereby obsolete endogenous feedback. Conversely, Alg. \ref{Alg:SSEM-UCB-JO} carries less computational expense for optimization and evaluation, and has more general appeal.
\begin{proposition}
\label{pro:1}
For any $\mathcal{G}$, provided $\Gamma^{(q)}_{\max} < \Delta_{\min}/4$ and Asm. \ref{asp:rau}, the time-dependent expected regret of Alg. \ref{Alg:SSEM-UCB-JO} is $\mathcal{O}(N\sqrt{T}\ln(T))$. [\ref{proof:pro:1}]
\end{proposition}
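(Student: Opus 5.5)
The plan is to reuse the proof of Theorem \ref{thm:1} for Alg.~\ref{Alg:SSEM-UCB-JO} and to isolate the one place where acyclicity of $\mathcal{G}$ was used. For the joint optimization variant that place is \emph{not} the model construction: (\ref{eq:joint}) is solved directly over $\mathcal{H}$ with restriction (\ref{eq:conv_rest}), and $\hat{\mathbf{r}}^{(t-1)}=\mathbf{1}^{\top}(\mathbf{I}-\hat{\mathbf{F}}_{t-1})^{-1}$ is defined whenever the inverse exists. Asm.~\ref{asp:rau} guarantees that $(\mathbf{I}-\mathbf{F})^{-1}$ exists for the true operator. By continuity of inversion, the estimate stays invertible once $\hat{\mathbf{F}}_{t-1}$ is close enough to $\mathbf{F}$ in the relevant operator norm on the bounded range of observed signals. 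So the first step is a uniform kernel regression guarantee replacing the DAG-based recursion (\ref{eq:disjoint}). With $\mathbf{F}\in\mathcal{H}_\theta$, the regularized least squares estimate satisfies, with high probability, an error of order $\sqrt{\ln(t)/t}$ (or a $t^{-1/4}$ rate) in the RKHS or empirical norm on the region visited by $\mathbf{Y}_t$.

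In the second step I transfer this functional error to the Taylor coefficients $D^{\boldsymbol{\alpha}}\hat{\mathbf{r}}^{(t-1)}_{\mathbf{x}}(\mathbf{a}_t)$. Two facts drive this: analyticity with the reproducing property bounds derivatives by RKHS norms, and the implicit function theorem applied to $\mathbf{y}=\mathbf{F}(\mathbf{y})+\mathbf{z}$ makes the derivatives of the solution map Lipschitz in $\mathbf{F}$ near the true operator. The only structural input is invertibility of $\mathbf{I}-D\mathbf{F}$ along the trajectory, which Asm.~\ref{asp:rau} supplies. This yields a mismatch term $\epsilon_t$ with $\epsilon_t\lesssim \mathbf{C}_{t-1}[i]\,t^{1/4}$-type scaling. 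Such a term is dominated by the enlarged external bonus $\Delta_t^I E_t(\mathbf{x})\sqrt{\sqrt{t-1}/\ln(t-1)}$ as long as every arm in $\mathbf{x}$ has been sampled often enough.

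The third step repeats the standard UCB counting of Theorem \ref{thm:1}. Since $\Gamma^{(q)}_{\max}<\Delta_{\min}/4$, the truncation error is absorbed. I condition on the good events: the moment concentration from Hoeffding with radius (\ref{eq:confidence}), whose failure probabilities sum to the $\frac{\pi^2}{3}(2qs+1)N$ term, and the kernel estimation event. On these events, a suboptimal $\mathbf{x}_t$ can be chosen only if some base arm has $\mathbf{m}_{t-1}[i]\le 4(s+1)\sqrt{T}\ln T/{\psi_{\min}^{(q,s)}}^2$. Summing over arms gives $O(N\sqrt{T}\ln T)$ suboptimal plays, each costing at most $\Delta_{\max}$. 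The constant initialization length $T_{\mathcal{B},\mathcal{H}_\theta,N}$ is independent of $T$, which gives the stated time-dependent order.

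The main obstacle is the second step for cyclic graphs. The map $\mathbf{F}\mapsto(\mathbf{I}-\mathbf{F})^{-1}$ and its derivatives need only be locally Lipschitz, with constants depending on $\|(\mathbf{I}-D\mathbf{F})^{-1}\|$ over $[0,1]^N$-induced states. I would first try a Neumann-series or contraction argument around $\mathbf{F}$ to get explicit constants, absorbing them into $T_{\mathcal{B},\mathcal{H}_\theta,N}$ and $w_{\max}$. I would fall back on pseudo-inversion if global invertibility of $\hat{\mathbf{F}}_t$ fails before that threshold.
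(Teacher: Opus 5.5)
Your skeleton matches the paper's. You rerun the counting argument of Theorem~\ref{thm:1} for SSEM-UCB-JO with $K_T=\sqrt{T}$. You note that acyclicity entered only through the transfer from operator accuracy to accuracy of $(\mathbf{I}-\hat{\mathbf{F}}_t)^{-1}$, i.e. the backward substitution (\ref{eq:bs})--(\ref{def:opt_gua}). You then replace that transfer by an inverse/implicit function theorem argument whose constants go into $T_{\mathcal{B},\mathcal{H}_{\theta},N}$.

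The gap is your first step, which is exactly what the paper's proof is about. The Rademacher bound of Section~\ref{proof:thm:2} controls $\hat{\mathbf{F}}_t-\mathbf{F}$ only in the sample measure $\mathcal{B}_t$. On an arm's sub-distribution $\mathcal{B}_t^{(o)}$ it does so only after the inflation $t/\mathbf{m}_t[o]$. It yields no rate in the RKHS norm, and in infinite-dimensional RKHSs such a rate for constrained or regularized least squares is not available in general. An error bound ``on the region visited by $\mathbf{Y}_t$'' therefore does not let you apply the implicit function theorem at the states $\mathbf{y}(\mathbf{x}^{\ast}\odot\mathbf{b})$, which is where event $\mathcal{II}$ needs accuracy.

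The paper bridges this with three facts:
\begin{itemize}
\item on analytic functions, the weighted $L^2$ seminorm of a sampling distribution whose support has positive measure is a norm;
\item the $\mathcal{H}^{(i)}$ are taken finite-dimensional, so this norm is equivalent to the Hilbert norm;
\item the reproducing property gives $\lvert(\hat{\mathbf{F}}_t-\mathbf{F})(\mathbf{y})[i]\rvert\leq\|K^{(i)}_{\mathbf{y}}\|_{\mathcal{H}^{(i)}}\|\hat{\mathbf{F}}_t[i]-\mathbf{F}[i]\|_{\mathcal{H}^{(i)}}$, hence uniform convergence on every reachable state.
\end{itemize}
Only after this does the inverse function theorem carry the rate over to the inverses. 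You should insert this chain, including the finite-dimensionality hypothesis.

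Two smaller points. First, you do not need convergence of the Taylor coefficients. Since $\Gamma^{(q)}_{\max}$ is defined through $\hat{\mathbf{r}}$, event $\mathcal{II}$ reduces via (\ref{eq:ev2opt}) to $\mathbb{E}[(\mathbf{r}_{\mathbf{x}^{\ast}}(\mathbf{b})-\hat{\mathbf{r}}^{(t)}_{\mathbf{x}^{\ast}}(\mathbf{b}))^2]\leq\Delta_{\min}^2/16$, so accuracy of the reward function itself suffices.

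Second, the enlarged bonus is not used to dominate the model mismatch. That would require a lower bound on the data-dependent weight $\Delta^{I}_t$, which you do not have. Instead, through the selection rule, it forces $\min_{i\in[N]}\mathbf{m}_t[i]$ to be of order $\sqrt{t}\ln t$. This makes $\frac{t}{\mathbf{m}_t[o]}H_t\to 0$, so the kernel-accuracy event fails with probability at most $t^{-2(s+1)}$ for all $t>T_{\mathcal{B},\mathcal{H}_{\theta},N}$. That is what your third step needs before it may condition on this event.
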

Instead of preconditioning $\Gamma^{(q)}_{\max} < \Delta_{\min}/4$, in Cor. \ref{thm:3} (Appendix \ref{sec:cor}) we restate the regret bound of Thm. \ref{thm:1} for a $4\Gamma^{(q)}_{\max}$-approximate optimal solution. There, we also derive specialized bounds by assuming normally distributed instantaneous rewards (Cor. \ref{cor:1}) or relying on the absence of uncertainty (Cor. \ref{cor:2}). The rationale of Thm. \ref{thm:1} and SSEM-UCB however is robust in the following ways.
\begin{remark}
\label{rem:unr}
To estimate the performance on reward functions outside the solution space described by $\mathcal{H}$, let \[\varphi_{\mathcal{H}_{\theta}} = \underset{i\in[N]}{\max}\underset{\hat{\mathbf{F}}[i]\in\mathcal{H}_{\theta}^{(i)}}{\min}\underset{\mathbf{x}\in\mathcal{X}}{\max} ~\sqrt{\mathbb{E}[(\mathbf{F}-\hat{\mathbf{F}})(\mathbf{y}_{\mathbf{x}}(\mathbf{b}))[i]^2]}\] with $\mathbf{y}_{\mathbf{x}}(\mathbf{b})=\mathbf{y}(\mathbf{x}\odot\mathbf{b})$ denote the optimal obtainable \textit{objective gap}. The proof of Thm. \ref{thm:1} actually tracks this error in $T_{\mathcal{B},\mathcal{H}_{\theta},N}$ and shows that its growth in $q$ is at most exponential. 
For a multivariate linearized polynomial RKHS in $\left\{\mathbb{R}^N\rightarrow\mathbb{R},\mathbf{z}\mapsto\sum\limits_{\substack{j=1\\j\not=i}}^N\sum\limits_{k=1}^{\dim{\mathcal{H}^{(i)}}}r_{j,k}\mathbf{z}[j]^k~{\Bigg|}~r_{j,k}\in\mathbb{R}\right\}$, the selection error can be expressed based on the $q$th Taylor remainder of the payoff as well. Since the remainder bound falls in a factorial of $q$, matching $q$-exponential bounds on the derivatives of $\mathbf{r}$ would suffice to validate Thm. \ref{thm:1} even without assuming realizibility.
\end{remark}
Further, Thm. \ref{thm:1} can easily be adapted to cover multiple types of noise interference, as illustrated next.
\begin{theorem}
\label{thm:4}
Suppose that the exogenous vector is subject to bounded white \emph{observation noise} $\mathbf{n}_t\sim\mathcal{N}$ with $\mathcal{B} + \mathcal{N}$ supported on $[0,1]^N$. Additionally, the SEM may feature unobserved white \emph{model noise} $\mathbf{e}_t\sim\mathcal{M}$ on $\mathbb{R}^N$ so that $\mathbf{y}_t = \mathbf{F}(\mathbf{y}_t) + \mathbf{z}_t + \mathbf{e}_t$, $\forall t\in[T]$. Let $\mathbf{n}_t$, $\mathbf{m}_t$, $\mathbf{b}_t$ be independent across dimensions and among each other. Then, the regret bound of Thm. \ref{thm:1} holds under the same conditions. [\ref{proof:thm:4}]
\end{theorem}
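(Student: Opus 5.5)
The plan is to reduce Theorem \ref{thm:4} to Theorem \ref{thm:1} by showing that, under both noise sources, the quantities the algorithms estimate are the same as, or differ only trivially from, those in the noiseless analysis. The proof of Thm.~\ref{thm:1} has two ingredients. The first is concentration of the empirical moments (\ref{eq:moments}) within the radii $\mathbf{C}_t$; this gives the internal UCB argument and the $\frac{\pi^2}{3}(2qs+1)N$ term. The second is a kernel-regression accuracy guarantee for $\hat{\mathbf{F}}_t$ after an initialization phase of length $T_{\mathcal{B},\mathcal{H}_\theta,N}$, which gives the external UCB argument. I will check each ingredient separately under noise.

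\emph{Observation noise.} The agent observes $\tilde{\mathbf{z}}_t=\mathbf{x}_t\odot(\mathbf{b}_t+\mathbf{n}_t)$. Since $\mathbf{n}_t$ is independent of $\mathbf{b}_t$ across time and dimensions, and $\mathcal{B}+\mathcal{N}$ is supported on $[0,1]^N$, the observed signal $\tilde{\mathbf{b}}_t=\mathbf{b}_t+\mathbf{n}_t$ is again an i.i.d. sequence of independent coordinates in $[0,1]$. The Hoeffding-type concentration behind $\mathbf{C}_t$ therefore applies verbatim to the moments of $\tilde{\mathcal{B}}=\mathcal{B}+\mathcal{N}$. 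I would interpret the noise as entering the SEM input; if the SEM instead runs on the noiseless $\mathbf{z}_t$, I would absorb the discrepancy into the model noise term $\mathbf{e}_t'=\mathbf{e}_t-\mathbf{n}_t\odot\mathbf{x}_t$. In the first reading the problem is the original one with $\mathcal{B}$ replaced by $\tilde{\mathcal{B}}$, and the regret is measured with respect to the effective distribution. The gaps $\Delta_{\min},\Delta_{\max}$ and $\Gamma^{(q)}_{\max}$ are then defined for that distribution, so the bound is unchanged in form.

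\emph{Model noise.} The regression target becomes $\mathbf{Y}_t-\mathbf{Z}_t=\mathbf{F}(\mathbf{Y}_t)+\mathbf{E}_t$. The key point is that $\mathbf{e}_t$ is independent of the regressors entering row $i$. In a DAG, $\mathbf{y}_t[j]$ for parents $j$ of $i$ depends only on $\mathbf{z}_t$ and on $\mathbf{e}_t[k]$ for ancestors $k\neq i$. Since $\mathbf{e}_t$ is independent across dimensions, $\mathbf{e}_t[i]$ is independent of the covariates of row $i$, which is exactly the standard additive-noise kernel ridge regression setting. I would assume $\mathbf{e}_t$ has zero mean, or equivalently absorb its mean into the zero-true offset. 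The population risk minimizer is then still $\mathbf{F}$, and the estimation bound used for $T_{\mathcal{B},\mathcal{H}_\theta,N}$ carries over. The only changes are an additional variance term in the constants, given by a sub-Gaussian or bounded-moment parameter of $\mathcal{M}$, and the fact that the covariate distribution is now the law of $\mathbf{y}$ under noise. Both changes are absorbed into the specification of $T_{\mathcal{B},\mathcal{H}_\theta,N}$, now depending on $\mathcal{M}$ and $\mathcal{N}$ as well. For the per-node construction (\ref{eq:disjoint}) in Alg.~\ref{Alg:SSEM-UCB}, the same independence holds when restricting to the rounds with $\mathbf{x}_\tau[o]=1$, because that selection depends only on past data.

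\emph{Payoff and main obstacle.} With model noise, the expected payoff becomes $\mathbb{E}[\mathbf{1}^\top(\mathbf{I}-\mathbf{F})^{-1}(\mathbf{z}+\mathbf{e})]$. For nonlinear $\mathbf{F}$ this is not the noiseless payoff, so the Taylor-moment representation must also integrate over $\mathbf{e}$. I expect this to be the main obstacle. My first approach is to treat $\mathbf{e}$ as additional independent exogenous coordinates, expand the payoff jointly in $(\mathbf{b},\mathbf{e})$, and note that the moments of $\mathbf{e}$ are a fixed unknown that can be estimated from residuals $\mathbf{y}_t-\hat{\mathbf{F}}_t(\mathbf{y}_t)-\mathbf{z}_t$ at the same rate. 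Alternatively, the optimal arm and the gaps can simply be defined with respect to the noisy expectation, so that everything in the internal UCB step is estimated consistently. Once this is settled, the remaining steps of the proof of Thm.~\ref{thm:1} go through unchanged: the decomposition into initialization rounds, rounds where a confidence bound fails (summed via $\sum t^{-2}$), and rounds where a suboptimal arm is chosen despite valid bounds (controlled via $\psi_{\min}^{(q,s)}$).
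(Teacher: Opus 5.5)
Your treatment of observation noise (concentration for $\mathcal{B}+\mathcal{N}$, i.e.\ the noisy moments $\tilde{\boldsymbol{\phi}}$ in place of $\boldsymbol{\phi}$ in (\ref{eq:J_t})) matches the paper's proof, and so does absorbing $\mathbb{E}[\mathcal{M}]$ into the operator. The gap is the step you flag yourself and leave open.

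The internal index is built from $\hat{\mathbf{r}}^{(t)}=\mathbf{1}^{\top}(\mathbf{I}-\hat{\mathbf{F}}_t)^{-1}$, which propagates only $\mathbf{x}\odot\mathbf{b}$. So even if $\hat{\mathbf{F}}_t=\bar{\mathbf{F}}:=\mathbf{F}+\mathbb{E}[\mathcal{M}]$ exactly, $J_t(\mathbf{x})$ targets $\mathbb{E}_{\mathbf{b}}[\mathbf{1}^{\top}(\mathbf{I}-\bar{\mathbf{F}})^{-1}(\mathbf{x}\odot\mathbf{b})]$. By contrast, $\mu(\mathbf{x})=\mathbb{E}_{\mathbf{b},\mathbf{e}}[\mathbf{1}^{\top}(\mathbf{I}-\mathbf{F})^{-1}(\mathbf{x}\odot\mathbf{b}+\mathbf{e})]$. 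A single nonlinear edge $\mathbf{F}[1,2]=f$ with node $2$ a source already shows the problem: the model replaces $\mathbb{E}[f(\mathbf{z}[2]+\mathbf{e}[2])]$ by $\mathbb{E}[f(\mathbf{z}[2]+\mathbb{E}[\mathbf{e}[2]])]$. That bias does not decay in $t$.

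\emph{Option (b) does not fix this.} Defining $\mathbf{x}^{\ast}$ and the gaps with respect to the noisy expectation changes $\mu$, not what $J_t$ converges to. Your claim that ``everything in the internal UCB step is estimated consistently'' is therefore false. Event $\mathcal{II}$ in the proof of Thm.~\ref{thm:1} is then no longer controlled by the estimation bound. Once the bias exceeds $\Delta_{\min}/4+\Gamma^{(q)}_{\max}$, it can hold with probability tending to one.

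\emph{Option (a) analyzes a different algorithm.} A joint expansion in $(\mathbf{b},\mathbf{e})$ with residual moments fails for three reasons. The SSEM policies estimate no residual moments. The radius $\mathbf{C}_t$ rests on Hoeffding for $[0,1]$-valued samples and does not cover $\mathcal{M}$ on $\mathbb{R}^N$. Residuals inherit the error of $\hat{\mathbf{F}}_t$. So you would not recover the bound of Thm.~\ref{thm:1}.

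The paper closes this step, tersely, by also taking the expectation in the objective gap $\varphi_{\mathcal{H}_{\theta}}$ over the noise. The proof of Thm.~\ref{thm:1} deliberately keeps $\varphi_{\mathcal{H}_{\theta}}$ as a variable and carries it through $g_2,g_3$ into $G_t$. It only requires $\varphi_{\mathcal{H}_{\theta}}$ to be small enough that $4G_t<\Delta_{\min}$ for $t>T_{\mathcal{B},\mathcal{H}_{\theta},N}$. The noise-induced mismatch is thus charged to that slack instead of being estimated away. That is the idea your proposal is missing. You should state it honestly as a smallness condition on the noise-induced discrepancy relative to $\Delta_{\min}$, not as consistency.

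A secondary issue concerns your independence claim. $\mathbf{e}_t[i]$ is independent of the row-$i$ covariates only if the row-$i$ hypothesis class uses topological predecessors of $i$ alone, i.e.\ indices $j>i$ in the ordering of (\ref{eq:bs}). As defined, functions in $\mathcal{H}^{(i)}$ may depend on every $\mathbf{y}[j]$, $j\neq i$. That includes descendants of $i$, which are correlated with $\mathbf{e}[i]$. The population minimizer then picks up spurious reverse edges and is not $\bar{\mathbf{F}}[i]$. You need to impose the predecessor restriction explicitly, in particular for the joint optimization of SSEM-UCB-JO.
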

The noise directly affects the range of the optimization target and thus both $w_{\max}$ and $T_{\mathcal{B},\mathcal{H}_{\theta},N}$. The implicit rescaling of the reward function to keep the noisy exogenous observation in $[0,1]^N$ adds to this effect. Variants of Thms. \ref{thm:1} and \ref{thm:4} for subgaussian signals respective noise can be derived from martingale forms of Azuma's inequality \textup{\cite{Azuma67:WSC}} in \ref{subsec:aux}. For endogenous observation noise, see Remark \ref{rem:disbou}.
%
\section{Numerical Analysis}
\label{sec:expana}
\noindent This section is dedicated to the experimental demonstration and performance elaboration of the proposed policy in suitable scenarios. We discuss implementation details and comparisons to a conceptually diverse assortment of benchmark algorithms. Our analysis extrapolates problem quantities indicative for the success and robustness of our approach on noisy synthetic as well as real-world datasets.

\textbf{Benchmarks.} We test SSEM-UCB-JO with multiple choices for the degree of approximation $q$. For its lowest value $1$ and linear function estimates, the method corresponds to a slightly modified version of the SEM-UCB algorithm in 
\textup{\cite{ijcai2022p676}}. Relying on the similar assumption of a separable action space, SGB \textup{\cite{osti_10579433}} accumulates its super arm through gradual inclusion of base arms with the best sample average over a preset number of rounds, before committing and exploiting the final super arm for the remainder of the known time horizon. The structural bandit algorithm METS  \textup{\cite{pmlr-v206-aouali23a}} expects a joint effect prior from which the parameters of action prior distributions and subsequently rewards are sampled. For fairness, we provide it with the distribution type and parameter ranges of the exogenous distributions and have its computation tree form an SEM. For a kernelized bandit comparison, we include IGP-UCB \textup{\cite{pmlr-v70-chowdhury17a}} which calculates reward estimates from kernel matrices between previous and suggested super arms following Gaussian likelihood models with hyperparameters as specified in Section \ref{subsec:ads}.
\subsection{Synthetic Tests}
\label{subsec:synt}
For the synthetic setup we allow super arms of cardinality up to $s=6$ and generate DAG adjacency matrices with an edge density of $0.2$. We conduct trials for $10$ base arms and a mixture of linear and periodic endogenous functions with coefficients in $[0.4, 0.7]$. The exogenous inputs are sampled from normal distributions with means in range $[0.3, 0.7]$ and standard deviations in $[0.1, 0.4]$, rescaled to be supported almost entirely on $[0, 1]$. Two further sets of trials with different distributions, component functions, parameter ranges, or more base arms are compared in \ref{subsec:ads}.

The observations are obfuscated by white noise from standard normal distributions, exogenously scaled by $0.02$, and endogenously by $2~\%$ of the average reward. At each time step, we provide each algorithm with the same exogenous sample masked by its individual super arm choice. The regularization weight of the kernel optimization is tuned adaptively during runtime so that the empirical graphs are appropriately sparse. Without access to density estimates, a computationally more expensive validation split could determine the best value per step. Recommendations towards an efficient implementation are given in Section \ref{subsec:ads}.

We determine the optimal reward empirically and measure the time-averaged regret each algorithm incurs in total up to the current step. In this metric, the algorithms eventually converge to the regret of the super arm they predict to be optimal.
\begin{figure}[!t]
    \centering
    \includegraphics*[width=0.88\columnwidth]{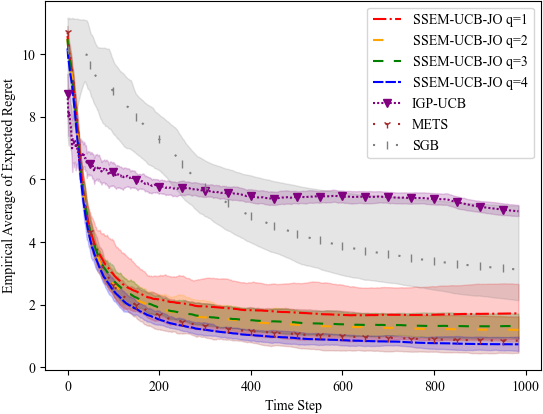}
    \caption{Policies' Regret Confidence Across Runs.}
    \label{fig:synth3}
\end{figure}
%
By incorporating the problem structure into their arm search, the SSEM instances and METS incur much less initial regret than the SGB and IGP-UCB benchmarks and settle for closer to optimal arms sooner. Although the highest derivative approximation of the analytic reward function performs the best among them, the increase is not necessarily steady between degrees, as higher Taylor remainders are not always smaller in absolute expectation.

In Figure \ref{fig:synth3}, only SSEM-UCB-JO of degree $4$ and METS achieved near-zero regret average in time, the latter of which had additional access to bounds on all synthetic distribution parameters. Lower degrees of SSEM-UCB-JO and the remaining benchmarks do not reliably converge to the true optimal arm due to violations of their assumptions. In the case of SSEM-UCB-JO, the more simplistic Taylor approximations are not sufficiently close to the reward series, whereas IGP-UCB and SGB require the expected reward to be in subgaussian distance respective separable. The unstructured benchmarks also experience only the overall effect of the exogenous noise, which affects each super arm differently and in ways not covered by their theoretical guarantees. In addition, the $95\%$ confidence intervals across $10$ runs exhibit a lower diameter and hence greater certainty for SSEM-UCB-JO of higher degrees.
\subsection{Train Delay Dataset}
To demonstrate the application of SSEM-UCB-JO with degree $q=3$ in real-world networks, we work with a dataset on train delays in Germany \cite{train-data}. The set contains data points for nearly $2,000$ stations taken at an hourly rate on $16$ days between 23.06.2024 and 14.07.2024. We extract a local subset of $20$ stops that are at least categorized as urban hubs (category $3$ or less) in geographic locations closest to $10.5^\circ$ N, $51.2^\circ$ E, covering a circular area ca. $95$ km in radius. Each data entry records the deviation of a train from its current schedule with its complete track record. Judging by the direct connections featured on the routes, the physical railway system in the chosen region exhibits a graph density close to $17 ~\%$, with various factors potentially inducing additional associations between stations.

We consider the time difference between planned and actual stay, averaged over all trains at a stop per hour, to be the independent exogenous signal inherent to that node in the transportation network. The train-averaged total delay per station accumulated over the course of each hour is understood as the associated endogenous signal under the complete super arm encompassing the entire set of vertices. Endogenous feedback for other actions is generated according to the SEM computation structure after fitting a graph to the aforementioned data. Projected graph adjacencies at the very least have to include direct connections listed regularly in the track histories and can form cycles. The model is allowed to feature component functions of linear, cubic or (scaled and shifted) hyperbolic tangent variety. Figure \ref{fig:forecast} displays the learned model's progress when reconstructing the overall delay from the immediate delay based on earlier observations.
\begin{figure}[!t]
    \centering
    \includegraphics*[width=0.91\columnwidth]{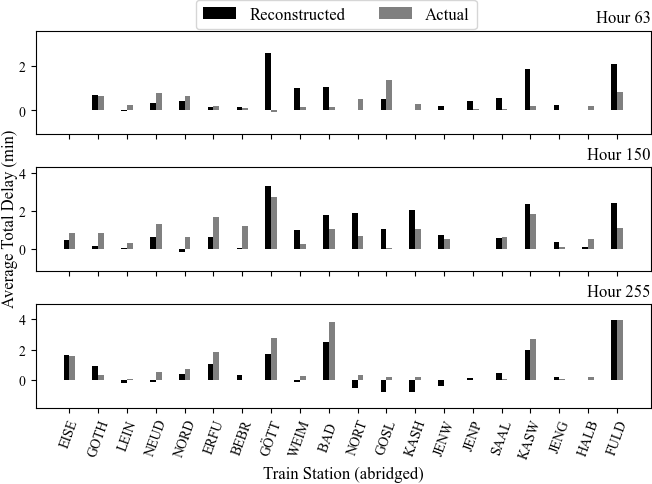}
    \caption{Computational Graph Learning Progress.}
    \label{fig:forecast}
\end{figure}
%
For full names as well as details to the implementation and further visualization, we refer to Section \ref{subsec:adr}. For the agent, we permit super arms of up to $6$ stations with the goal of identifying systemic problems in clusters of the railroad infrastructure. Our simplified test setting does not account for periodic differences between days, nor does it involve the accompanying information on causes of obstruction.

After the initialization phase, we find the rate of change in super arm selection to be generally decreasing, with shifts occurring whenever a better performing combination was explored. The resulting cluster of stations is predicted as chief in overall delay when only selected stations contribute their inherent delay. This subnetwork analysis differs from the apparent set of stations with top overall delay in that it seeks to account for the impact of each station's delay reverberated throughout the system. The final super arm does contain $3$ stations with high inherent delay, and $3$ city stops in close proximity of each other.
%
%
\section{Conclusion}
\label{sec:con}
\noindent By only relying on very elemental properties, the methodologies we introduced are highly customizable through kernel selection and rebalancing of trade-offs befitting specific fields. Expert knowledge can optionally impose additional constraints and guide initial guesses, while the model's causal coherence renders it resilient to a fair amount of inaccuracies and interference. As evidenced by the regret bounds we obtained, in time the algorithms greatly condense the combinatorial intricacy to a linear growth in base arm set size whilst staying sublinear in step count. We exemplified the transfer to data analysis in realistic settings, where they successfully competed against several benchmarks.

Subsequent research could further improve computational scalability through a TS variant that circumvents the determination of optimistic scores by querying instantaneous reward distribution replicas. The parametrization of our model already allows for dimensionality reduction by associating exogenous vectors to multiple nodes and thus reducing the ratio to endogenous outputs. Infinite action spaces can also be covered by contextual or continuously budgeted versions. One way to capture the full reinforcement learning problem beyond bandits would be to extend the fundamental SEM recurrently with time-lagged adjacencies \textup{\cite{topology}}.
%

\newpage
\bibliography{bibliography}
\bibliographystyle{IEEEtran}

\newpage
\appendix

\subsection{COMPARISON}
\label{sec:bac}
In Table \ref{tab:reg}, we detail the time-dependent regret bounds from related papers. Alongside each, we list a few key assumptions in which they differ from the work conducted here. Some framworks defer the regret-impact of base arm number $N$ to graph properties such as the maximal degree $d$ and maximal path length $L$, which without further assumptions can be of order $N$. While frameworks and semi-bandit observations vary, intervened node values or prior contexts may be compared to filtered feedback by the volume of data available for learning. Compared to general SEMs, separable SEMs are less computational expansive for high-dimensional data, carry less risk of overfitting, and interactions associated to each edge are more easily interpretable. For nonlinear component functions, the inverse $(\mathbf{I}-\mathbf{F})^{-1}$ still captures more complex interactions, as it is in general not separable, nor are equilibrium solutions of cyclic SEMs.
\bgroup
\def\arraystretch{1.5}
\begin{table*}[!ht]
\caption{Ordinal Comparison of Regret Bounds}
\label{tab:reg}
\begin{center}
\begin{tabular}{|l|l|l|}
\hline
Work & Regret & Key Assumptions \\
\hline
\hline
\cite{ijcai2022p676} & $\mathcal{O}(N\ln T)$ & Linear SEM, noiseless observations, DAG \\
\hline	
\cite{NIPS2016_aa169b49} & $\mathcal{O}(\sqrt{NT\ln T})$ & Oracle access, monotone reward function \\
\hline	
\cite{pmlr-v70-chowdhury17a} & \makecell*[cl]{$\mathcal{O}(\sqrt{T\gamma_T(\gamma_T+\ln(1/\delta))})$,\\where $\gamma_T=\mathcal{O}((\ln T)^{N+1})$\\for squared exponential kernels} & Expected scores observed with homogeneous noise across arms \\
\hline	
\cite{pmlr-v202-hwang23a} & \makecell*[cl]{$\tilde{\mathcal{O}}\left(\sqrt{\tilde{d}\max\{\tilde{d},s\}T}\right)$,\\where $\tilde{d}$ is the effective\\neural tangent dimension} & Contexts encode arm dependencies, monotone score accumulation \\
\hline	
\cite{pmlr-v238-yan24a} & $\mathcal{O}(d^{L-1}\sqrt{T\ln(NT)})$ & Free selection of ingoing signal values, no action cardinality restriction, known DAG structure \\
\hline	
\cite{Feng_Chen_2023} & $\mathcal{O}(N\sqrt{dT}\log T)$ & Monotonic generalized linear functions, endogenous interventional control, known DAG structure \\
\hline	
\cite{JMLR:v24:22-0969} & \makecell*[cl]{$\tilde{\mathcal{O}}(\beta_T^{L+1}(d+1)^{L/2}\sqrt{NT})$,\\where $\beta_T=\tilde{\mathcal{O}}(d\sqrt{\ln(NT)})$} & Linear SEM, DAG \\
\hline	
\cite{NEURIPS2024_2aba6ec2} & $\tilde{\mathcal{O}}(d^{L-0.5}\sqrt{T})$ & Linear SEM, known graph degree, DAG \\
\hline	
\cite{osti_10579433} & $\mathcal{O}((\sqrt{N}T\ln T)^{2/3})$ & Submodular and monotonic expected reward \\
\hline	
\cite{pmlr-v206-aouali23a} & $\tilde{\mathcal{O}}(\sqrt{NT})$ & Exemplary nonlinearities only for final accumulation, DAG \\
\hline	
\cite{NEURIPS2024_3640a199} & $\tilde{\mathcal{O}}(\sqrt{NT})$ & No nonlinear transformation after action selection \\
\hline	
\hspace{0mm}[ours] & \makecell*[cl]{$\mathcal{O}(N\ln^2(T))$ for Alg. \ref{Alg:SSEM-UCB},\\$\mathcal{O}(N\sqrt{T}\ln(T))$ for Alg. \ref{Alg:SSEM-UCB-JO}} & Separable SEM, DAG for Alg.  \ref{Alg:SSEM-UCB} \\
\hline
\end{tabular}
\end{center}
\end{table*}
\egroup

\subsection{ADAPTATIONS}
\label{sec:ada}
\subsubsection{Extensions}
\label{sec:ext}
For simplicity, we fixed $\mathbf{G}=\mathbf{I}$ for the preliminary on-node activations. Generic exogenous transformations could be identified by optimization together with the entries of $\mathbf{F}$, effectively leading to regret bounds as if another base arm were involved. Our proofs will still specify the step in which this identification would take place.

Besides, we focused on zero-true $\mathbf{F}$. Otherwise, non-zero $\mathbf{F}_0: [0,1]^N\rightarrow\mathbb{R}^N, \mathbf{b}\mapsto(\sum\limits_{j=1}^N\mathbf{F}[i,j](0))_{i\in[N]}^{\top}$ can be linearly transformed as $\mathbf{I} - \mathbf{F} + \mathbf{F}_{0} = (\mathbf{I} +\mathbf{F}_{0})(\mathbf{I} - \mathbf{F})$ towards a zero-true operator $\mathbf{F}-\mathbf{F}_0$ covered in our analysis. Uncertainty over the entries of $\mathbf{F}_0$ would have the same effect on the regret bounds as another exogenous operator since $(\mathbf{I} - \mathbf{F})^{-1} = (\mathbf{I} - (\mathbf{F} - \mathbf{F}_{0}))^{-1} (\mathbf{I} +\mathbf{F}_{0})$, assuming the inverses exist.

Although we accumulated the final payoff from the overall rewards in a strictly linear manner, one could frame it through a function operator $\mathbf{h}$ of dimensions $(N,1)$, i.e., $\mathbf{r}_{\mathbf{x}_t}(\mathbf{b}_t) = \mathbf{h}^{\top}\mathbf{y}_t = \sum_{i\in[N]}\mathbf{h}[i](\mathbf{y}[i])$. If $\mathbf{h}$ is known, it is straightforward to combine our theoretical results with known bounds for the expectation of the accumulating functions; Otherwise, their learning process would require the payoff as additional feedback and increase $T_{\mathcal{B},\mathcal{H}_{\theta},N}$ according to the complexity of the candidate function space in the regret bounds of Section \ref{sec:theana}.

\begin{remark}
i) Setting $\mathbf{r}_{\mathbf{x}}(\mathbf{b}) = \mathbf{r}(\textup{diag}(\mathbf{x})\mathbf{b})$ implies that only the instantaneous rewards of the selected arms impact the reward. If the selection only altered the signal strength, e.g., $\mathbf{r}_{\mathbf{x}}(\mathbf{b}) = \mathbf{r}((\textup{diag}(\mathbf{x})+k\mathbf{I})\mathbf{b})$, $k\in\mathbb{R}$, we would need to treat $s$ as $N$ for most of our theoretical analysis but drastically reduce $T_{\mathcal{B},\mathcal{H}_{\theta},N}$, as every base distribution would be involved no matter the agent's decision. Separate sets of arms completely outside the agent's control but not perception could be treated similarly.

ii) In our setup, the agent selects exogenously, as its decision vector masks the independent instantaneous rewards and their observability directly. Were we to restate the problem by shifting the mask to the associated endogenous vectors, i.e., \[\mathbf{r}_{\mathbf{x}_t}(\mathbf{b}_t) = {\bf 1}^{\top}(\mathbf{x}_t\odot\mathbf{y}_t) = {\bf 1}^{\top}(\mathbf{x}_t\odot(\mathbf{I}-\mathbf{F})^{-1}(\mathbf{G}(\mathbf{b}_t))),\] we could replicate the analyses of this paper with appropriate adjustments to the selection strategies and only minor changes in theoretical variables. In either selection scenario, if both vectors' observability is restricted to the chosen nodes, the external UCB should track instances of pairwise base arm occurrence, leading to a replacement of $N$ with approximately $N^2$ in regret bounds. Partially masked feedback vectors could also be covered by the objective gap.

iii) Alternative uniquely determined SEMs of the form $\mathbf{y} = \mathbf{F}(\mathbf{y}+\mathbf{z})$ require only adjustments of signal ranges as they are equivalent to $\mathbf{y} = (\mathbf{I}-\mathbf{F})^{-1}(\mathbf{z})-\mathbf{z}$. Structural input to the agent at each time step, i.e. contexts, for example as exogenous addends, likewise would not affect ordinality.
\end{remark}

\begin{remark}
\label{rem:disbou}
i) Were we to forego the lower bounds to the distributions $\mathcal{B}_i$, the continuity of their p.d.f.s would allow for an asymptotic argument in expected difference to a truncation from below. This argument would validate the regret bound of Theorem \ref{thm:1} with another invariant of $\mathcal{B}$ involved in the calculation of $T_{\mathcal{B},\mathcal{H}_{\theta},N}$. Moreover, discrete distributions would require the RKHS to be of sufficiently low dimension to determine component functions from occupied values.

ii) Ordinally, Theorem \ref{thm:4} applies to i.i.d. samples from white endogenous $\mathcal{N}_2$ instead of exogenous observation noise as well. In conjunction, bounds on the conditional denoising error $\mathbb{E}_{\substack{\mathbf{b}\sim\mathcal{B}\\\boldsymbol{\epsilon}\sim\mathcal{N}_2}}[\lvert\mathbf{F}(\mathbf{y}(\mathbf{b})) - \mathbb{E}_{\substack{\tilde{\mathbf{b}}\sim\mathcal{B}\\\tilde{\boldsymbol{\epsilon}}\sim\mathcal{N}_2}}[\mathbf{F}(\mathbf{y}(\tilde{\mathbf{b}}))~|~\mathbf{y}(\tilde{\mathbf{b}})+\tilde{\boldsymbol{\epsilon}}=\mathbf{y}(\mathbf{b})+\boldsymbol{\epsilon}]\rvert]$ can be utilized in the general regret analysis without imposing additional properties on the distributions or the functions.
\end{remark}

One could seamlessly employ the subroutines of \textup{\cite{piecewiseSEM}} or \textup{\cite{delayedSEM}} to better cope with structurally nonstationary environments, shifting distributions or feedback delays.

Besides its expressiveness, our choice of causal model class mostly pertains to its sample efficient identification, and only the related parts would require major overhauls for different representations. In particular, general bounds can be derived from our results for any analytic objective function if known to the agent. Other signal transformations, such as graph neural networks, could be combined with matching confidence schemes (e.g., neural bandit \textup{\cite{pmlr-v202-hwang23a}}) to encompass an even wider range of problem setups categorically. On a DAG, the SEM already resembles the forward propagation of a neural network that outputs all its intermediate node values, with the instantaneous rewards acting as biases and function activations associated to edges rather than nodes.

Though not studied here, ideally the external UCB would take the postulated graph adjacencies into account when evaluating the impact of optimization certainty improvements gained from inclusion of each base arm, reminiscent of probabilistic trigger settings \textup{\cite{pmlr-v202-kocak23a}}. From a Bayesian perspective, replacement with a more targeted parameter information-directed sampling design \textup{\cite{NIPS2014_90720a2f}} could aid the graph learning.
%
\subsubsection{Specializations}
\label{sec:spa}
Tailored to normal instantaneous reward distributions in particular, SSEM-UCB-Norm (Algorithm \ref{Alg:SSEM-UCB-Norm}) decreases the number of moments that go into the internal UCB calculation to two regardless of $q$.
%
\begin{algorithm}[tb]
\caption{SSEM-UCB-Norm: Separable Structural Equation Model - Upper Confidence Bound - Normal}
\label{Alg:SSEM-UCB-Norm}
Exactly as SSEM-UCB, but running the initialization loop twice, and replacing empirical moment calculation and selection criterion as follows.
\begin{algorithmic}[1]

    \STATE Calculate $\mathbf{C}_{t-1}$, $\hat{\boldsymbol{\beta}}_{t-1}=\mathbf{a}_t$ and $\hat{\boldsymbol{\phi}}^{(2)}_{t-1}$ according to (\ref{eq:confidence}), (\ref{eq:means}) and (\ref{eq:moments}), respectively, and set $\tilde{\mathbf{m}}_{t-1}[j] = \frac{\mathbf{m}_{t-1}[j]}{\mathbf{m}_{t-1}[j]-1}$ for $j\in[N]$.

    \STATE Select decision vector $\mathbf{x}_{t}$ that solves
    \begin{align}\nonumber
        &\underset{\mathbf{x}\in\mathcal{X}}{\arg \max} ~E_{t}(\mathbf{x}) +\underset{\substack{\mathcal{C}_{t-1}\in \prod_{i\in[N]}[-\mathbf{C}_{t-1}[i], \mathbf{C}_{t-1}[i]] \\ \mathcal{W}^{(2)}_{t-1}\in \prod_{i\in[N]}\{-\mathbf{C}_{t-1}[i], \mathbf{C}_{t-1}[i]\}}}{\max} \\ \nonumber
        &\sum_{\substack{\lvert{\boldsymbol{\alpha}}\rvert\leq q\\{\boldsymbol{\alpha}} \text{ mod } 2 = 0}}  \frac{D^{{\boldsymbol{\alpha}}}\hat{\mathbf{r}}^{(t-1)}(\mathbf{x}\odot(\hat{\boldsymbol{\beta}}_{t-1} + \mathcal{C}_{t-1}))}{{\boldsymbol{\alpha}}!!} \\ \nonumber
        &\hspace{-2mm}\prod_{j\in\mathcal{I}(\mathbf{x})}\left(\tilde{\mathbf{m}}_{t-1}[j]{\hat{\boldsymbol{\phi}}^{(2)}_{t-1}}[j] + \sqrt{\tilde{\mathbf{m}}_{t-1}[j] }\mathcal{W}^{(2)}_{t-1}[j]\right)^{{\boldsymbol{\alpha}}[j]/2}.
    \end{align}
\end{algorithmic}
\end{algorithm}
%
In turn, it also needs to apply its confidence to the derivatives' argument. The maximization over the confidence interval is well-defined due to the continuity of the underlying functions and their derivatives.

Since the distance of the deployed Taylor series approximation to the actual reward function is problem-dependent, the chosen cutoff degree $q$ can be gradually increased over the course of the algorithm's runtime until the diminishing analytical remainder falls within bounds proportional to the minimal empirical payoff discrepancy of suboptimal super arms, as outlined in Algorithm \ref{Alg:q-search}.
\begin{algorithm}[tb]
\caption{q-search}
\label{Alg:q-search}
\textbf{Input}: Time step $t$, input of SSEM-UCB except for $q$.\\
\begin{algorithmic}[1]
\STATE Initialize $q = -1$, $\Gamma=0$, $\hat{\Delta}_{\min} = 0$.
\WHILE{$\Gamma \geq \hat{\Delta}_{\min}/4$ \AND $q < t$}
    
    \STATE Increase $q \leftarrow q + 1$.
    \STATE Calculate $\hat{\Delta}_{\min} \leftarrow \underset{\substack{\mathbf{x}\in\mathcal{X}\\S_{\max}-S(\mathbf{x})>0}}{\min}S_{\max} - S(\mathbf{x})$ for $S_{\max} \leftarrow \underset{\mathbf{x}'\in\mathcal{X}}{\max} ~S(\mathbf{x}')$ based on the selection strategy $S$ and reward estimate $\hat{\mathbf{r}}^{(t-1)}$ of SSEM-UCB with parameter $q$.
    \STATE Set $\Gamma$ to a $q$-remainder bound of $\hat{\mathbf{r}}^{(t-1)}$ (either as provided in \ref{subsec:int} or as an empirical estimate).

\ENDWHILE
\STATE {\bf return} $q$
\end{algorithmic}
\end{algorithm}
%
\subsection{PERFORMANCE}
\label{sec:per}
{\bf Complexity.} The computational expense of each time step $t$ is split between kernel optimization, differentiation and arm comparison. The former two strongly depend on the method employed. For example, the Alternating Direction Method of Multipliers (ADMM) identifies the best sample fit incrementally in $\mathcal{O}(t^2N)$ for Alg. \ref{Alg:SSEM-UCB-JO} and $\mathcal{O}(\sum_{i\in[N]}\mathbf{x}_t[i]\mathbf{m}_t[i]^2N)$ for Alg. \ref{Alg:SSEM-UCB} \textup{\cite{8831393}}. For $\mathbf{a}_t={\bf 0}$, very basic numerical differentiation can be performed in $\mathcal{O}(w~\text{eval}(\hat{\mathbf{F}}_{t-1}))$, where $w=(qN)^{\min\{s,q/2\}}$ and $\text{eval}(\hat{\mathbf{F}}_{t-1})$ is the complexity of evaluating $\hat{\mathbf{F}}_{t-1}$. With all partial derivative values in memory, the optimal combinatorial action amongst $\lvert\mathcal{X}\rvert=\sum_{k=0}^s\binom{N}{k}$ can always be picked in $\mathcal{O}(w+N^ss^2)$ by forming a computation tree of viable combinations. These most general bounds can be drastically decreased by graph sparsity (enforced through regularization or edge recognition thresholds), optimized block matrix inversion, refitting the graph on demand or in parallel, exploiting further structure or restrictions on the action space, randomizing challengers to the leading arm, and other adjustments described in Sections \ref{sec:expana} and \ref{sec:expadd}.

{\bf Failure modes.} Since the external UCB ensures functional certainty through minimal base arm count, if all super arms containing a specific base arm perform exceptionally subpar, its cost might become unduly dominant, as is evidenced by the regret bounds linearity in $\Delta_{\max}$. A successive arm elimination subroutine, where discrepancies in experienced performance are compared to a threshold for further inclusion, helps the algorithms to cope with this scenario.
When dealing with very complex or extensive graphs, unless there are substantial risk factors, it is also advisable to spend additional iterations on the initial pure exploration phase to build up sufficient structural knowledge sooner.
Further, for problems in which the operator $\mathbf{I}-\mathbf{F}$ is singular, the solution to the SEM is not unique and structural learning instable. Specifying endogenous ranges, or secondary constraints such as endogenous norm minimization, can facilitate stable pseudo-inversion of the SEM while reducing the computational complexity of exact inversion. Greater detail on such generalizations and their integration into the algorithms is left to future work.
%
\subsection{COROLLARIES}
\label{sec:cor}
An alternative expression for the regret bound of Theorem \ref{thm:1} presents itself through Cauchy's estimate and Hartogs's theorem on the analytic continuations $\overline{\mathbf{r}}^{\tau}$, $\tau\in[T]$, of the empiric reward functions to the complex numbers. For any $l\in\mathbb{R}_{>0}$ within the radius of convergence, we thereby have $\underset{\tau\in[T]}{\max}~\sum\limits_{\lvert{\boldsymbol{\alpha}}\rvert\leq q} \lvert D^{{\boldsymbol{\alpha}}}\overline{\mathbf{r}}^{(\tau-1)}_{\mathbf{x}_{\tau}}(\mathbf{a}_{\tau})\rvert / {\boldsymbol{\alpha}}! \leq \underset{\tau\in[T]}{\max}~\sum\limits_{\lvert{\boldsymbol{\alpha}}\rvert\leq q} \underset{\mathbf{P}_{l}(\mathbf{a}_{\tau})}{\sup}\lvert\overline{\mathbf{r}}^{(\tau-1)}\rvert / l^{\lvert{\boldsymbol{\alpha}}\rvert}$, where $\mathbf{P}_{l}(\mathbf{a}_{\tau})$ is the polydisc of radius $l$ in all directions around $\mathbf{a}_{\tau}$ in $\mathbb{C}$. For example, for $l = s+q-1$ we get $e~\underset{\tau\in[T]}{\max}~\underset{\mathbf{P}_{s+q-1}(\mathbf{a}_{\tau})}{\sup}\lvert\overline{\mathbf{r}}^{(\tau-1)}\rvert$ as a limit to the absolute sum. Multiplied by $(\Delta_{\min}-4\Gamma_{\max}^{(q)})/8$, this term can be substituted for $\psi_{\min}^{(q,s)}$ in Theorem \ref{thm:1}.

For an approximate formulation of Theorem \ref{thm:1}, we set $\Delta_{\max}^{(q)} = \underset{\mathbf{x}: \mu(\mathbf{x}) + 4\Gamma^{(q)}_{\max} < \mu(\mathbf{x}^{\ast})}{\max}~ \Delta(\mathbf{x})$ and $\Delta_{\min}^{(q)} = \underset{\mathbf{x}: \mu(\mathbf{x}) + 4\Gamma^{(q)}_{\max} < \mu(\mathbf{x}^{\ast})}{\min}~ \Delta(\mathbf{x}) \geq 4\Gamma^{(q)}_{\max}$.
\begin{corollary}
\label{thm:3}
Provided that $\mathbf{F}[i]\in\mathcal{H}_{\theta}^{(i)}~\forall~i\in[N]$, then, for any degree $q\in\mathbb{N}$, the expected regret of the SSEM-UCB algorithm is upper bounded as
\begin{align}\nonumber
  &\left[ {\frac{4 (s+1) \ln^{2}{T}}{ {\psi_{\min}^{(q,s)}}^{2} }} + 1 + \frac{\pi^{2}}{3} (2 q s + 1) \right] N \Delta_{\max}^{(q)} \\\nonumber
   &\hspace{5mm} + T_{\mathcal{B},\mathcal{H}_{\theta},N} \Delta_{\max}^{(q)} + 4\Gamma^{(q)}_{\max} T.
\end{align}
Proof. \textup{See Section \ref{proof:thm:3}.} \hfill $\square$
\end{corollary}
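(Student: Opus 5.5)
The plan is to rerun the argument behind Theorem \ref{thm:1}, but to classify super arms by the relaxed gap $\Delta(\mathbf{x})>4\Gamma^{(q)}_{\max}$ instead of by exact optimality. I would split the regret into two parts. The first is the contribution of rounds that play an arm $\mathbf{x}$ with $\mu(\mathbf{x})+4\Gamma^{(q)}_{\max}<\mu(\mathbf{x}^{\ast})$. The second is the contribution of all other rounds. In those other rounds the per-step regret is at most $4\Gamma^{(q)}_{\max}$ by definition, so summing over at most $T$ of them gives the additive term $4\Gamma^{(q)}_{\max}T$ directly. The remaining work is to show that the number of rounds in the first class is bounded by the bracketed term plus $T_{\mathcal{B},\mathcal{H}_{\theta},N}$, each round costing at most $\Delta_{\max}^{(q)}$.

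For the first class I would reuse the steps of the proof of Thm. \ref{thm:1} (SSEM-UCB with $K_T=\ln T$) in order.

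\emph{Step 1: graph learning.} After the initialization length $T_{\mathcal{B},\mathcal{H}_{\theta},N}$, the kernel estimate (\ref{eq:disjoint}) is accurate enough on the DAG under realizability and Asm. \ref{asp:fin}. This is where the hypothesis $\mathbf{F}[i]\in\mathcal{H}^{(i)}_\theta$ enters.

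\emph{Step 2: concentration of moments.} Hoeffding-type bounds for the empirical moments $\hat{\boldsymbol{\phi}}^{(p)}_{t}$ with radius $\mathbf{C}_t$ from (\ref{eq:confidence}) fail with probability summing to $\frac{\pi^2}{3}(2qs+1)N$. The count is one failure event per moment order, sign and arm, plus the mean.

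\emph{Step 3: decomposition of the index.} The optimistic index $I^{\max}_t$ differs from $\mu$ by three pieces: the Taylor remainder, at most $\Gamma^{(q)}_{\max}$ in each of the two comparisons; the moment-confidence widths, scaled by $w_{\max}$ and the combinatorial factor $\min\{q,s\}s^{-\min\{q,s\}}/((q+1)s)$; and the external term weighted by $\Delta^I$.

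\emph{Step 4: counting.} A bad arm can be selected only if some included base arm has $\mathbf{C}_{t-1}[i]\sqrt{\ln(t-1)}$ exceeding $\psi^{(q,s)}_{\min}$. Here the margin $\Delta(\mathbf{x})-4\Gamma^{(q)}_{\max}\ge\Delta^{(q)}_{\min}-4\Gamma^{(q)}_{\max}>0$ replaces $\Delta_{\min}-4\Gamma^{(q)}_{\max}$. This forces $\mathbf{m}_{t-1}[i]\le 4(s+1)\ln^2T/{\psi^{(q,s)}_{\min}}^2$. A standard counter argument, incrementing the counter of the least-sampled offending base arm, then yields at most $N$ times this quantity plus one.

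The main obstacle is making sure the relaxed margin still works. In Step 4 the comparison is between $\mathbf{x}$ and $\mathbf{x}^{\ast}$, and both incur Taylor errors, which together consume up to $2\Gamma^{(q)}_{\max}$. The factor 4 additionally has to absorb the remainder of the \emph{estimated} model versus the true one, together with the learning error from Step 1. I would verify that the inequality chain from Thm. \ref{thm:1} only ever used $\Delta(\mathbf{x})-4\Gamma^{(q)}_{\max}>0$ for the specific arm $\mathbf{x}$ being counted, and never a global $\Delta_{\min}>4\Gamma^{(q)}_{\max}$. If so, $\psi^{(q,s)}_{\min}$ can be kept with $\Delta_{\min}$ read as $\Delta^{(q)}_{\min}$, and the corollary follows by adding the two parts.
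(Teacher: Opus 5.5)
Your proposal is correct and matches the paper's proof. Both split the rounds according to whether $\Delta(\mathbf{x}_t) > 4\Gamma^{(q)}_{\max}$, charge the remaining rounds at most $4\Gamma^{(q)}_{\max}T$, and bound the others via Theorem~\ref{thm:1} with $\Delta^{(q)}_{\max}$, using $\Delta^{(q)}_{\min}$ inside $\psi^{(q,s)}_{\min}$; the paper leaves that substitution implicit and cites Theorem~\ref{thm:1} in one line, whereas you check that its inequality chain survives with $\Delta_{\min}$ replaced by $\Delta^{(q)}_{\min} > 4\Gamma^{(q)}_{\max}$. One bookkeeping slip in your Step 2: the $+1$ in $2qs+1$ comes from the graph-learning failure event $\mathcal{II}$, not from a mean estimate.
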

Next, we state the regret bound associated with the specialized Algorithm \ref{Alg:SSEM-UCB-Norm} in Section \ref{sec:spa}.
%
\begin{corollary}
\label{cor:1}
In the setting of Theorem \ref{thm:1}, if the instantaneous reward distributions $\mathcal{B}_i$ are all normal and all $T$ samples $\mathbf{b}_t$ lie in $[0,0.5]^N$, the expected regret of the SSEM-UCB-Norm algorithm is upper bounded as
\begin{align}\nonumber
   &\left[ {\frac{9 (q-1)!!q (s+1) \ln^{2}{T}}{{ \psi^{(q,s)}_{\min}}^{2} }} + 2 + \frac{\pi^{2}}{3} (4 s + 1) \right] N \Delta_{\max} \\\nonumber
   &\hspace{5mm}+ T_{\mathcal{B},\mathcal{H}_{\theta},N} \Delta_{\max}.
\end{align}
Proof. \textup{See Section \ref{proof:cor:1}.} \hfill $\square$
\end{corollary}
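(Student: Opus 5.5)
The argument follows the proof of Theorem \ref{thm:1}; only two things change. The internal UCB of SSEM-UCB-Norm uses a different moment representation, and the model-learning phase $T_{\mathcal{B},\mathcal{H}_{\theta},N}$ is left untouched. The regret is decomposed as before. Rounds inside the initialization phase (which now runs the covering loop twice, so that $\tilde{\mathbf{m}}_{t-1}$ is well defined) and inside the kernel-learning phase contribute $(T_{\mathcal{B},\mathcal{H}_{\theta},N}+2N)\Delta_{\max}$ at most; this accounts for the constant $2$ in place of $1$. Every remaining suboptimal pull is charged to the base arm $i$ of the played super arm with the smallest counter $\mathbf{m}_{t-1}[i]$, in the usual CUCB fashion. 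The number of such pulls is bounded by showing that once $\mathbf{m}_{t-1}[i]$ exceeds a threshold $\ell_T$, a suboptimal arm can only be chosen on a ``bad'' event. The probabilities of the bad events are then summed over $t$.

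First I rewrite the expected payoff for Gaussian $\mathcal{B}_i$, Taylor-expanded around the true mean $\boldsymbol{\beta}$. All odd central moments vanish, and $\mathbb{E}[(b-\boldsymbol{\beta}[i])^{2k}]=(2k-1)!!\,\sigma_i^{2k}$. Hence
\[
\mu(\mathbf{x})=\sum_{\lvert\boldsymbol{\alpha}\rvert\leq q,\ \boldsymbol{\alpha}\bmod 2=\mathbf{0}}\frac{D^{\boldsymbol{\alpha}}\mathbf{r}(\mathbf{x}\odot\boldsymbol{\beta})}{\boldsymbol{\alpha}!!}\prod_{j}(\sigma_j^2)^{\boldsymbol{\alpha}[j]/2}+\mathbb{E}[R_{q+1}],
\]
using $(2k)!/(2k-1)!!=(2k)!!$. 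This is exactly the functional form maximized in Algorithm \ref{Alg:SSEM-UCB-Norm}. Its parameters are the mean, which enters through the derivative argument, and the variance, which enters through the factor $\tilde{\mathbf{m}}\hat{\boldsymbol{\phi}}^{(2)}$, the unbiased variance estimate. Truncation to $[0,0.5]^N$ keeps the shifted arguments $\hat{\boldsymbol{\beta}}+\mathcal{C}$ inside $[0,1]^N$, so $w_{\max}$ and $\Gamma^{(q)}_{\max}$ still control the derivatives.

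Next comes the concentration step. Hoeffding applied to the mean and to the centered squared samples, together with a union bound over the $2$ moment types and $s$ arms, gives $\lvert\hat{\boldsymbol{\beta}}_{t-1}[i]-\boldsymbol{\beta}[i]\rvert\leq\mathbf{C}_{t-1}[i]$ and $\lvert\tilde{\mathbf{m}}\hat{\boldsymbol{\phi}}^{(2)}-\sigma_i^2\rvert\leq\sqrt{\tilde{\mathbf{m}}}\,\mathbf{C}_{t-1}[i]$ except with probability $O(t^{-2})$ per round. Summing over $t$ yields $\frac{\pi^2}{3}(4s+1)N$; this replaces $2qs+1$ because only two moments are tracked instead of $q$. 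On the good event the true $(\boldsymbol{\beta},\boldsymbol{\sigma}^2)$ is feasible in the max, so the criterion is optimistic for $\mathbf{x}^{\ast}$. It then remains to bound the overestimate for the played $\mathbf{x}_t$ by a Lipschitz-type expansion in the perturbations. For the variance part, the product of $\lvert\boldsymbol{\alpha}\rvert/2\leq q/2$ factors changes by at most roughly $q\max_j\mathbf{C}_{t-1}[j]$ times the coefficients. For the mean part, a first-order mean-value bound gives a change of at most one more derivative order, which $w_{\max}$ absorbs. The coefficients are summed over the even multi-indices with weight $1/\boldsymbol{\alpha}!!$; comparing with the $\boldsymbol{\alpha}!$ weights in Theorem \ref{thm:1} costs the factor $(q-1)!!$. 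Together with the variance factor $q$ and a constant of $3^2$ from splitting into three error terms (mean, variance, model), this sets $\ell_T=9(q-1)!!q(s+1)\ln^2T/{\psi^{(q,s)}_{\min}}^2$.

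The main obstacle is the mean-confidence term that sits inside the derivative argument. It has no counterpart in Theorem \ref{thm:1}, and I must ensure that the optimistic inflation from $\mathcal{C}_{t-1}$ shrinks at rate $\mathbf{C}_{t-1}$ uniformly over all $\boldsymbol{\alpha}$. I would handle it by bounding $\lvert D^{\boldsymbol{\alpha}}\hat{\mathbf{r}}(\mathbf{u})-D^{\boldsymbol{\alpha}}\hat{\mathbf{r}}(\mathbf{v})\rvert\leq s\,w_{\max}\lVert\mathbf{u}-\mathbf{v}\rVert_\infty$ for $\lvert\boldsymbol{\alpha}\rvert\leq q$; this uses derivatives of order up to $q+1$, which $w_{\max}$ already covers. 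The model-error part (the kernel estimate versus the true $\mathbf{F}$), the gap margin $\Delta_{\min}-4\Gamma^{(q)}_{\max}$, and the balancing against $\Delta^I_tE_t$ are then taken verbatim from Theorem \ref{thm:1}.
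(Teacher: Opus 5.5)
Your proposal takes essentially the paper's route: replicate the proof of Theorem~\ref{thm:1}, use the Gaussian identities $\boldsymbol{\phi}^{(2p)}=(2p-1)!!\,{\boldsymbol{\phi}^{(2)}}^{p}$ and $\boldsymbol{\phi}^{(2p-1)}=0$, restrict the concentration step (\ref{eq:VarEst}) to the mean and the unbiased variance (so $2qs+1$ becomes $4s+1$), and extend (\ref{eq:momcon}) by a power-difference bound for the variance factors together with the $s\,w_{\max}$-Lipschitz continuity of $D^{\boldsymbol{\alpha}}\hat{\mathbf{r}}$ for the mean perturbation inside the derivative argument. There are only minor differences: the paper uses the $[0,0.5]^N$ assumption for $\lvert(b-a)^p-(c-a)^p\rvert\leq p\lvert b-c\rvert$ rather than for keeping $\hat{\boldsymbol{\beta}}+\mathcal{C}$ in range, and your reconstruction of the constant $9(q-1)!!q$ is heuristic (inflation factors enter $\ell_T$ quadratically, so squaring only the $3$ is inconsistent), although the paper gives no derivation of that constant either.
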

Introducing noise would lead to results analogous to Theorem \ref{thm:4}. The benefits of less averages to track lie in memory usage as well as computation effort required to identify each super arm's most optimistic conjecture. Their impact on performance would be more noticeable in Bayesian regret analysis than the absolute guarantees studied here, which necessarily involve many worst-case problem instance evaluations. A Bayesian view would likely improve all acquired bounds in their ordinality, especially for sparse graphs, which are the recommended application of SSEM-UCB due to the computation load associated with long paths and, to a lesser extent, high degrees.
%
\begin{corollary}
\label{cor:2}
Let $\mathbf{M}\in[0,1]^{N\times N}$ be an invertible upper triangular matrix. In the idealized setting of Theorem \ref{thm:1}, an initialization phase to SSEM-UCB that plays the $i$th column vector of $\mathbf{M}$ a number $\dim(\mathcal{H}^{(i)})$ times leads to a regret bound of
{\small
\begin{align}\nonumber
	\left[\sum\limits_{i\in[N]}\dim(\mathcal{H}^{(i)}) + \left[ {\frac{4 (s+1) \ln{T}}{ {\psi_{\min}^{(q,s)}}^{2} }} + 1 + \frac{2\pi^{2}}{3} q s \right] N\right] \Delta_{\max}
\end{align}
}%
without the term $-4\Gamma^{(q)}_{\max}$ in $\psi_{\min}^{(q,s)}$.\\
Proof. \textup{See Section \ref{proof:cor:2}.} \hfill $\square$
\end{corollary}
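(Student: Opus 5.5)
The plan is to reduce Corollary \ref{cor:2} to the proof of Theorem \ref{thm:1}, specialized to SSEM-UCB ($K_T=\ln T$ would give $\ln^2 T$, but here one $\ln T$ factor disappears). In the idealized setting there is no uncertainty about the structure: no observation or model noise, exact feedback, and a finite-dimensional hypothesis space by Asm.~\ref{asp:fin}. We will then argue that the designed initialization identifies $\mathbf{F}$ exactly. This makes the kernel-accuracy burn-in $T_{\mathcal{B},\mathcal{H}_{\theta},N}$ unnecessary, and the external UCB no longer has to pay for functional uncertainty.

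The structure is as follows.

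\textbf{Step 1 (exact identification).} We play column $i$ of the upper triangular invertible $\mathbf{M}$ a total of $\dim(\mathcal{H}^{(i)})$ times. Since the samples $\mathbf{b}_t$ are absolutely continuous with densities bounded below, the exogenous inputs $\mathbf{z}_t=\mathbf{M}[:,i]\odot\mathbf{b}_t$ are almost surely in general position. Hence the noiseless equations $\mathbf{y}_t=\mathbf{F}(\mathbf{y}_t)+\mathbf{z}_t$ give a linear system in the finitely many coefficients of the relevant entries. We proceed by induction along the DAG ordering, using backward substitution through the triangular pattern and the recursion (\ref{eq:disjoint}). At each node $o$ the evaluation functionals on the observed $\mathbf{y}$-values form a unisolvent set, so $\hat{\mathbf{F}}^{(o)}$ and hence every $\hat{\mathbf{F}}[o-k,o]$ coincide with the truth. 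The main point to check is that the number of distinct evaluation points per edge reaches the dimension. Invertibility of $\mathbf{M}$, together with the triangular structure, guarantees that every node $o$ is activated (the diagonal is nonzero) and that the nodes upstream of $o$ are handled before it. This step contributes the additive term $\sum_i\dim(\mathcal{H}^{(i)})\Delta_{\max}$.

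\textbf{Step 2 (no remainder slack).} After the initialization, $\hat{\mathbf{r}}^{(t-1)}=\mathbf{r}$ for all $t$. The Taylor remainder then enters only through the true function. The term $-4\Gamma^{(q)}_{\max}$ in $\psi_{\min}^{(q,s)}$ was used in Theorem \ref{thm:1} to absorb the mismatch between estimated and true remainders across two arms. Here we would redo the comparison $\mu(\mathbf{x}^\ast)\le I^{\max}_t(\mathbf{x}^\ast)$ using the exact expansion, so that this slack can be dropped. I expect this to be the main obstacle. It requires showing that the truncated optimistic expansion still upper bounds $\mu(\mathbf{x}^\ast)$ on the good event, or that the remainder cancels at the same centers. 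I would first try to fold $R_{q+1}$ into the confidence radius, using the idealized exactness.

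\textbf{Step 3 (counting).} We repeat the counting argument from Theorem \ref{thm:1}. On the good event where all moment estimates lie within $\mathbf{C}_{t-1}$, a suboptimal arm is chosen only if some included base arm has $\mathbf{m}_{t-1}[i]\le 4(s+1)\ln T/{\psi_{\min}^{(q,s)}}^2$. The extra $\ln T$ factor from $E_t$ is no longer needed, since graph certainty holds already, so $E_t$ (or its $\sqrt{\ln}$ scaling) can be treated as vanishing in weight. Summing over the $N$ base arms gives the main term plus one.

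\textbf{Step 4 (bad events).} We bound the bad events with Hoeffding's inequality applied to each of the $q$ moments and each of the $s$ arms in play, for both the optimal and the chosen arm, which gives $\sum_t 2qs\,t^{-2}\le \frac{2\pi^2}{3}qs$ per base arm. The $+1$ term coming from external-UCB failures in Theorem \ref{thm:1} disappears here. This yields the stated constant.
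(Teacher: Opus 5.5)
Your route is the paper's: the initialization samples are almost surely in general position, so $\hat{\mathbf{F}}=\mathbf{F}$ after $\sum_i\dim(\mathcal{H}^{(i)})$ rounds. The proof of Theorem~\ref{thm:1} is then rerun with the optimization-uncertainty terms deleted. The genuine gap is Step~2, which you leave open and aim at the wrong target.

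\textbf{Step 2.} The inequality $\mu(\mathbf{x}^\ast)\le I^{\max}_{t+1}(\mathbf{x}^\ast)$ cannot be guaranteed, even with $\hat{\mathbf{r}}^{(t)}=\mathbf{r}$. As the radii shrink, $I^{\max}_{t+1}(\mathbf{x}^\ast)\to J_{t+1}(\mathbf{x}^\ast)$. But $\mu(\mathbf{x}^\ast)-J_{t+1}(\mathbf{x}^\ast)$ is the expected Taylor remainder, which can be positive and as large as $\Gamma^{(q)}_{\max}$. Your two fixes do not work either. Enlarging the radius by $R_{q+1}$ changes the algorithm, and the remainders of $\mathbf{x}^\ast$ and $\mathbf{x}_{t+1}$ are different quantities that need not cancel.

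The missing idea is a reallocation of slack. In (\ref{eq:part1})--(\ref{eq:part3}) the margin $\Delta_{\min}/4+\Gamma^{(q)}_{\max}$ consists of $\Gamma^{(q)}_{\max}$ for the remainder plus $\Delta_{\min}/4$. That second part is reserved solely for the estimation error $\lvert\mathbb{E}[\mathbf{r}_{\mathbf{x}^\ast}(\mathbf{b})-\hat{\mathbf{r}}^{(t)}_{\mathbf{x}^\ast}(\mathbf{b})]\rvert$, i.e.\ event $\mathcal{II}$, controlled via $4G_t<\Delta_{\min}$. Under exact identification this reserve is idle. The standing hypothesis $\Gamma^{(q)}_{\max}<\Delta_{\min}/4$ of Theorem~\ref{thm:1} lets it absorb the remainder, so you can redo the decomposition with margin $\Delta_{\min}/4$ alone.

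\begin{itemize}
\item The first event then reads $I^{\max}_{t+1}(\mathbf{x}^\ast)+\Delta_{\min}/4<\mu(\mathbf{x}^\ast)\le J_{t+1}(\mathbf{x}^\ast)+\Gamma^{(q)}_{\max}$. This forces $I^{\max}_{t+1}(\mathbf{x}^\ast)<J_{t+1}(\mathbf{x}^\ast)$, which is event $\mathcal{I}$, with probability at most $2qs\,t^{-2(s+1)}$. The same holds symmetrically for $\mathbf{x}_{t+1}$.
\item The third event becomes $\Delta(\mathbf{x}_{t+1})\le I^{\max}_{t+1}(\mathbf{x}_{t+1})-I^{\min}_{t+1}(\mathbf{x}_{t+1})+E_{t+1}(\mathbf{x}_{t+1})+\Delta_{\min}/2$. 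It is excluded precisely by $\psi_{\min}^{(q,s)}$ with $\Delta_{\min}$ in place of $\Delta_{\min}-4\Gamma^{(q)}_{\max}$.
\end{itemize}

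This also locates the $+1$ in $2qs+1$ that you drop in Step~4. It is $\mathbb{P}[\mathcal{II}]$, the kernel-estimation failure, not an external-UCB failure. It vanishes because $\mathcal{II}$ is now impossible.

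\textbf{Step 3.} Graph certainty alone does not remove the second logarithm. With $E_t$ as defined (weight $\Delta^I_t$ and factor $\sqrt{\ln(t-1)}$), a threshold $\mathbf{m}_t[i]\ge l$ with $l$ of order $\ln T/{\psi_{\min}^{(q,s)}}^2$ only yields $E_{t+1}(\mathbf{x}_{t+1})\le 2\Delta^I\psi_{\min}^{(q,s)}\sqrt{\ln T}$. That bound does not stay below $\Delta_{\min}/4$ as $T$ grows. You must state explicitly that after the initialization the $\sqrt{\ln(t-1)}$ factor, or $E_t$ itself, is dropped. The paper's ``can be omitted'' does this implicitly.

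\textbf{Step 1.} The direction is reversed. Under column $o$ of the upper triangular $\mathbf{M}$, all nodes with index $>o$, including the parents of $o$, stay at $0$, so $\mathbf{y}[o]=\mathbf{z}[o]$. What must already be known are the edges $\mathbf{F}[k,j]$ with $k<j<o$, which are out-edges of lower-indexed, potentially downstream nodes, as in (\ref{eq:disjoint}). The induction therefore runs over increasing $o$.
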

In contrast to Theorem \ref{thm:4}, the preceding corollary is valid only in the absence of noise or when the graph is known.
%
\subsection{PROOF OF THEOREM IV.3}
\label{sec:ProofThm1}
%
\subsubsection{Notations}
\label{subsec:Notation}
Before proceeding to the proof itself, we introduce some important notations together with their definitions.
We define the \textit{index set} of a decision vector $\mathbf{x} \in \mathcal{X}$ as $\mathcal{I}(\mathbf{x}) = \left\{ i ~|~ \mathbf{x}[i] \neq 0, \forall i \in [N] \right\}$. The confidence bound of base arm $i$ at time $t$ was declared as $\mathbf{C}_{t}[i] = \sqrt{\frac{(s+1) \ln{t}}{\mathbf{m}_{t}[i]}}$. At each time $t$, we collect the unbiased empirical estimates of moments $\hat{\boldsymbol{\phi}}^{(k)}_{t}[i]$ about the center point $\mathbf{a}_{t+1}\in[0,1]^N$ and the calculated confidence bounds $\mathbf{C}_{t}[i]$ of all base arms $i \in [N]$ in vectors $(\hat{\boldsymbol{\phi}}^{(k)}_{t})_{k\in[q]}$ and $\mathbf{C}_{t}$, respectively. We write $\boldsymbol{\phi}_t^{(k)}$ for the actual moments about $\mathbf{a}_{t+1}$ of the instantaneous rewards. For time step $t+1$, with $k_p = 2^{2\text{ mod }p}$, we set the \textit{internal selection index} for a decision vector $\mathbf{x} \in \mathcal{X}$ to
\begin{align}\nonumber
	&I_{t+1}^{\max}(\mathbf{x})= \\ \nonumber
	&\hspace{2mm}\underset{\mathcal{W}^{(p)}_{t}\in \prod\limits_{i\in[N]}\{\max\{-k_p\mathbf{C}_{t}[i],1-k_p-\hat{\boldsymbol{\phi}}^{(m)}_t\}, \min\{k_p\mathbf{C}_{t}[i],1-\hat{\boldsymbol{\phi}}^{(m)}_t\}\}}{\max} \\ \nonumber
        &\hspace{5mm}\sum_{\lvert{\boldsymbol{\alpha}}\rvert\leq q} \frac{D^{{\boldsymbol{\alpha}}}\hat{\mathbf{r}}^{(t)}_{\mathbf{x}}(\mathbf{a}_{t+1})}{{\boldsymbol{\alpha}}!} \prod_{j\in\mathcal{I}(\mathbf{x})}\left(\hat{\boldsymbol{\phi}}^{({\boldsymbol{\alpha}}[j])}_{t}[j] + \mathcal{W}^{({\boldsymbol{\alpha}}[j])}_{t}[j]\right)
\end{align}
and $I^{\min}_{t+1}(\mathbf{x})$ analogously. Here, we w.l.o.g. restricted $\hat{\boldsymbol{\phi}}^{(k)}_{t}[i] + \mathcal{W}^{(k)}_{t}[i]$ to the range $[-1,1]$ of the actual moments.
Let $\mathcal{W}^{(p)_{\max}}_{t-1}(\mathbf{x})$ and $-\mathcal{W}^{(p)_{\min}}_{t-1}(\mathbf{x})$, $p\in[q]$, be confidence scores that realize $I^{\max}_t(\mathbf{x})$ and $I^{\min}_t(\mathbf{x})$ respectively. We call
\begin{align}\nonumber
    E_{t+1}(\mathbf{x}) = \Delta_{t+1}^{I}\underset{i\in[N]}{\max}~\mathbf{x}[i]4\mathbf{C}_{t}[i]\sqrt{\ln(t)}
\end{align}
the (weighted) \textit{external selection index}.

At time $t$, the algorithm selects a decision vector $\mathbf{x}_t$ that maximizes $I^{\max}_t+E_{t}$. For ease of presentation, in the sequel, we use the equivalence ${\bf 1}^{\top} (\mathbf{I} - \hat{\mathbf{F}}_{t-1})^{-1} \hat{\mathbf{G}}_{t-1}(\text{diag}(\mathbf{b}) \mathbf{x}_{t}) = {\bf 1}^{\top} (\mathbf{I} - \hat{\mathbf{F}}_{t-1})^{-1} \hat{\mathbf{G}}_{t-1}\text{diag}(\mathbf{x}_{t}\cdot\text{id}) (\mathbf{b})$ for all $\mathbf{b}\in[0,1]^N$.
To simplify the notation, sometimes we drop the time index $t$ in $\mathbf{m}_{t}[i]$ and use $\mathbf{m}[i]$ to denote the number of times that the base arm $i$ has been observed up to the current time instance. Since the choice of center point for the Taylor series expansion at each time step is free, we also use $\mathbf{a}$ and $\boldsymbol{\phi}^{(k)}$ in lieu of $\mathbf{a}_{t+1}$ and $\boldsymbol{\phi}^{(k)}_{t}$, respectively.

For any $\mathbf{x} \in \mathcal{X}$, we use the counter $\mathcal{T}_{\mathbf{x}}(t)$ to represent the total number of times the decision vector $\mathbf{x}$ has been selected up to time $t$. Finally, for each base arm $i \in [N]$, we define a counter $\mathscr{T}_{i}(t)$ which is updated as follows. At each time $t > T_{\mathcal{B},\mathcal{H}_{\theta},N}$, after a starting point $T_{\mathcal{B},\mathcal{H}_{\theta},N}\in\mathbb{N}$ to be specified later on, that a suboptimal decision vector $\mathbf{x}_{t}$ is selected, we have at least one base arm $i \in [N]$ such that $i = \underset{i \in \mathcal{I}(\mathbf{x}_{t})}{\arg\min} ~\mathbf{m}_{t}[i]$. In this case, if the base arm $i$ is unique, we increment $\mathscr{T}_{i}(t)$ by 1. If there is more than one such base arm, we break the tie and select one of them arbitrarily to increment its corresponding counter.

For a distribution $\mathcal{D}$ on a Lebesgue measurable space $(S,\Sigma,\mu)$ and analytic functions $h, g: S\rightarrow\mathbb{R}$, we call
\begin{align}\nonumber
	d_{\mathcal{D}}(h,g) &= \sqrt{\mathbb{E}_{\mathbf{d}\sim\mathcal{D}}[(h(\mathbf{d})-g(\mathbf{d}))^2]} \\\nonumber
	&= \sqrt{\int_{S} \text{p.d.f.}_{\mathcal{D}}(\mathbf{d}) (h-g)(\mathbf{d})^2 d\mathbf{d}}
\end{align}
the distance between $h$ and $g$ respective $\mathcal{D}$. We note that on finite dimensional spaces of analytic functions and for $\mu(\text{supp}(\mathcal{D}))>0$, the corresponding weighted $L^2$-seminorm is a true norm, because the intersection of distinct analytic functions has measure $0$. In principle, this would allow for norm equivalence arguments in the optimization part of the proof. Since we need to track dependencies on $T$ or $N$ however, we will state equivalence factors explicitly.  
\subsubsection{Auxiliary Results }
\label{subsec:aux}
We use the following lemmata in the proof of Theorem 1.
\begin{lemma}
{\textup{\cite{Azuma67:WSC}}}
\label{lem:Hoeffding}
Let $z_{1}, z_{2}, \dots, z_{m}$ be random variables, $c,d\in \mathbb{R}$ and $z_{i} \in [c,d]$, $\forall i$. Moreover, $\mathbb{E}[z_{t} | z_{1}, \dots, z_{t-1}] = \gamma$, for all $t = 1, \dots, m$. Then, for all $D \geq 0$,
\begin{equation}
     \mathbb{P} {\Bigg[} {\Big |} \sum_{i = 1}^{m} z_{i} - m \gamma {\Big |} \geq D {\Bigg]} \leq 2 e^{- \frac{2 D^{2}}{m (d-c)^2} }.
\end{equation}
\end{lemma}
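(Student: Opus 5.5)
The plan is the standard Chernoff--Cramér argument for bounded martingale differences. The one ingredient to prove is a conditional version of Hoeffding's lemma; the rest is bookkeeping with the tower property.

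Let $\mathcal{F}_{t}=\sigma(z_1,\dots,z_t)$ and $d_t = z_t-\gamma$, so that $S_m=\sum_{i=1}^m z_i - m\gamma=\sum_{t=1}^m d_t$. By assumption, $\mathbb{E}[d_t\mid\mathcal{F}_{t-1}]=0$, so $(S_t)_t$ is a martingale with respect to $(\mathcal{F}_t)_t$. Moreover, conditionally on $\mathcal{F}_{t-1}$, each $d_t$ takes values in an interval of length $d-c$, namely $[c-\gamma,d-\gamma]$.

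\textbf{Step 1 (conditional Hoeffding lemma).} I will show that for every $\lambda\in\mathbb{R}$,
\begin{equation}\nonumber
\mathbb{E}[e^{\lambda d_t}\mid\mathcal{F}_{t-1}]\le e^{\lambda^2(d-c)^2/8}.
\end{equation}
Writing $u=c-\gamma$ and $v=d-\gamma$, convexity of $x\mapsto e^{\lambda x}$ on $[u,v]$ gives
\begin{equation}\nonumber
e^{\lambda x}\le \frac{v-x}{v-u}e^{\lambda u}+\frac{x-u}{v-u}e^{\lambda v}.
\end{equation}
Taking conditional expectations and using the zero conditional mean, the right-hand side becomes $e^{L(h)}$, where $h=\lambda(v-u)$ and $L(h)=-hp+\ln(1-p+pe^{h})$ with $p=-u/(v-u)\in[0,1]$. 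One checks $L(0)=L'(0)=0$ and $L''\le 1/4$. Taylor's theorem then gives $L(h)\le h^2/8$, which is the claim.

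\textbf{Step 2 (iterate).} By the tower property and Step 1,
\begin{equation}\nonumber
\mathbb{E}[e^{\lambda S_m}]=\mathbb{E}\big[e^{\lambda S_{m-1}}\,\mathbb{E}[e^{\lambda d_m}\mid\mathcal{F}_{m-1}]\big]\le e^{\lambda^2(d-c)^2/8}\,\mathbb{E}[e^{\lambda S_{m-1}}].
\end{equation}
Inducting down to $S_0=0$ yields $\mathbb{E}[e^{\lambda S_m}]\le e^{m\lambda^2(d-c)^2/8}$.

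\textbf{Step 3 (Chernoff and two tails).} For $\lambda>0$, Markov's inequality gives
\begin{equation}\nonumber
\mathbb{P}[S_m\ge D]\le e^{-\lambda D+m\lambda^2(d-c)^2/8}.
\end{equation}
Choosing $\lambda=4D/(m(d-c)^2)$ gives the bound $e^{-2D^2/(m(d-c)^2)}$. The same argument applied to $-d_t$, which also has zero conditional mean and range $d-c$, bounds $\mathbb{P}[S_m\le -D]$ by the same quantity. A union bound over the two tails gives the factor $2$. The degenerate cases $D=0$ (the bound is at least $1$) and $d=c$ (then $S_m=0$ almost surely) are trivial.

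The main obstacle is Step 1. The calculus bound $L''(h)=\rho(1-\rho)\le 1/4$, with $\rho=pe^h/(1-p+pe^h)$, must be carried out. Care is also needed that the zero conditional mean is used conditionally rather than only in expectation, since the variables need not be independent. This is exactly why the statement assumes a constant conditional mean $\gamma$ and not merely $\mathbb{E}[z_t]=\gamma$.
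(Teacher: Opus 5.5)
Your proof is correct and complete. The paper gives no argument of its own and simply cites \cite{Azuma67:WSC}; your conditional Hoeffding lemma, tower-property iteration and Chernoff optimization is the standard proof of exactly this martingale bound. Note that it is your range-based Step~1 that delivers the stated exponent $2D^2/(m(d-c)^2)$ for every $\gamma\in[c,d]$: the bounded-increment form $|z_t-\gamma|\le K$ usually attributed to Azuma only gives $2e^{-D^2/(2mK^2)}$ with $K=\max\{\gamma-c,d-\gamma\}\ge (d-c)/2$, which matches the statement only when $\gamma$ is the midpoint of $[c,d]$.
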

\begin{lemma}
\label{lem:dish}
Let $\mathcal{H}\subseteq L^2(S,\mu)$ be a vector space of analytic functions and $h\in\mathcal{H}$ map from the set of a Lebesgue measure space $(S,\Sigma,\mu)$ to $\mathbb{R}$. Let $\mathcal{B}_1$ and $\mathcal{B}_2$ be distributions on subsets $S_1$ and $S_2$ of $S$, respectively. Further, let $l_{\mathcal{B}_1}$ be a lower bound to the existing p.d.f. of $\mathcal{B}_1$, $\mu(S_1)>0$ and $(e_1, \dots, e_l)$ be an orthonormal basis of $\mathcal{H}$ relative $S_1$. Then $\mathbb{E}_{\mathcal{B}_2}[h^2] \leq \mathbb{E}_{\mathcal{B}_1}[h^2](\sum_{i=1}^{l}\sqrt{\mathbb{E}_{\mathcal{B}_2}[e_i^2]})^2/l_{\mathcal{B}_1}$ holds true.
\end{lemma}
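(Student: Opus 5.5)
The plan is to reduce the inequality to a comparison of Gram matrices in a finite-dimensional space. Assume first that $\mathcal{H}$ is finite dimensional with dimension $l$, as the orthonormal basis $(e_1,\dots,e_l)$ presupposes. Orthonormality is understood with respect to the $L^2(S_1,\mu)$ inner product $\langle f,g\rangle_{S_1}=\int_{S_1} fg\,d\mu$. Expand $h=\sum_{i=1}^{l} c_i e_i$, so that by Parseval $\|h\|_{L^2(S_1,\mu)}^2=\sum_i c_i^2$.

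\emph{Lower bound on $\mathbb{E}_{\mathcal{B}_1}[h^2]$.} Since the p.d.f. of $\mathcal{B}_1$ is at least $l_{\mathcal{B}_1}$ on $S_1$,
\[\mathbb{E}_{\mathcal{B}_1}[h^2]=\int_{S_1}\text{p.d.f.}_{\mathcal{B}_1}h^2\,d\mu\geq l_{\mathcal{B}_1}\sum_i c_i^2 .\]
The hypothesis $\mu(S_1)>0$, together with analyticity, guarantees that the $L^2(S_1)$ seminorm is a genuine norm on $\mathcal{H}$. This is the remark made just before the lemma, and it is exactly what is needed for an orthonormal basis relative to $S_1$ to exist.

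\emph{Upper bound on $\mathbb{E}_{\mathcal{B}_2}[h^2]$.} Write $\sqrt{\mathbb{E}_{\mathcal{B}_2}[h^2]}=\|h\|_{L^2(\mathcal{B}_2)}$. The triangle inequality (Minkowski) gives
\[\|h\|_{L^2(\mathcal{B}_2)}\leq \sum_i |c_i|\,\|e_i\|_{L^2(\mathcal{B}_2)} .\]
Using $|c_i|\leq \big(\sum_j c_j^2\big)^{1/2}$ for each $i$ then yields
\[\mathbb{E}_{\mathcal{B}_2}[h^2]\leq \Big(\sum_j c_j^2\Big)\Big(\sum_i \sqrt{\mathbb{E}_{\mathcal{B}_2}[e_i^2]}\Big)^2 .\]
Cauchy--Schwarz would give the sharper factor $\sum_i \mathbb{E}_{\mathcal{B}_2}[e_i^2]$, but the cruder version suffices because it is exactly the stated form. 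Combining this with the first step, $\sum_j c_j^2\leq \mathbb{E}_{\mathcal{B}_1}[h^2]/l_{\mathcal{B}_1}$, gives the claim.

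The main obstacle is making these steps rigorous under the loose hypotheses. Three points need care.
\begin{itemize}
\item The $e_i$ must lie in $L^2(\mathcal{B}_2)$; if some $\mathbb{E}_{\mathcal{B}_2}[e_i^2]$ is infinite, the right-hand side is infinite and the claim holds trivially.
\item Only the values of $h$ on $S_2$ matter, so the expansion $h=\sum_i c_ie_i$ must hold as an identity of functions, not merely $\mu$-a.e. on $S_1$. Analyticity gives this: if two analytic functions in $\mathcal{H}$ agree on a set of positive measure, their difference vanishes on that set and hence everywhere (identity theorem), so agreement extends to $S_2$.
\item The lower bound on the density must hold on all of $S_1$, which I will read into the definition of $l_{\mathcal{B}_1}$.
\end{itemize}

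If $\mathcal{H}$ is infinite dimensional, the same argument works with $l=\infty$, with the right-hand side read as possibly infinite. The finite-dimensional setting of Assumption \ref{asp:fin} is where the lemma will actually be used.
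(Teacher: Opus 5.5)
Your proof is correct and follows the paper's argument step by step. You expand $h=\sum_i c_ie_i$ in the $S_1$-orthonormal basis, use the density lower bound with Parseval to get $\mathbb{E}_{\mathcal{B}_1}[h^2]\geq l_{\mathcal{B}_1}\sum_i c_i^2$, apply Minkowski in $L^2(\mathcal{B}_2)$, and bound each $|c_i|$ by $(\sum_j c_j^2)^{1/2}$, which is the paper's $\max_i k_i^2\leq\sum_i k_i^2$.

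One caution concerns your closing aside, which lies outside the lemma since its basis is finite. With an infinite Hilbert-space orthonormal basis, the expansion holds only in $L^2(S_1,\mu)$ and need not persist on $S_2$, so the argument does not carry over. For example, take $\mathcal{H}$ to be the polynomials on $[0,3]$ and apply Gram--Schmidt to $\{(x-2)x^k\}_{k\geq0}$ in $L^2([0,1])$; this gives a complete orthonormal system of polynomials vanishing at $2$. Then $h\equiv1$, $\mathcal{B}_1$ uniform on $[0,1]$ and $\mathcal{B}_2=\delta_2$ give left-hand side $1$ and right-hand side $0$, violating the inequality.
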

\textit{Proof.} For the basis representation $h=\sum_{i=1}^{l}k_ie_i$ we have
\begin{align}\nonumber
	 \mathbb{E}_{\mathcal{B}_1}[h(b)^2] &\stackrel{(a)}{=} \int_{S_1}\left(\sum_{i=1}^{l}k_ie_i(b)\right)^2\text{p.d.f.}_{\mathcal{B}_1}(b)db \\\nonumber
	 &\geq l_{\mathcal{B}_1}\int_{S_1}\left(\sum_{i=1}^{l}k_ie_i(b)\right)^2db \\\label{eq:maxnorm}
	 &\stackrel{(c)}{=}l_{\mathcal{B}_1}\sum_{i=1}^{l}k_i^2 \geq l_{\mathcal{B}_1}~{\underset{i\in[l]}{\max}~k_i^2},
\end{align}
where $(c)$ is due to the orthonormality of the basis over $S_1$ and $(a)$ obeys the law of the unconscious statistician. This bound to the maximum norm allows us to conclude
\begin{align}\nonumber
	\sqrt{\mathbb{E}_{\mathcal{B}_2}[h(b)^2]} &\stackrel{(a)}{\leq} \sum_{i=1}^{l}\lvert k_i\rvert\sqrt{\mathbb{E}_{\mathcal{B}_2}[e_i(b)^2]} \\\nonumber
	&\stackrel{(\ref{eq:maxnorm})}{\leq} \sqrt{\frac{1}{l_{\mathcal{B}_1}} \mathbb{E}_{\mathcal{B}_1}[h(b)^2]}\sum_{i=1}^{l}\sqrt{\mathbb{E}_{\mathcal{B}_2}[e_i(b)^2]},
\end{align}
with $(a)$ derived from the Minkowski inequality. \hfill $\square$

Lastly, we note that, since summation and concatenation preserve the analytic property, it is maintained by the vector functions $\mathbf{I} - \hat{\mathbf{F}}_{t}$ on $[0,1]^N$. Their Jacobians are upper triangular matrices with ones on their diagonal and thus of full rank. By Lagrange's inversion theorem, the analytic property propagates to the inverses $(\mathbf{I} - \hat{\mathbf{F}}_{t})^{-1}$ and hence the empiric reward functions $\hat{\mathbf{r}}^{(t)}_{\mathbf{x}} = {\bf 1}^{\top} (\mathbf{I} - \hat{\mathbf{F}}_{t})^{-1} \hat{\mathbf{G}}_t\text{diag}(\mathbf{x}\cdot\text{id})$ as well. In particular, each partial derivative $D^{\boldsymbol{\alpha}}\hat{\mathbf{r}}^{(t)}_{\mathbf{x}_{t+1}}$ with ${\boldsymbol{\alpha}}\in\mathbb{N}^N$ is uniformly continuous on the compact $N$-dimensional interval $\prod_{i\in[N]}[-\mathbf{C}_t[i], 1+\mathbf{C}_t[i]]$, a fact we will use implicitly throughout the proof.
%
%
\subsubsection{Proof}
\label{subsec:Proof1}
The proof follows the same basic outline as in \textup{\cite{ijcai2022p676}}, but the analysis of uncertainties and nonlinearities requires substantial extensions and modifications. Here, we will focus on SSEM-UCB, and defer the changes required for SSEM-UCB-JO to Section \ref{proof:thm:2}. We start by rewriting the expected regret as
\begin{align} \nonumber
    \mathcal{R}(T) 
    &= T \mu(\mathbf{x}^{\ast}) - \sum_{t = 1}^{T} \mu(\mathbf{x}_{t}) \\
    &= \sum_{\mathbf{x}:\mu(\mathbf{x}) < \mu(\mathbf{x}^{\ast})}^{} \Delta(\mathbf{x}) \mathbb{E} [\mathcal{T}_{\mathbf{x}}(T)].
\end{align}
Based on the definition of the counters $\mathscr{T}_{i}(t)$ for the base arms $i \in [N]$, at each time $t$ that a suboptimal decision vector is selected, only one of such counters is incremented by $1$. Thus, we have \cite{Gai12:CNO}
\begin{align} \nonumber
    \mathbb{E}\left[ \sum_{\mathbf{x}:\mu(\mathbf{x}) < \mu(\mathbf{x}^{\ast})} \mathcal{T}_{\mathbf{x}}(t) \right]
    \leq T_{\mathcal{B},\mathcal{H}_{\theta},N} + \mathbb{E} \left[\sum_{i = 1}^{N} \mathscr{T}_{i}(t)\right ],
\end{align}
which implies that
\begin{align} \nonumber
    \sum_{\mathbf{x}:\mu(\mathbf{x}) < \mu(\mathbf{x}^{\ast})}^{} \mathbb{E} \left[\mathcal{T}_{\mathbf{x}}(t)\right]
    \leq T_{\mathcal{B},\mathcal{H}_{\theta},N} + \sum_{i = 1}^{N} \mathbb{E} \left[\mathscr{T}_{i}(t)\right].
\end{align}
Therefore, we observe that
\begin{align} \nonumber
    \mathcal{R}(T)
    &= \sum_{\mathbf{x}:\mu(\mathbf{x}) < \mu(\mathbf{x}^{\ast})}^{} \Delta(\mathbf{x}) \mathbb{E} [\mathcal{T}_{\mathbf{x}}(T)] \\
    &\stackrel{(\ast)}{\leq} \Delta_{\max} T_{\mathcal{B},\mathcal{H}_{\theta},N} + \Delta_{\max} \sum_{i = 1}^{N} \mathbb{E} [\mathscr{T}_{i}(T)],
\end{align}
where $(\ast)$ follows from the definition of $\Delta_{\max}$.

Let $\mathds{I}_{i}(t)$ denote the indicator function which is equal to $1$ if $\mathscr{T}_{i}(t)$ is increased by $1$ at time $t$, and is $0$ otherwise. Consequently,
\begin{align} \nonumber
    \mathscr{T}_{i}(T) = \sum_{t = T_{\mathcal{B},\mathcal{H}_{\theta},N}+1}^{T} \mathds{1}\left\{ \mathds{I}_{i}(t) = 1 \right\}.
\end{align}
If $\mathds{I}_{i}(t) = 1$, it means that a suboptimal decision vector $\mathbf{x}_{t}$ is selected at time $t$. In this case, $\mathbf{m}_{t}[i] = \min \left\{ \mathbf{m}_{t}[j] | j \in \mathcal{I}(\mathbf{x}_{t}) \right\}$.
Let \[w_{\max} = \underset{\tau\in[T]}{\max}~\underset{\lvert{\boldsymbol{\alpha}}\rvert\leq q+1}{\max}~\lvert D^{{\boldsymbol{\alpha}}}\hat{\mathbf{r}}^{(\tau-1)}_{\mathbf{x}_{\tau}}(\mathbf{a}_{\tau})\rvert,\] \[\psi_{\min} = {\scriptstyle\min~\left\{\frac{\min\{q,s\}}{w_{\max}(q+1)s^{\min\{q,s\}+1}}, \frac{1}{\Delta^{I}}\right\}}\frac{\Delta_{\min} - 4\Gamma^{(q)}_{\max}}{8}\] and \[l = \left \lceil {\frac{4 (s+1) \ln^{2}{T}}{ \psi_{\min}^{2} }} \right \rceil.\]
Then,
\begin{align} \nonumber
   &\mathscr{T}_{i}(T) 
   = \sum_{t = T_{\mathcal{B},\mathcal{H}_{\theta},N}+1}^{T} \mathds{1}\left\{ \mathds{I}_{i}(t) = 1 \right\} \\ \nonumber
   &\leq l + 
    \sum_{t = T_{\mathcal{B},\mathcal{H}_{\theta},N}+1}^{T} \mathds{1}\left\{ \mathds{I}_{i}(t) = 1 ~\&~ \mathscr{T}_{i}(t-1) \geq l \right\} \\ \nonumber
   &\leq l + \sum_{t = T_{\mathcal{B},\mathcal{H}_{\theta},N}+1}^{T} \mathds{1}\{ I^{\max}_{t}(\mathbf{x}^{\ast}) + E_t(\mathbf{x}^{\ast}) \\\nonumber
   &\hspace{15mm}\leq I^{\max}_{t}(\mathbf{x}_{t}) + E_t(\mathbf{x}_t) ~\&~ \mathscr{T}_{i}(t-1) \geq l\} \\ \nonumber
   &= l + \sum_{t = T_{\mathcal{B},\mathcal{H}_{\theta},N}}^{T} \mathds{1}\{ I^{\max}_{t+1}(\mathbf{x}^{\ast}) + E_{t+1}(\mathbf{x}^{\ast}) \\\nonumber
   &\hspace{15mm}\leq I^{\max}_{t+1}(\mathbf{x}_{t+1}) + E_{t+1}(\mathbf{x}_{t+1}) ~\&~ \mathscr{T}_{i}(t) \geq l\}.
\end{align}
Based on the definition of $\mathscr{T}_{i}(t)$, we have $\mathscr{T}_{i}(t) \leq \mathbf{m}_{t}[i]$, $\forall i \in [N]$. Hence, when $\mathscr{T}_{i}(t) \geq l$, we know that \cite{Gai12:CNO}
\begin{align} \nonumber
    l \leq \mathscr{T}_{i}(t) \leq \mathbf{m}_{t}[j], ~~~~ \forall j \in \mathcal{I}(\mathbf{x}_{t+1}).
\end{align}
Let $\mathbf{v}_{t+1}^{\top} = {\bf 1}^{\top} (\mathbf{I} - \hat{\mathbf{F}}_{t})^{-1} \hat{\mathbf{G}}_{t}$. We order the elements in sets $\mathcal{I}(\mathbf{x}^{\ast})$ and $\mathcal{I}(\mathbf{x}_{t+1})$ arbitrarily. In the following, our results are independent of the way we order these sets. Let $v_{k}$, $k = 1, \dots, |\mathcal{I}(\mathbf{x}^{\ast})| \leq s$, represent the $k$th element in $\mathcal{I}(\mathbf{x}^{\ast})$ and $u_{k}$, $k = 1, \dots, |\mathcal{I}(\mathbf{x}_{t+1})| \leq s$, represent the $k$th element in $\mathcal{I}(\mathbf{x}_{t+1})$. Explicitly, we have
{\allowdisplaybreaks
\begin{align}\nonumber
   &\mathscr{T}_{i}(T)
   \leq l + \sum_{t = T_{\mathcal{B},\mathcal{H}_{\theta},N}}^{T} \mathds{1} {\Bigg\{} \min_{0 < \mathbf{m}[v_{1}], \dots, \mathbf{m}[v_{|\mathcal{I}(\mathbf{x}^{\ast})|}] \leq t} \\ \nonumber
   &\hspace{3mm}\sum_{\substack{\lvert{\boldsymbol{\alpha}}\rvert\leq q\\\mathbf{x}^{\ast}\odot{\boldsymbol{\alpha}}={\boldsymbol{\alpha}}}} \frac{D^{{\boldsymbol{\alpha}}}\mathbf{v}_{t+1}^{\top}\text{diag}(\mathbf{x}^{\ast}\cdot\text{id})\left(\mathbf{a}_{t+1})\right)}{{\boldsymbol{\alpha}}!} \\ \nonumber
   &\hspace{3mm}\prod_{j=1}^{\lvert\mathcal{I}(\mathbf{x}^{\ast})\rvert}(\hat{\boldsymbol{\phi}}^{({\boldsymbol{\alpha}}[v_j])}_t[v_j] + \mathcal{W}^{({\boldsymbol{\alpha}}[j])_{\max}}_t(\mathbf{x}^{\ast})[v_j]) + E_{t+1}(\mathbf{x}^{\ast})\\ \nonumber
   &\hspace{3mm}\leq E_{t+1}(\mathbf{x}_{t+1}) + \max_{l \leq \mathbf{m}[u_{1}], \dots,
   \mathbf{m}[u_{|\mathcal{I}(\mathbf{x}_{t+1})|}] \leq t} \sum_{\substack{\lvert{\boldsymbol{\alpha}}\rvert\leq q\\\mathbf{x}_{t+1}\odot{\boldsymbol{\alpha}}={\boldsymbol{\alpha}}}}\\ \nonumber
   &\hspace{3mm}\frac{D^{{\boldsymbol{\alpha}}}\mathbf{v}_{t+1}^{\top}\text{diag}(\mathbf{x}_{t+1}\cdot\text{id})\left(\mathbf{a}_{t+1}\right)}{{\boldsymbol{\alpha}}!} \\ \nonumber
   &\hspace{3mm}\prod_{j=1}^{\lvert\mathcal{I}(\mathbf{x}_{t+1})\rvert}(\hat{\boldsymbol{\phi}}^{({\boldsymbol{\alpha}}[u_j])}_t[u_j] + \mathcal{W}^{({\boldsymbol{\alpha}}[j])\max}_t(\mathbf{x}_{t+1})[u_j]) {\Bigg \}} \\ \nonumber
   &\leq l + \sum_{t=T_{\mathcal{B},\mathcal{H}_{\theta},N}}^{\infty} \sum_{m_{v_{1}}=1}^{t} \dots \hspace{-2mm}\sum_{m_{v_{|\mathcal{I}(\mathbf{x}^{\ast})|}}=1}^{t} \sum_{m_{u_{1}}=l}^{t} \dots \hspace{-3mm}\sum_{m_{u_{|\mathcal{I}(\mathbf{x}_{t+1})|}}=l}^{t} \\ \nonumber
   &\hspace{1mm}\mathds{1} {\Bigg \{} \sum_{\substack{\lvert{\boldsymbol{\alpha}}\rvert\leq q\\\mathbf{x}^{\ast}\odot{\boldsymbol{\alpha}}={\boldsymbol{\alpha}}}}\frac{D^{{\boldsymbol{\alpha}}}\mathbf{v}_{t+1}^{\top}\left(\sum\limits_{j=1}^{|\mathcal{I}(\mathbf{x}^{\ast})|} (\mathbf{a}_{t+1}[v_{j}])\mathbf{I}[v_{j}]^\top\right)}{{\boldsymbol{\alpha}}!} \\ \nonumber
   &\hspace{1mm}\prod_{j=1}^{\lvert\mathcal{I}(\mathbf{x}^{\ast})\rvert}(\hat{\boldsymbol{\phi}}^{({\boldsymbol{\alpha}}[v_j])}_t[v_j] + \mathcal{W}^{({\boldsymbol{\alpha}}[j])\max}_t(\mathbf{x}^{\ast})[v_j]) + E_{t+1}(\mathbf{x}^{\ast}) \\ \nonumber
   &\hspace{3mm}\leq\sum_{\substack{\lvert{\boldsymbol{\alpha}}\rvert\leq q\\\mathbf{x}_{t+1}\odot{\boldsymbol{\alpha}}={\boldsymbol{\alpha}}}}\frac{D^{{\boldsymbol{\alpha}}}\mathbf{v}_{t+1}^{\top}\left(\sum\limits_{j=1}^{|\mathcal{I}(\mathbf{x}_{t+1})|} (\mathbf{a}_{t+1}[u_{j}])\mathbf{I}[u_{j}]^\top\right)}{{\boldsymbol{\alpha}}!} \\\nonumber
   &\hspace{1mm}\prod_{j=1}^{\lvert\mathcal{I}(\mathbf{x}_{t+1})\rvert}(\hat{\boldsymbol{\phi}}^{({\boldsymbol{\alpha}}[u_j])}_t[u_j] + \mathcal{W}^{({\boldsymbol{\alpha}}[j])\max}_t(\mathbf{x}_{t+1})[u_j]) \\\label{eq:event}
   &\hspace{5mm}+ E_{t+1}(\mathbf{x}_{t+1})
   {\Bigg \}}. 
\end{align}}
Given $m_{v_1},\dots,m_{v_{\lvert\mathcal{I}(\mathbf{x}^{\ast})\rvert}}, m_{u_1},\dots,m_{u_{\lvert\mathcal{I}(\mathbf{x}_{t+1})\rvert}}$, we define event $\mathcal{P}$ as
\begin{align} \nonumber
    I^{\max}_{t+1}(\mathbf{x}^{\ast}) + E_{t+1}(\mathbf{x}^{\ast}) \leq I^{\max}_{t+1}(\mathbf{x}_{t+1}) + E_{t+1}(\mathbf{x}_{t+1}).
\end{align}
Event $\mathcal{P}$ would necessitate at least one of the following events to apply:
\begin{align}
    \label{eq:part1}
    &I^{\max}_{t+1}(\mathbf{x}^{\ast}) +  \Delta_{\min}/4 + \Gamma^{(q)}_{\max} < \mu(\mathbf{x}^{\ast}), \\ \label{eq:part2}
    &I^{\min}_{t+1}(\mathbf{x}_{t+1}) -  \Delta_{\min}/4 - \Gamma^{(q)}_{\max} > \mu(\mathbf{x}_{t+1}), \\ \label{eq:part3}
    &\mu(\mathbf{x}^{\ast}) - \mu(\mathbf{x}_{t+1}) \leq I^{\max}_{t+1}(\mathbf{x}_{t+1}) - I^{\min}_{t+1}(\mathbf{x}_{t+1}) \\\nonumber
    &\hspace{10mm} + E_{t+1}(\mathbf{x}_{t+1}) + \Delta_{\min}/2 + 2\Gamma^{(q)}_{\max}.
\end{align}
\\
First, we consider (\ref{eq:part1}).
We set
\begin{align}\label{eq:J_t}
J_t(\mathbf{x}) = \sum_{\lvert{\boldsymbol{\alpha}}\rvert\leq q}\frac{D^{{\boldsymbol{\alpha}}}\hat{\mathbf{r}}^{(t-1)}_\mathbf{x}(\mathbf{a}_{t+1})}{{\boldsymbol{\alpha}}!}\prod_{j\in\mathcal{I}(\mathbf{x})}\left(\boldsymbol{\phi}^{({\boldsymbol{\alpha}}[j])}[j]\right)
\end{align}
and investigate the event $\mathcal{V}$
\begin{align}\nonumber
	I^{\max}_{t+1}(\mathbf{x}^{\ast}) &+ J_{t+1}(\mathbf{x}^{\ast}) +  \Delta_{\min}/4 + \Gamma^{(q)}_{\max} \\\nonumber
	&< J_{t+1}(\mathbf{x}^{\ast}) + \mu(\mathbf{x}^{\ast}).
\end{align}
If $\mathcal{V}$ is true, then at least one of the following must hold.
\begin{align}\nonumber
    &\underbrace{I^{\max}_{t+1}(\mathbf{x}^{\ast}) < J_{t+1}(\mathbf{x}^{\ast})}_{\mathcal{I}}, \\\nonumber
    &\underbrace{J_{t+1}(\mathbf{x}^{\ast}) + \Delta_{\min}/4 + \Gamma^{(q)}_{\max} < \mu(\mathbf{x}^{\ast})}_{\mathcal{II}}.
\end{align}
Therefore, we have
\begin{align}\label{eq:eventE}
    \mathbb{P} \left[ \mathcal{V} \right] \leq \mathbb{P} \left[ \mathcal{I} \right] + \mathbb{P} \left[ \mathcal{II} \right].
\end{align}
For event $\mathcal{I}$ to be true, by selection of $\mathcal{W}^{(m)\max}_t(\mathbf{x}^{\ast})$, there would have to be $k\in[\lvert\mathcal{I}(\mathbf{x}^{\ast})\rvert]$ and $m\in[q]$ with
\begin{align}\label{eq:VarEst}
    2^{2\text{ mod }m}\mathbf{C}_{t}[v_{k}] < \left\lvert\boldsymbol{\phi}^{(m)}[v_{k}] - \hat{\boldsymbol{\phi}}^{(m)}_{t}[v_{k}]\right\rvert.
\end{align}
Considering each case, we have
\begin{align} \nonumber
    &\mathbb{P}{\Big(} 2^{2\text{ mod }m}\mathbf{C}_{t}[v_{k}] < \left\lvert\boldsymbol{\phi}^{(m)}[v_{k}] - \hat{\boldsymbol{\phi}}^{(m)}_{t}[v_{k}]\right\rvert {\Big)} \\ \nonumber
    &\hspace{5mm}= \mathbb{P}{\Big(} 2^{2\text{ mod }m}\mathbf{m}_{t}[v_{k}] \mathbf{C}_{t}[v_{k}] \\ \nonumber
    &\hspace{15mm}< \mathbf{m}_{t}[v_{k}] \left\lvert\boldsymbol{\phi}^{(m)}[v_{k}] - \hat{\boldsymbol{\phi}}^{(m)}_{t}[v_{k}]\right\rvert{\Big)} \\ \nonumber
    &\hspace{5mm}\stackrel{(a)}{\leq} 2 e^{-(2/\mathbf{m}_{t}[v_{k}]) \mathbf{m}_{t}[v_{k}]^{2} \mathbf{C}_{t}[v_{k}]^{2}} \\ \nonumber
    &\hspace{5mm}\stackrel{(b)}{=} 2 e^{-2 (s+1) \ln t} \\
    &\hspace{5mm}= 2 t^{-2 (s+1)},
\end{align}
where $(a)$ follows from Lemma \ref{lem:Hoeffding}, and $(b)$ from the definition of $\mathbf{C}_{t}$. Since this bound applies to each case's probability, for Event $\mathcal{I}$, we conclude that
\begin{align}\label{eq:PI}
    \mathbb{P}\left[\mathcal{I}\right] \leq q|\mathcal{I}(\mathbf{x}^{\ast})| 2 t^{-2 (s+1)} \leq 2 q s t^{-2 (s+1)}.
\end{align}
Now, we investigate Event $\mathcal{II}$:
\begin{align}\nonumber
    &\mathbb{P}\left(\left\lvert \mu(\mathbf{x}^{\ast}) - J_{t+1}(\mathbf{x}^{\ast})\right\rvert >  \Delta_{\min}/4 + \Gamma^{(q)}_{\max}\right) \\ \nonumber
    &\hspace{5mm}\stackrel{(a)}{\leq} \mathbb{P}\left(\left\lvert \mathbb{E}[\mathbf{r}_{\mathbf{x}^{\ast}}(\mathbf{b}) - \hat{\mathbf{r}}^{(t)}_{\mathbf{x}^{\ast}}(\mathbf{b})]\right\rvert >  \Delta_{\min}/4\right) \\ \nonumber
    &\hspace{5mm}= \mathbb{P}\left(\left( \mathbb{E}[\mathbf{r}_{\mathbf{x}^{\ast}}(\mathbf{b}) - \hat{\mathbf{r}}^{(t)}_{\mathbf{x}^{\ast}}(\mathbf{b})]\right)^2 >  \Delta_{\min}^2/16\right) \\ \label{eq:ev2opt}
    &\hspace{5mm}\stackrel{(b)}{\leq} \mathbb{P}\left(\mathbb{E}[(\mathbf{r}_{\mathbf{x}^{\ast}}(\mathbf{b}) - \hat{\mathbf{r}}^{(t)}_{\mathbf{x}^{\ast}}(\mathbf{b}))^2] >  \Delta_{\min}^2/16\right),
\end{align}
where $(a)$ results from the definition of $\Gamma^{(q)}_{\max}$, and $(b)$ is implied by Jensen's inequality. Next, we will derive an ordinal bound to this probability from our optimization guarantees.

We interpret the exogenous vectors observed at time steps before $t$ when base action $o\in[N]$ was part of the super arm as sampled from the empirical distribution $\mathcal{B}_t^{(o)}$, defined by $\mathbb{P}(\mathbf{Z}=\mathbf{z} | \hat{t}\sim\text{Uni}(\{\overline{t}\in[t]|o\in\mathcal{I}(\mathbf{x_{\overline{t}}})\}, \mathbf{z}\sim\mathcal{B}^{\mathbf{x}_{\hat{t}}})$.
For $i\in[N]$ we write
\begin{align}\nonumber
    \hat{\mathbf{F}}_{t}^{(o)}[i] &= \underset{h\in\mathcal{H}_{\theta}^{(i)}}{\arg\min}~\sum_{\tau=1}^{t}(h(\mathbf{y}(\mathbf{z}_{\tau}))-\mathbf{F}(\mathbf{y}(\mathbf{z}_{\tau}))[i])^2 \\\nonumber
    &=\underset{h\in\mathcal{H}_{\theta}^{(i)}}{\arg\min}~\sum_{\tau=1}^{t}(h(\mathbf{y}(\mathbf{z}_{\tau}))-\mathbf{y}(\mathbf{z}_{\tau})[i]+\mathbf{z}_{\tau}[i])^2
\end{align}
for the solution to the minimization problem with norm constraint $\theta$ (see Algorithm \ref{Alg:SSEM-UCB}). A Rademacher complexity analysis of our kernel optimization (as described in Prop. 2 of \textup{\cite{Bertsimas21:DDO}}) results in an accuracy bound of
\begin{align}\label{eq:rad}
   \varrho_t &\geq \mathbb{P}(\mathbb{E}_{\mathcal{B}_t^{(o)}} [(\hat{\mathbf{F}}_{t}^{(o)}(\mathbf{y}(\mathbf{b}))[i]-\mathbf{F}(\mathbf{y}(\mathbf{b}))[i])^2] \\\nonumber
    &- \mathbb{E}_{\mathcal{B}_t^{(o)}} [(\mathbf{F}(\mathbf{y}(\mathbf{b}))[i]-\mathbf{F}_{\mathcal{B}_{t}^{(o)}}(\mathbf{y}(\mathbf{b}))[i])^2]> H_{t}^{(o)})
\end{align}
with $\mathbf{F}_{\mathcal{B}_t^{(o)}}[i]=\underset{h\in\mathcal{H}^{(i)}_{\theta}}{\arg\min}~d_{\mathcal{B}_t^{(o)}}(h\mathbf{y}, \mathbf{F}\mathbf{y}[i])$, $\varrho_t=1-\sqrt[N^6]{1-t^{-2(s+1)}}$, \[H_{t}^{(o)} = M_1\frac{1}{\sqrt{\mathbf{m}_t[o]}} + M_2\sqrt{\frac{\ln (2/\varrho_t)}{2\mathbf{m}_t[o]}},\] where $M_1 = 4(\kappa\sqrt{\theta}+\underset{\mathbf{b}\in[0,1]^N}{\max}\mathbf{r}(\mathbf{b}))\kappa\sqrt{\theta}$, $M_2 = 2(\kappa\sqrt{\theta}+\underset{\mathbf{b}\in[0,1]^N}{\max}\mathbf{r}(\mathbf{b}))^2$, $\kappa=\sqrt{\underset{b\in[0,1]}{\sup} K(b,b)}$ and $K$ is the positive semi-definite kernel associated with the RKHS $\mathcal{H}^{(i)}$. Since $K$ is continuous, a finite supremum exists over the closed interval domain of functions in $\mathcal{H}^{(i)}$. Even though the objective gap \[\varphi_{\mathcal{H}_{\theta}} = \underset{i\in[N]}{\max}\underset{\hat{\mathbf{F}}[i]\in\mathcal{H}_{\theta}^{(i)}}{\min}\underset{\mathbf{x}\in\mathcal{X}}{\max} ~\sqrt{\mathbb{E}[(\mathbf{F}-\hat{\mathbf{F}})(\mathbf{y}_{\mathbf{x}}(\mathbf{b}))[i]^2]}\] is zero as by the assumption of Theorem \ref{thm:1}, we treat it as a variable to support the generalizations in Theorem \ref{thm:4}.

For any $\mathbf{x}$ we can represent $\mathcal{B}^{\mathbf{x}}$ as the distribution induced by $\mathcal{B}$ over $[0,1]^{\mathbf{x}}=\bigtimes\limits_{i\in[N]}[0,1]^{\lvert\mathcal{I}(\mathbf{x})\cap\{i\}\rvert}$, which is nonzero almost everywhere. In particular, $\tilde{\mathcal{B}_o}\coloneqq\mathcal{B}^{\mathbf{I}[o]^{\top}}$ assigns $0$ in every dimension other than the $o$th.
Taking into account that $d_{\mathcal{B}_t^{(o)}}(\mathbf{F}\mathbf{y}[i],\mathbf{F}_{\mathcal{B}_t^{(o)}}\mathbf{y}[i])\leq d_{\mathcal{B}_t^{(o)}}(\mathbf{F}\mathbf{y}[i],\mathbf{F}_{\mathcal{H}_{\theta}}\mathbf{y}[i])\leq\varphi_{\mathcal{H}_{\theta}}$ for a realization $\mathbf{F}_{\mathcal{H}_{\theta}}$ of $\varphi_{\mathcal{H}_{\theta}}$, we deduce that
\begin{align}\nonumber
   \varrho_t &\stackrel{(\ref{eq:rad})}{\geq}\mathbb{P}(\underset{\mathbf{x}\in\mathcal{X}:o\in\mathcal{I}(\mathbf{x})}{\min}\mathbb{E}_{\mathcal{B}^{\mathbf{x}}} [(\hat{\mathbf{F}}_{t}^{(o)}(\mathbf{y}(\mathbf{b}))[i]-\mathbf{F}(\mathbf{y}(\mathbf{b}))[i])^2] \\\nonumber
    &\hspace{11mm}> H_t^{(o)}+\varphi_{\mathcal{H}_{\theta}}^2)\\\nonumber
    &\geq \mathbb{P}(\mathbb{E}_{\tilde{\mathcal{B}}_{o}} [(\hat{\mathbf{F}}_{t}^{(o)}(\mathbf{y}(\mathbf{b}))[i]-\mathbf{F}(\mathbf{y}(\mathbf{b}))[i])^2] \\\label{eq:EF}
    &\hspace{11mm}> \frac{u_{\mathcal{B}}}{l_{\mathcal{B}}^s}(H_t^{(o)}+\varphi_{\mathcal{H}_{\theta}}^2)),
\end{align}
where $u_{\mathcal{B}}, l_{\mathcal{B}} > 0$ are an upper respective lower bound on the factors $(\sum_{i=1}^{l}\sqrt{\mathbb{E}_{\tilde{\mathcal{B}}_{o}}[e_i^2]})^2$, $l_{\mathbf{y}(\mathcal{B}^\mathbf{x})}$ in Lemma \ref{lem:dish} for all distributions $\mathbf{y}(\mathcal{B}^\mathbf{x})$, $\mathbf{x}\in\mathcal{X}$, $o\in[N]$. Had we not fixed the exogenous operator $\mathbf{G}=\mathbf{I}$, we would jointly optimize for its prediction alongside $\mathbf{F}$ and directly gain a prediction certainty for $\mathbf{G}[o,o]$ regarding $\tilde{\mathcal{B}}_{o}$, which we would utilize analogously in subsequent steps.

Since $F$ is not cyclic, the SEM describes the recursive formula of backward substitution:
\begin{equation}\label{eq:bs}
    (\mathbf{I}-\mathbf{F})^{-1}(\mathbf{b})[i] = \mathbf{b}[i]+\sum_{j=i+1}^{N}f_{ij}\left((\mathbf{I}-\mathbf{F})^{-1}(\mathbf{b})[j]\right),
\end{equation}
where $i=N,\dots,1$, $\mathbf{b}\sim\mathcal{B}$.
As such, we can express the expectation for $\hat{\mathbf{y}}^{(o)}_t = (\mathbf{I}-\hat{\mathbf{F}}^{(o)}_t)^{-1}$ with
\begin{align}\label{eq:Ey}
	&d_{\tilde{\mathcal{B}}_{o}}(\hat{\mathbf{y}}^{(o)}_t[i],\mathbf{y}[i]) \\\nonumber
	&= d_{\tilde{\mathcal{B}}_{o}}(\hat{\mathbf{F}}^{(o)}_t\hat{\mathbf{y}}^{(o)}_{t}[i],\mathbf{F}\mathbf{y}[i]) \\\nonumber
	&\leq d_{\tilde{\mathcal{B}}_{o}}(\hat{\mathbf{F}}^{(o)}_t\mathbf{y}[i],\mathbf{F}\mathbf{y}[i]) + d_{\tilde{\mathcal{B}}_{o}}(\hat{\mathbf{F}}^{(o)}_t\hat{\mathbf{y}}^{(o)}_{t}[i],\hat{\mathbf{F}}^{(o)}_{t}\mathbf{y}[i]) \\\nonumber
	&\stackrel{(a)}{\leq} d_{\tilde{\mathcal{B}}_{o}}(\hat{\mathbf{F}}^{(o)}_{t}\mathbf{y}[i], \mathbf{F}\mathbf{y}[i]) + \eta\sum\limits_{j=i+1}^{o-1}d_{\tilde{\mathcal{B}}_{o}}(\hat{\mathbf{y}}^{(o)}_t[j],\mathbf{y}[j]) \\\nonumber
	&\stackrel{(b)}{=} d_{\tilde{\mathcal{B}}_{o}}(\hat{\mathbf{F}}^{(o)}_{t}\mathbf{y}[i], \mathbf{F}\mathbf{y}[i]) + \eta\sum\limits_{k=0}^{o-2-i}(\eta+1)^k \\\nonumber
	&\hspace{5mm}d_{\tilde{\mathcal{B}}_{o}}(\hat{\mathbf{F}}^{(o)}_{t}\mathbf{y}[i+1+k], \mathbf{F}\mathbf{y}[i+1+k]),
\end{align}
where $(b)$ transits to the explicit form of the recursive formula and $(a)$ utilizes the continuity of $\hat{\mathbf{F}}^{(o)}_t[i]$. Here, $\eta$ is an upper bound on all the $\tilde{\mathcal{B}}_{o}$, $o\in[N]$, associated distances of the gradient for all the bounded functions in $\mathcal{H}_{\theta}$, which exists since derivation constitutes a linear map.
This gives rise to
\begin{align}\nonumber
	&\mathbb{P}(\mathbb{E}_{\tilde{\mathcal{B}}_{o}}[(\hat{\mathbf{y}}^{(o)}_t(\mathbf{b})[i]-\mathbf{y}(\mathbf{b})[i])^2] \\\nonumber
	&\hspace{5mm}\stackrel{(\ref{eq:EF})}{>} \frac{u_{\mathcal{B}}}{l_{\mathcal{B}}^s}(H_t^{(o)}+\varphi_{\mathcal{H}_{\theta}}^2)(1+\eta\sum\limits_{k=0}^{o-2-i}(\eta+1)^k))^2 \\\label{eq:PEy}
	&\hspace{2mm}\leq1-(1-\varrho_t)^{o-1}.
\end{align}
The prediction accuracies over the separated $\tilde{\mathcal{B}}_{o}$'s will provide an accuracy bound for each component function of $\hat{\mathbf{F}}_{t}=(\hat{f}_{ij}^{(t)})_{i,j\in[N]}$, as defined in Algorithm \ref{Alg:SSEM-UCB}, by consulting (\ref{eq:bs}) again. To this end, batches $(\hat{f}_{o-k,o}^{(t)})_{o\in\{k+1,\dots,N\}}$ can be assessed in chronological order $k=1,\dots,N-1$ as
\begin{align}\label{eq:Ef}
    d_{\mathcal{B}_o}(&\hat{f}_{o-k,o}^{(t)},f^{(\tilde{\mathcal{B}}_{o})}_{o-k,o}) \\\nonumber
    &\leq d_{\tilde{\mathcal{B}}_{o}}\left(\hat{\mathbf{y}}_t^{(o)}[o-k],\mathbf{y}_{\tilde{\mathcal{B}}_{o}}[o-k]\right) \\\nonumber
    &+\sum\limits_{j=o-k+1}^{o-1}d_{\tilde{\mathcal{B}}_{o}}\left(\hat{f}_{o-k,j}^{(t)}(\hat{\mathbf{y}}_t^{(o)}[j]),f^{(\tilde{\mathcal{B}}_{o})}_{o-k,j}(\mathbf{y}_{\tilde{\mathcal{B}}_{o}}[j])\right) \\\nonumber
    &\leq d_{\tilde{\mathcal{B}}_{o}}\left(\hat{\mathbf{y}}_t^{(o)}[o-k],\mathbf{y}_{\tilde{\mathcal{B}}_{o}}[o-k]\right) \\\nonumber
    &+\sum\limits_{j=o-k+1}^{o-1}d_{\tilde{\mathcal{B}}_{o}}\left(\hat{f}_{o-k,j}^{(t)}(\mathbf{y}_{\tilde{\mathcal{B}}_{o}}[j]),f^{(\tilde{\mathcal{B}}_{o})}_{o-k,j}(\mathbf{y}_{\tilde{\mathcal{B}}_{o}}[j])\right) \\\nonumber
    &+\sum\limits_{j=o-k+1}^{o-1}d_{\tilde{\mathcal{B}}_{o}}\left(\hat{f}_{o-k,j}^{(t)}(\hat{\mathbf{y}}_t^{(o)}[j]),\hat{f}_{o-k,j}^{(t)}(\mathbf{y}_{\tilde{\mathcal{B}}_{o}}[j])\right) \\\nonumber
    &\leq d_{\tilde{\mathcal{B}}_{o}}\left(\hat{\mathbf{y}}_t^{(o)}[o-k],\mathbf{y}_{\tilde{\mathcal{B}}_{o}}[o-k]\right) \\\nonumber
    &+\sum\limits_{j=o-k+1}^{o-1}\mathcal{K}(\mathbf{I}[o]^{\top},\mathbf{y}_{\tilde{\mathcal{B}}_{o}})d_{\mathcal{B}_j}\left(\hat{f}_{o-k,j}^{(t)},f^{(\tilde{\mathcal{B}}_{o})}_{o-k,j}\right) \\\nonumber
    &+\sum\limits_{j=o-k+1}^{o-1}g_1(\eta)d_{\tilde{\mathcal{B}}_{o}}\left(\hat{\mathbf{y}}_t^{(o)}[j],\mathbf{y}_{\tilde{\mathcal{B}}_{o}}[j]\right).
\end{align}
Here, $\mathbf{y}_{\tilde{\mathcal{B}}_{o}}=(\mathbf{I}-\mathbf{F}_{\tilde{\mathcal{B}}_{o}})^{-1}$, $\mathbf{F}_{\tilde{\mathcal{B}}_{o}}[i]=(f_{i,j}^{(\tilde{\mathcal{B}}_{o})})_{j\in[N]}=\underset{h\in\mathcal{H}^{(i)}_{\theta}}{\arg\min}~d_{\tilde{\mathcal{B}}_{o}}(h\mathbf{y}, \mathbf{F}\mathbf{y}[i])$, $\mathcal{K}(\mathbf{x},y)=\underset{j\in[N]}{\max}\sum_{i=1}^{\dim(\mathcal{F}^{(j)})}\sqrt{\mathbb{E}_{\mathcal{B}^{\mathbf{x}}}[e_i(y(\mathbf{b}))^2]}/l_{\mathcal{B}}$, and $(e_1,\dots,e_{\dim(\mathcal{F}^{(j)})})$ is a basis of the space $\mathcal{F}^{(j)}$ containing the algorithm's possible values for the entries of $\hat{\mathbf{F}}_t$ that is orthonormal over $[0,1]$ in accordance to Lemma \ref{lem:dish}. (For SSEM-UCB-JO, $\mathcal{F}^{(j)}=\mathcal{H}^{(j)}$.) Further, $g_1(\eta)$ with $g_1:\mathbb{R}_{\geq0}\rightarrow\mathbb{R}_{\geq0}$ exists as a gradient bound on each component function. A numerical value of this bound can be determined by replicating the transformation in (\ref{eq:Ey}) for $\lvert\hat{\mathbf{y}}^{(o)}_t(b\mathbf{I}[o]^{\top})[j]-\hat{\mathbf{y}}^{(o)}_t(c\mathbf{I}[o]^{\top}[j])\rvert~\forall j\in[N]$, yielding $g_1(\eta)=(1+\eta(1+\eta)^N)^N$). (For SSEM-UCB-JO, $g_1(\eta)=\eta$ would suffice as each component function lies in one of the RKHSs.)

Replicating (\ref{eq:Ey}) also reveals
\begin{align}\nonumber
    d_{\tilde{\mathcal{B}}_{o}}&\left(\hat{\mathbf{y}}_t^{(o)}[j],\mathbf{y}_{\tilde{\mathcal{B}}_{o}}[j]\right) \\\nonumber
    &\leq d_{\tilde{\mathcal{B}}_{o}}\left(\hat{\mathbf{y}}_t^{(o)}[j],\mathbf{y}[j]\right) + d_{\tilde{\mathcal{B}}_{o}}\left(\mathbf{y}_{\tilde{\mathcal{B}}_{o}}[j],\mathbf{y}[j]\right)\\\nonumber
    &\leq d_{\tilde{\mathcal{B}}_{o}}\left(\hat{\mathbf{y}}_t^{(o)}[j],\mathbf{y}[j]\right) + g_2(\varphi_{\mathcal{H}_{\theta}})
\end{align}
with $g_2(\varphi_{\mathcal{H}_{\theta}})=\varphi_{\mathcal{H}_{\theta}}(1+\eta\sum\limits_{k=0}^{o-2-j}(\eta+1)^k)$. Resolving the recursive form of (\ref{eq:Ef}), we then find that
\begin{align}\nonumber
	\mathbb{P}\Bigg(&\mathbb{E}_{\mathcal{B}_o}\left[\left(\hat{f}_{o-k,o}^{(t)}(b)-f^{(\tilde{\mathcal{B}}_{o})}_{o-k,o}(b)\right)^2\right] \\\nonumber
	&\hspace{2mm}> (1+\mathcal{K}(\mathbf{I}[o]^{\top},\mathbf{y}_{\tilde{\mathcal{B}}_{o}})\sum\limits_{j=0}^{k-2}(\mathcal{K}(\mathbf{I}[o]^{\top},\mathbf{y}_{\tilde{\mathcal{B}}_{o}})+1)^j)^2 \\\nonumber
	&\hspace{6mm}(1+(k-1)g_1(\eta))^2(\sqrt{L_t^{(o)}}+g_2(\varphi_{\mathcal{H}_{\theta}}))^2\Bigg) \\\label{eq:Pf}
	&\stackrel{(\ref{eq:PEy})}{\leq} 1-(1-\varrho_t)^{(o-1)(k-1)k},
\end{align}
abbreviating $L_t^{(o)}=\frac{u_{\mathcal{B}}}{l_{\mathcal{B}}^s}(H_t^{(o)}+\varphi_{\mathcal{H}_{\theta}}^2)(1+\eta\sum\limits_{k=0}^{o-2-i}(\eta+1)^k))^2$ and $M_t^{(o,k)}$ for the lower side of the inequality in the above probability.

We can now show that
\begin{align}\nonumber
	d_{\mathcal{B}^{\mathbf{x^{\ast}}}}&(\hat{\mathbf{y}}_t[i],\mathbf{y}[i]) \\\nonumber
	&\leq \sum_{j=i+1}^{N}d_{\mathcal{B}^{\mathbf{x}^{\ast}}}(\hat{f}_{ij}^{(t)}(\hat{\mathbf{y}}_t[j]),f_{ij}(\mathbf{y}[j])) \\\nonumber
	&\leq \sum_{j=i+1}^{N}\Big(d_{\mathcal{B}^{\mathbf{x}^{\ast}}}(\hat{f}_{ij}^{(t)}(\mathbf{y}[j]),f^{(\mathcal{B}^{\mathbf{I}[j]^{\top}})}_{ij}(\mathbf{y}[j]))  \\\nonumber
	&\hspace{10mm}+ d_{\mathcal{B}^{\mathbf{x}^{\ast}}}(f_{ij}(\mathbf{y}[j]),f^{(\mathcal{B}^{\mathbf{I}[j]^{\top}})}_{ij}(\mathbf{y}[j]))\Big) \\\nonumber
	&\hspace{5mm}+ \sum_{j=i+1}^{N}d_{\mathcal{B}^{\mathbf{x}^{\ast}}}(\hat{f}_{ij}^{(t)}(\hat{\mathbf{y}}_t[j]),\hat{f}_{ij}^{(t)}(\mathbf{y}[j])) \\\nonumber
	&\leq \sum_{j=i+1}^{N}\mathcal{K}(\mathbf{x}^{\ast},\mathbf{y})\left(d_{\mathcal{B}_j}(\hat{f}_{ij}^{(t)},f_{ij}) + g_3(\varphi_{\mathcal{H}_{\theta}})\right)  \\\nonumber
	&\hspace{5mm}+ \sum_{j=i+1}^{N}g_1(\eta)d_{\mathcal{B}^{\mathbf{x}^{\ast}}}(\hat{\mathbf{y}}_t[j],\mathbf{y}[j]).
\end{align}
The numerical value of $g_3(\varphi_{\mathcal{H}_{\theta}})$ with zero-true $g_3:\mathbb{R}_{\geq0}\rightarrow\mathbb{R}_{\geq0}$ can be stated along the lines of (\ref{eq:Ef}) and similar to $M_t^{(o,k)}$, which we forego here.

Finally our analysis yields
\begin{align}\nonumber
	\mathbb{P}\Big(\mathbb{E}_{\mathcal{B}^{\mathbf{x}^{\ast}}}&[(\hat{\mathbf{r}}^{(t)}(\mathbf{b})-\mathbf{r}(\mathbf{b}))^2] >G_t^2\Big) \\\nonumber
	&\stackrel{(\ref{eq:Pf})}{\leq} 1-(1-\varrho_t)^{N^6} = t^{-2(s+1)}
\end{align}
for
\begin{align}\nonumber
	G_t = &(1+g_1(\eta)\sum\limits_{k=0}^{N-1}(g_1(\eta)+1)^k) \sqrt{N}^3 \\\label{def:opt_gua}
	&\mathcal{K}(\mathbf{x}^{\ast},\mathbf{y})\left(\sqrt{\underset{o,k\in[N]}{\max}M_t^{(o,k)}} + g_3(\varphi_{\mathcal{H}_{\theta}})\right).
\end{align}
From our selection criterion we know that
\begin{align}\nonumber
    0 &\leq I^{\max}_{t+1}(\mathbf{x}_{t+1}) + \Delta_{t+1}^{I}\sqrt{\frac{16(s+1)\ln^2(t)}{\underset{i\in\mathcal{I}(\mathbf{x}_{t+1})}{\min}\mathbf{m}_t[i]}}\\\nonumber
    &\hspace{-1mm}- I^{\max}_{t+1}(\mathbf{I}[\underset{i\in[N]}{\arg \min}~\mathbf{m}_t[i]]^{\top}) - \Delta_{t+1}^{I}\sqrt{\frac{16(s+1)\ln^2(t)}{\underset{i\in[N]}{\min}\mathbf{m}_t[i]}}\\\nonumber
    &\hspace{-1mm}\leq \Delta^{I}_{t+1}\left(1 + 4\sqrt{s+1}\ln(t)\left(\frac{1}{\sqrt{l}} - \frac{1}{\sqrt{\underset{i\in[N]}{\min}\mathbf{m}_t[i]}}\right)\right),
\end{align}
and, as ties are broken randomly, w.l.o.g. $\Delta_{t+1}^{I} > 0$, hence
\begin{align}\nonumber
    \underset{i\in[N]}{\min}\mathbf{m}_t[i] &\geq \left(\frac{1}{\sqrt{16(s+1)\ln^2(t)}} + \frac{1}{\sqrt{l}}\right)^{-2}\\\nonumber
    &\geq \frac{4(s+1)\ln^2(t)}{(1/2+\psi_{\min})^2}.
\end{align}
Through appropriate change of variable and application of L'Hôpital's rule, it is straightforward to verify that $\underset{\tau\rightarrow\infty}{\lim}~\ln(\varrho_{\tau})/\ln^2(\tau) = 0$ and therefore $\underset{\tau\rightarrow\infty}{\lim} H_{\tau}^{(o)} = 0~\forall o\in[N]$. The negative derivative of $H_t^{(o)}$ indicates that $G_t$ is monotonically falling as well. Thus when $\varphi_{\mathcal{H}_\theta}$ is sufficiently small, there is $T_{\mathcal{B},\mathcal{H}_{\theta},N}\in\mathbb{N}$ such that for $t > T_{\mathcal{B},\mathcal{H}_{\theta},N}$, \[4G_t < 4G_{T_{\mathcal{B},\mathcal{H}_{\theta},N}} < \Delta_{\min}.\]
To subsume the technical initialization phase of the algorithm, in which a first sample of each instantaneous reward was collected, we also demand $T_{\mathcal{B},\mathcal{H}_{\theta},N}\geq\lceil N/s\rceil$.\\\\
Overall, together with (\ref{eq:ev2opt}) and (\ref{eq:PI}), we find that
\begin{align}\label{eq:part1-done}
    \mathbb{P}[\mathcal{V}] \leq (2 q s + 1) t^{-2 (s+1)}.
\end{align}
The same upper bound can be derived for the probability of (\ref{eq:part2}) in a similar fashion.\\\\
Finally, we consider (\ref{eq:part3}).
For every $m\in[q]$, $j\in\mathcal{I}(\mathbf{x}_{t+1})$, we have
\begin{align}\nonumber
    &{\Big\lvert}\left(\hat{\boldsymbol{\phi}}^{(m)}_{t}[j] + \mathcal{W}^{(m)_{\max}}_{t}(\mathbf{x}_{t+1})[j]\right) \\\nonumber
    &\hspace{5mm}- \left(\hat{\boldsymbol{\phi}}^{(m)}_{t}[j] - \mathcal{W}^{(m)_{\min}}_{t}(\mathbf{x}_{t+1})[j]\right){\Big\rvert} \\\nonumber
    &\hspace{10mm}\leq 4\mathbf{C}_t[j] \stackrel{(a)}{=} 4\sqrt{\frac{(s+1)\ln t}{\mathbf{m}_t[j]}} \leq 4\sqrt{\frac{(s+1)\ln t}{l}}\\ \label{eq:momcon}
    &\hspace{10mm}\stackrel{(b)}{\leq} \frac{\min\{q,s\}(\Delta_{\min} - 4\Gamma^{(q)}_{\max})}{4w_{\max}(q+1)s^{\min\{q,s\}+1}},
\end{align}
where in $(a)$ and $(b)$, we substituted the value for $\mathbf{C}_{t}[j]$ and $l$, respectively.
On the same note, we find that
\begin{align}\nonumber
    E_{t+1}(\mathbf{x}_{t+1}) &= \Delta_{t+1}^{I}\sqrt{\frac{16(s+1)\ln^2(t)}{\underset{i\in\mathcal{I}(\mathbf{x}_{t+1})}{\min}\mathbf{m}_t[i]}} \\\nonumber
    &\leq \Delta_{t+1}^{I}\sqrt{\frac{16(s+1)\ln^2(T)}{l}} \\\label{eq:extbou}
    &\leq 2 \Delta_{t+1}^{I}\psi_{\min} \leq \Delta_{\min}/4 - \Gamma_{\max}^{(q)}.
\end{align}
Setting $\mathcal{Z}^{({\boldsymbol{\alpha}})}_{\max}[j] = \left(\hat{\boldsymbol{\phi}}^{({\boldsymbol{\alpha}}[j])}_t[j] + \mathcal{W}^{({\boldsymbol{\alpha}}[j])\max}_t(\mathbf{x}_{t+1})[j]\right)$ and $\mathcal{Z}^{({\boldsymbol{\alpha}})}_{\min}[j]$ accordingly, we can conclude
\begin{align}\nonumber
    &I^{\max}_{t+1}(\mathbf{x}_{t+1}) - I^{\min}_{t+1}(\mathbf{x}_{t+1}) + E_{t+1}(\mathbf{x}_{t+1}) \\ \nonumber
    &= \sum_{\lvert{\boldsymbol{\alpha}}\rvert \leq q} \frac{D^{{\boldsymbol{\alpha}}}\hat{\mathbf{r}}^{(t)}_{\mathbf{x}_{t+1}}(\mathbf{a}_{t+1})}{{\boldsymbol{\alpha}}!} \\\nonumber
    &\hspace{1mm}{\Big (} \prod\limits_{j\in\mathcal{I}(\mathbf{x}_{t+1})}\mathcal{Z}^{({\boldsymbol{\alpha}})}_{\max}[j] - \prod\limits_{j\in\mathcal{I}(\mathbf{x}_{t+1})}\mathcal{Z}^{({\boldsymbol{\alpha}})}_{\min}[j] {\Big )} + E_{t+1}(\mathbf{x}_{t+1})\\ \nonumber
    &\stackrel{(c)}{\leq} \sum_{\lvert{\boldsymbol{\alpha}}\rvert \leq q} \frac{\lvert D^{{\boldsymbol{\alpha}}}\hat{\mathbf{r}}^{(t)}_{\mathbf{x}_{t+1}}(\mathbf{a}_{t+1})\rvert}{{\boldsymbol{\alpha}}!} \left\|\mathcal{Z}^{({\boldsymbol{\alpha}})}_{\max} - \mathcal{Z}^{({\boldsymbol{\alpha}})}_{\min}\right\|_1 \\\nonumber
    &\hspace{5mm}+ E_{t+1}(\mathbf{x}_{t+1})\\ \nonumber
    &\stackrel{(\ref{eq:momcon})}{<} \sum\limits_{\substack{\lvert{\boldsymbol{\alpha}}\rvert \leq q\\\mathbf{x}_{t+1}\odot{\boldsymbol{\alpha}}={\boldsymbol{\alpha}}}} \frac{w_{\max}}{{\boldsymbol{\alpha}}!} s \frac{\min\{q,s\}(\Delta_{\min} - 4\Gamma^{(q)}_{\max})}{4 w_{\max}(q+1)s^{\min\{q,s\}+1}} \\\nonumber
    &\hspace{5mm}+ E_{t+1}(\mathbf{x}_{t+1}) \\ \nonumber
    &\stackrel{(d)}{\leq} \frac{\min\{q,s\}(\Delta_{\min} - 4\Gamma^{(q)}_{\max})}{4(q+1)s^{\min\{q,s\}}} \sum_{k=0}^{q}\frac{s^{k}}{k!} + E_{t+1}(\mathbf{x}_{t+1}) \\\nonumber
    &\leq \Delta_{\min}/4 - \Gamma^{(q)}_{\max} + E_{t+1}(\mathbf{x}_{t+1})\\ \nonumber
    &\stackrel{(\ref{eq:extbou})}{\leq} \Delta_{t+1} - \Delta_{\min}/2 - 2\Gamma^{(q)}_{\max} \\ \label{eq:part3-done}
    &= \mu(\mathbf{x}^{\ast}) - \mu(\mathbf{x}_{t+1}) - \Delta_{\min}/2 -2\Gamma^{(q)}_{\max},
\end{align}
where $(d)$ is a combinatorial result from induction over $s$ and application of the binomial theorem. Moreover, $(c)$ results from $\mathcal{Z}^{({\boldsymbol{\alpha}})}_{\max}[j], \mathcal{Z}^{({\boldsymbol{\alpha}})}_{\min}[j] \in [-1, 1]$ by decomposing the difference of products into a sum of products of differences. Hence, we conclude that (\ref{eq:part3}) never happens.\\\\
By using $(\ref{eq:event})$, (\ref{eq:part1-done}) and (\ref{eq:part3-done}), we achieve
\begin{align} \nonumber
\mathbb{E}&[\mathscr{T}_{i}(T)] 
\leq \left \lceil {\frac{4 (s+1) \ln^{2}{T}}{ \psi_{\min}^{2} }} \right \rceil \\ \nonumber
&+ \sum_{t=T_{\mathcal{B},\mathcal{H}_{\theta},N}}^{\infty} \sum_{m_{v_{1}}=1}^{t} \dots \sum_{m_{v_{s}}=1}^{t} \sum_{m_{u_{1}}=l}^{t} \dots \sum_{m_{u_{s}}=l}^{t} \\\nonumber
&\hspace{5mm}2 (2 q s + 1) t^{-2 (s+1)} \\ \nonumber
&\leq {\frac{4 (s+1) \ln^{2}{T}}{ \psi_{\min}^{2} }} + 1 + (2 q s + 1) \sum_{t=1}^{\infty} 2t^{-2} \\
&\leq {\frac{4 (s+1) \ln^{2}{T}}{ \psi_{\min}^{2} }} + 1 + \frac{\pi^{2}}{3} (2 q s + 1).
\end{align}
Therefore, the expected regret is upper bounded as
\begin{align} \nonumber
    \mathcal{R}(T)
    &\leq T_{\mathcal{B},\mathcal{H}_{\theta},N} \Delta_{\max} + \Delta_{\max} \sum_{i = 1}^{N} \mathbb{E} [\mathscr{T}_{i}(T)] \\ \nonumber
    &\leq T_{\mathcal{B},\mathcal{H}_{\theta},N} \Delta_{\max} \\\nonumber
    &\hspace{1mm}+ \sum_{i=1}^{N} \left[ {\frac{4 (s+1) \ln^{2}{T}}{ \psi_{\min}^{2} }} + 1 + \frac{\pi^{2}}{3} (2 q s + 1) \right] \Delta_{\max} \\\nonumber
    &\leq T_{\mathcal{B},\mathcal{H}_{\theta},N} \Delta_{\max} \\\nonumber
    &\hspace{1mm}+ \left[ {\frac{4 (s+1) \ln^{2}{T}}{ \psi_{\min}^{2} }} + 1 + \frac{\pi^{2}}{3} (2 q s + 1) \right] N \Delta_{\max}.
\end{align}
\begin{flushright}
$\blacksquare$
\end{flushright}
\subsubsection{Dependencies}
\label{rem:dep}
i) The maximum norm in regards to an orthonormal basis of $\mathcal{H}^{(i)}$ (w.r.t. the inner product on $\mathcal{H}^{(i)}$), $i\in[N]$, is bounded by $\theta$ for functions from $\mathcal{H}^{(i)}_{\theta}$ similar to Lemma \ref{lem:dish}. Maximum norms in other bases are therefore bounded on a combination of $\theta$ and $\dim(\mathcal{H}^{(i)})$. The number of nonzero coefficients in the basis representation is proportional to the highest degree among the empirical graphs corresponding to $(\hat{\mathbf{F}}_t)_{t\in[T]}$. For a polynomial Hilbert space, the coefficients of any $\hat{\mathbf{r}}^{(t)}$ are then limited by a power of this quantity determined by $q$, as are the linearly mapped partial derivatives of $\hat{\mathbf{r}}^{(t)}$ subsumed by $w_{\max}$. Instead of $q$, for any RKHS, an upper bound can be stated in terms of the maximum path length among the empirical graphs, which equals the longest function chain in the inverse $(\mathbf{I}-\hat{\mathbf{F}}_t)^{-1}$. The latter should be combined with a norm-threshold for edge recognition in the algorithm. Both upper limits on $w_{\max}$ are independent of $T$ and grow with the density of the graphs.

ii) Similar to i), the gradient bound $\eta$ introduced in (\ref{eq:Ey}) depends on $\theta$ as well as the maximum path length $p$ and maximum degree $d$ of the empirical graphs belonging to $(\hat{\mathbf{F}}_t)_{t\in[T]}$. The number of relevant indices in all summations in (\ref{eq:Ey})-(\ref{def:opt_gua}) is at most $d$. We can then find $T_{\mathcal{B},\mathcal{H}_{\theta},p,d}\in\mathbb{N}$ independent of $N$ for which (\ref{eq:part1-done}) is replaced by $\mathbb{P}[\mathcal{V}]\leq(2 q s + N) t^{-2(s+1)}$ to accommodate the summation of the $N$ overall rewards.

%
\subsection{REMAINING PROOFS}
\label{sec:AddProofs}
\subsubsection{Proof of Theorem \ref{thm:1} for SSEM-UCB-JO}
\label{proof:thm:2}
We only need to update the kernel optimization guarantees utilized in proof \ref{sec:ProofThm1}. To this end, we use the same variable terms with $l$ changed by a factor of $\sqrt{T}/\ln(T)$. We write $\hat{\mathbf{y}}_{t+1}=(\mathbf{I}-\hat{\mathbf{F}}_{t+1})^{-1}$. We understand the exogenous vectors observed up to time $t$ as sampled from the empirical distribution $\mathcal{B}_t$. Optimizing over this entire data set results in a complexity bound of
\begin{align}\nonumber
   \varrho_t &\geq \mathbb{P}(\mathbb{E}_{\mathcal{B}_t} [(\hat{\mathbf{F}}_{t}(\mathbf{y}(\mathbf{b}))[i]-\mathbf{F}(\mathbf{y}(\mathbf{b}))[i])^2] \\\nonumber
    &- \mathbb{E}_{\mathcal{B}_t} [(\mathbf{F}(\mathbf{y}(\mathbf{b}))[i]-\mathbf{F}_{\mathcal{B}_{t}}(\mathbf{y}(\mathbf{b}))[i])^2]> H_{t})
\end{align}
for every $i\in[N]$, with $\mathbf{F}_{\mathcal{B}_t}[i]=\underset{h\in\mathcal{H}^{(i)}_{\theta}}{\arg\min}~d_{\mathcal{B}_t}(h\mathbf{y}, \mathbf{F}\mathbf{y}[i])$ and \[H_{t} = M_1\frac{1}{\sqrt{t}} + M_2\sqrt{\frac{\ln (2/\varrho_t)}{2t}}\] as opposed to (\ref{eq:rad}). In combination with the definition of $\mathbf{F}_{\mathcal{B}_t}$, for every $o\in[N]$ we know that
\begin{align}\nonumber
    &\varrho_t \geq \mathbb{P}(\mathbb{E}_{\mathcal{B}_t} [(\hat{\mathbf{F}}_{t}(\mathbf{y}(\mathbf{b}))[i]-\mathbf{F}(\mathbf{y}(\mathbf{b}))[i])^2] \\\nonumber
    &\hspace{0.3mm}- \mathbb{E}_{\mathcal{B}_t} [(\mathbf{F}(\mathbf{y}(\mathbf{b}))[i]-\mathbf{F}_{\mathcal{B}_{t}^{(\neg o)}}(\mathbf{y}(\mathbf{b}))[i])^2]> H_{t}) \\\nonumber
    &\hspace{0.1mm}\geq\mathbb{P}(\mathbb{E}_{\mathcal{B}_t^{(o)}} [(\hat{\mathbf{F}}_{t}(\mathbf{y}(\mathbf{b}))[i]-\mathbf{F}(\mathbf{y}(\mathbf{b}))[i])^2] \\\nonumber
    &\hspace{0.3mm}- \mathbb{E}_{\mathcal{B}_t^{(o)}} [(\mathbf{F}(\mathbf{y}(\mathbf{b}))[i]-\mathbf{F}_{\mathcal{B}_{t}^{(\neg o)}}(\mathbf{y}(\mathbf{b}))[i])^2]> \frac{t}{\mathbf{m}_t[o]}H_{t}) \\\nonumber
    &\hspace{0.1mm}\geq\mathbb{P}(\mathbb{E}_{\mathcal{B}_t^{(o)}} [(\hat{\mathbf{F}}_{t}(\mathbf{y}(\mathbf{b}))[i]-\mathbf{F}(\mathbf{y}(\mathbf{b}))[i])^2] \\\nonumber
    &\hspace{5mm}> \frac{t}{\mathbf{m}_t[o]}H_{t}+\varphi_{\mathcal{H}_{\theta}}^2),
\end{align}
with $\mathcal{B}_t^{(\neg o)}$ formed from samples in time steps where $o$ was not selected, and altered objective gap \[\varphi_{\mathcal{H}_{\theta}} = \underset{i\in[N]}{\max}\underset{\mathbf{x},\mathbf{x}'\in\mathcal{X}}{\max} ~\sqrt{\mathbb{E}[(\mathbf{F}-\mathbf{F}_{\mathcal{B}^{\mathbf{x}'}})(\mathbf{y}_{\mathbf{x}}(\mathbf{b}))[i]^2]}.\]
Following the proof of Theorem \ref{thm:1}, we only need to establish that $\underset{\tau\rightarrow\infty}{\lim}\frac{\tau}{\mathbf{m}_\tau[o]}H_{\tau}=0$ to be able to infer a sufficient $T_{\mathcal{B},\mathcal{H}_{\theta},N}\in\mathbb{N}$ accordingly. From the algorithm's selection criterion, we see that $\underset{i\in[N]}{\min}\mathbf{m}_t[i] \geq \frac{4(s+1)\sqrt{t}\ln t}{(1/2+\psi_{\min})^2}$. Combined with $\underset{\tau\rightarrow\infty}{\lim}\sqrt{\tau\ln(\varrho_\tau)}/(\sqrt{\tau}\ln(\tau))=0$, the result follows. \hfill $\blacksquare$
\subsubsection{Proof of Proposition \ref{pro:1}}
\label{proof:pro:1}
For a cyclic SEM, convergence of $\hat{\mathbf{F}}_{t}$ towards $\mathbf{F}$ on the sample measure $\mathbb{E}_{\mathcal{B}_t}$ implies convergence in an associated weighted $L^2$-norm since the operators are analytic. As all $\mathcal{H}^{(i)}$, $i\in[N]$, are finite-dimensional, all norms on them are equivalent and the operators in particular converge with respect to their Hilbert space norm. For any RKHS, the convergence must be uniform, as
\begin{align*}
	\lvert(\hat{\mathbf{F}}_t-\mathbf{F})(\mathbf{y})[i]\rvert &= \lvert\langle\hat{\mathbf{F}}_t[i]-\mathbf{F}[i],K_{\mathbf{y}}^{(i)}\rangle_{\mathcal{H}^{(i)}}\rvert \\
	&\leq \|K_{\mathbf{y}}^{(i)}\|_{\mathcal{H}^{(i)}}\|\hat{\mathbf{F}}_t[i]-\mathbf{F}[i]\|_{\mathcal{H}^{(i)}}
\end{align*}
by Cauchy-Schwarz, where $K_{\mathbf{y}}^{(i)}=K^{(i)}(\mathbf{y},\cdot)$ is the kernel function of $\mathcal{H}^{(i)}$ and $\mathbf{y}$ is a solution to the SEM of $\mathbf{F}$ for any $\mathbf{b}\in[0,1]^N$. The inverse function theorem then implies convergence of the inverses $(\mathbf{I}-\hat{\mathbf{F}}_t)^{-1}$ at the same time-rate, although with different constant factors affecting only $T_{\mathcal{H}_{\theta},\mathcal{B},N}$. Past the operator accuracy, the remainder of proof \ref{sec:ProofThm1} does not exploit acyclicity and can be replicated here. \hfill $\blacksquare$

\subsubsection{Proof of Corollary \ref{thm:3}}
\label{proof:thm:3}
We have
\begin{align}\nonumber
    \mathcal{R}(T) &= \sum_{t=1}^{T} \Delta(\mathbf{x}_t) \\ \nonumber
    &= \sum_{\substack{t\in[T]\\4\Gamma^{(q)}_{\max} < \Delta(\mathbf{x}_t)}} \Delta(\mathbf{x}_t) + \sum_{\substack{t\in[T]\\4\Gamma^{(q)}_{\max} \geq \Delta(\mathbf{x}_t)}} \Delta(\mathbf{x}_t) \\ \nonumber
    &\hspace{-6mm}\stackrel{(\text{Thm. \ref{thm:1}})}{\leq} \left[ {\frac{4 (s+1) \ln^{2}{T}}{ {\psi_{\min}^{(q,s)}}^{2} }} + 1 + \frac{\pi^{2}}{3} (2 q s + 1) \right] N \Delta_{\max}^{(q)} \\ \nonumber
    &\hspace{7mm} + T_{\mathcal{B},\mathcal{H}_{\theta},N} \Delta_{\max}^{(q)} + 4 \Gamma^{(q)}_{\max} T. 
\end{align}
\hfill $\blacksquare$
\subsubsection{Proof of Corollary \ref{cor:1}}
\label{proof:cor:1}
All central moments $\boldsymbol{\phi}^{(p)}[j] = \mathbb{E}_{\mathcal{B}_j}[(b-\mathbb{E}_{\mathcal{B}_j}[c])^p]$ of normal distributions can be calculated from their respective variance:
\[\forall p\in\mathbb{N}, j\in[N]: \boldsymbol{\phi}^{(2p)}[j] = (2p-1)!!{\boldsymbol{\phi}^{(2)}}[j]^{p}\] \[\text{ and } \boldsymbol{\phi}^{(2p-1)}[j] = 0.\]
Replicating the proof of Theorem \ref{thm:1}, only the unbiased estimates of mean and variance need to be taken into account in (\ref{eq:VarEst}). Further, (\ref{eq:momcon}) needs to be extended by the argument that for $\vert a\rvert,\lvert b\rvert,\lvert c\rvert \leq 0.5$, $p\in\mathbb{N}$, by induction we have \[\lvert(b-a)^p - (c-a)^p\rvert \leq p\lvert(b-a)-(c-a)\rvert=p\lvert b-c\rvert\] and that all $D^{{\boldsymbol{\alpha}}}\hat{\mathbf{r}}^{(t)}_{\mathbf{x}_{t-1}}$ for $\lvert{\boldsymbol{\alpha}}\rvert\leq q$, $t\in[T]$, are $s w_{\max}$-Lipschitz continuous. \hfill $\blacksquare$
\subsubsection{Proof of Corollary \ref{cor:2}}
\label{proof:cor:2}
The $\dim(\mathcal{H}^{(i)})$ samples obtained while playing column vector $\mathbf{M}[i]$ are with probability $1$ in general position and allow for exact identification of $\mathbf{F}[i]$ for any $i\in[N]$. Thus, the terms inherited from the optimization uncertainty after the initialization phase can be omitted from the bound established in Theorem \ref{thm:1}. \hfill$\blacksquare$
\subsubsection{Proof of Theorem \ref{thm:4}}
\label{proof:thm:4}
We set $\tilde{\boldsymbol{\phi}}^{(m)} = \mathbb{E}_{\mathbf{b}\sim\mathcal{B},\boldsymbol{\epsilon}\sim\mathcal{N}}[(\mathbf{b}+\boldsymbol{\epsilon}-\mathbf{a}_{t+1})^{m}]$ for the noisy moments about $\mathbf{a}_{t+1}$ of the exogenous signals. Only definition (\ref{eq:J_t}) (and its subsequent appearances) in the proof of Theorem \ref{thm:1} has to be altered to feature the noisy moments instead of the pure signal moments. We note that the expected model noise can be incorporated into the endogenous operator since $\mathbf{F}+\mathbb{E}[\mathcal{M}]\in\mathcal{H}_{\theta}$. The expectation in the objective gap shall be taken with respect to the noise as well.\hfill$\blacksquare$
%
\subsection{EXPERIMENTAL ADDENDUM}
\label{sec:expadd}
\subsubsection{Synthetic}
\label{subsec:ads}
As a measure of nonlinearity, we only accepted problem instances in which the true reward function's first degree Taylor approximation's remainder empirically averaged above $10\%$ of the maximal reward in absolute value. Concerning IGP-UCB, the underlying RKHS is defined by a squared exponential kernel $k$ and the hyperparameters $B$, $\lambda$, $R$, $\delta$ (in the notation of \textup{\cite{pmlr-v70-chowdhury17a}}) were set to $(\mathbb{E}[\mathbf{r}_{\mathbf{x}}(\mathbf{b})])_{\mathbf{x}\in\mathcal{X}}(k(\mathbf{x},\mathbf{x^{\prime}}))_{\mathbf{x},\mathbf{x^{\prime}}\in\mathcal{X}}(\mathbb{E}[\mathbf{r}_{\mathbf{x}}(\mathbf{b})])_{\mathbf{x}\in\mathcal{X}}^{\top}$, $\underset{\mathbf{x}\in\mathcal{X}}{\max}(\underset{\mathbf{b}\in[0,1]^N}{\max}\mathbf{r}_{\mathbf{x}}(\mathbf{b}) - \mathbb{E}_{\mathbf{b}\sim\mathcal{B}}[\mathbf{r}_{\mathbf{x}}(\mathbf{b})])$, $\sqrt{\lambda}$, $0.1$, respectively.

The Taylor centers of SSEM-UCB-JO were placed at the means of the instantaneous rewards. The RKHSs were comprised of the elemental component functions allowed during the DAG generation. Their norm was surrogated with the norm-equivalent $\ell_1$-norm. For runtime considerations, we implemented the differentiation numerically, infrequently retrained the adjacency graph only if it inadequately described new sampling data (similar to the change point detection in \textup{\cite{piecewiseSEM}} and \textup{\cite{delayedSEM}}), and set minuscule thresholds for the predicted component function coefficients and derivative values. For each super arm, an optimistic reward projection can be calculated more efficiently by considering each addend of the Taylor approximation separately, while the theoretical regret bound remains applicable. This way, each confidence score can be set to its absolutely lower or higher bound depending on the sign of the accompanying partial derivative value and whether any of the confidence intervals surround $0$. We also record that scaling down $\mathbf{C}_t$ or decreasing the external UCB's weight over time often led to faster convergence within the given time horizon.

In a second trial instance, we alter the setup described in Section \ref{subsec:synt} by replacing the scaled periodic $x\mapsto\sin(2\pi x)$ components with quadratic and hyperbolic tangent functions. We also include SSEM-UCB-Norm (Algorithm \ref{Alg:SSEM-UCB-Norm} in Appendix \ref{sec:spa}) in our analysis. Unlike SSEM-UCB-JO, it is better suited to automatic differentiation in its implementation but only entails two confidence scores per super arm to identify by optimization.

A third session additionally sees the instantaneous rewards drawn from arcsine distributions with means in range $[1, 1.5]$ and standard deviations in $[0.2, 0.6]$. The number of base arms is raised to $15$ while the edge density is decreased to $0.15$ and function coefficients are generated from $[0.05, 0.15]$.

Comparison of the average results over $3$ runs displayed in Figure \ref{fig:synth1} and Figure \ref{fig:synth2} with those shown in Section \ref{subsec:synt} indicates that SSEM-UCB-JO favors decentralized signal distributions with increased variances that intensify the nonlinearity of the system, while SSEM-UCB-Norm specifically addresses normally distributed signals. A dominant external exploration of valid options accommodates structural uncertainty but can incur undue initial regrets. The balance can be tuned towards earlier internal exploration according to the causal simplicity of the problem. Another trade-off lies in computation time, as low degrees of SSEM and SGB are much less demanding than higher ordered SSEM and METS. The expense is amplified by greater connectivity in the underlying graph, albeit pressing the qualitative advantage over algorithms that falsely rely on reward separability or linearity. SSEM displays robustness outside the requirements set for the theoretical study, as the hyperbolic tangent kernel is not positive-definite.
\begin{figure}[!t]
    \centering
    \includegraphics*[width=0.95\columnwidth]{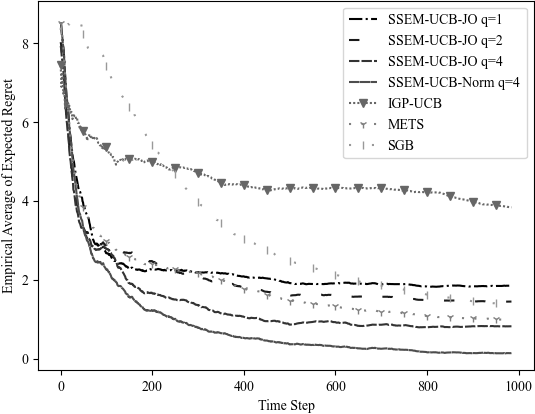}
    \caption{Policies' Mean Regret on Synthetic Normal Data.}
    \label{fig:synth1}
\end{figure}
%
\begin{figure}[!t]
    \centering
    \includegraphics*[width=0.94\columnwidth]{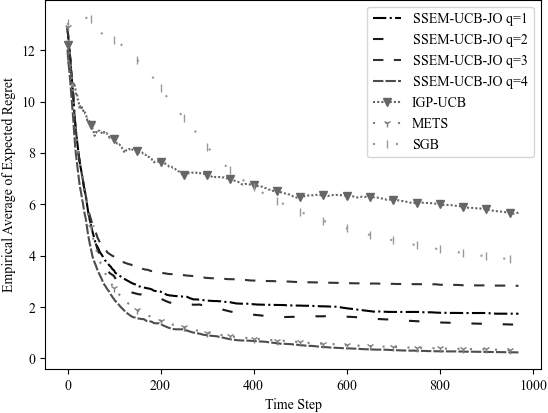}
    \caption{Policies' Mean Regret on Synthetic Arcsine Data.}
    \label{fig:synth2}
\end{figure}
%
%
\subsubsection{Real World}
\label{subsec:adr}
\begin{table}[!ht]
\caption{Station Acronyms}
\begin{center}
\begin{tabular}{|l|c|}
\hline
Abbreviation & Train Stop \\
\hline
\hline
EISN & Eisenach \\
\hline
GOTH & Gotha \\
\hline
LEIN & Leinefelde \\
\hline
NEUD & Neudietendorf \\
\hline
NORD & Nordhausen \\
\hline
ERFU & Erfurt Hbf \\
\hline
BEBR & Bebra \\
\hline
GÖTT & Göttingen \\
\hline
WEIM & Weimar \\
\hline
BADH & Bad Hersfeld \\
\hline
NORT & Northeim (Han) \\
\hline
GOSL & Goslar \\
\hline
KASH & Kassel Hbf \\
\hline
JENW & Jena West \\
\hline
JENP & Jena Paradies \\
\hline
SAAL & Saalfeld (Saale) \\
\hline
KASW & Kassel-Wilhelmshöhe \\
\hline
JENG & Jena-Göschwitz \\
\hline
HALB & Halberstadt \\
\hline
FULD & Fulda \\
\hline
\end{tabular}
\end{center}
\end{table}
%
\begin{figure}[thb]
    \centering
    \includegraphics*[width=0.95\columnwidth]{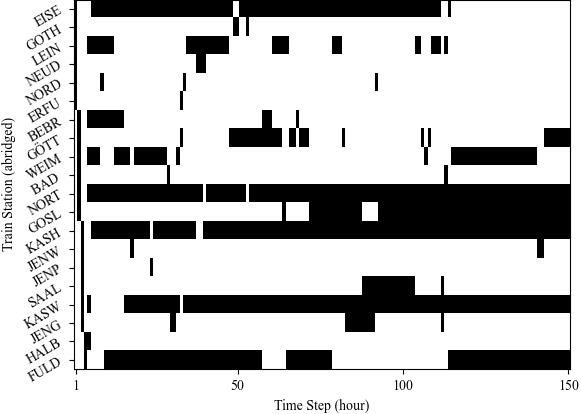}
    \caption{Super Arm Selection over Time.}
    \label{fig:real}
\end{figure}
%
For broader exploration, we chose the alternative external UCB $E_t(\mathbf{x}) = \sum\limits_{i\in[N]}\mathbf{x}[i]4\mathbf{C}_t[i]\sqrt{\ln(t)}/s$ (see Section \ref{subsec:ext}). As in the synthetic case, we directly computed the partial derivative values around the means numerically and tuned the regularization parameter adaptively. To further increase the computational speed of the SSEM-UCB-JO implementation, we approximated the inverse $(\mathbf{I}-\hat{\mathbf{F}}_{t})^{-1}$ at each time step $t \leq 342$ with a residual neural network $\mathbf{N}_{\Theta}$ fit to the loss obtained from the $L^2$-norm of the structural equations $\|\mathbf{N}_{\Theta}(\mathbf{z})-\hat{\mathbf{F}}_{t}(\mathbf{N}_{\Theta}(\mathbf{z})) - \mathbf{z}\|_2$ on random samples $\mathbf{z}\in\mathbb{R}^N$. Prior normalization of the signals can stabilize the training process against outliers. Scaling the average reward signal also regulates the frequency of the exploration-exploitation trade-off. Optimal coefficients on the circular graph can be found efficiently through, e.g., ADMM.
The super arm exploration of SSEM-UCB-JO during the first $150$ time steps is visualized in Figure \ref{fig:real}.

Although the perpetual graph learning deployed by SSEM-UCB-JO does make it agnostic to daily changes in the unshuffled data, in the future, incorporating dedicated dynamic methods \textup{\cite{delayedSEM, piecewiseSEM, topology}} would be of particular interest to trace relations on different time frames. Larger-scale experiments could draw from further sources of periodically disclosed German rail network statistics.
%

\end{document}